\def\eqref#1{equation~\ref{#1}}
\def\1{\bm{1}}
\DeclareMathAlphabet{\mathsfit}{\encodingdefault}{\sfdefault}{m}{sl}
\SetMathAlphabet{\mathsfit}{bold}{\encodingdefault}{\sfdefault}{bx}{n}
\definecolor{darkgrey}{rgb}{0.53,0.53,0.53}
\definecolor{mygrey}{rgb}{0.85,0.85,0.85}
\theoremstyle{plain}
\newtheorem{theorem}{Theorem}[section]
\newtheorem*{theorem*}{Theorem}
\newtheorem{proposition}[theorem]{Proposition}
\newtheorem{lemma}[theorem]{Lemma}
\theoremstyle{definition}
\newtheorem{definition}[theorem]{Definition}
\newtheorem{assumption}[theorem]{Assumption}
\newtheorem{remark}[theorem]{Remark}
\newcommand{\KL}{D_{\mathrm{KL}}}
\newcommand{\TV}{D_{\mathrm{TV}}}
\bfseries\contentslabel{2em}}%
\footnotesize\contentslabel{3em}}%
\title{Towards Auto-Regressive Next-Token Prediction: In-context learning Emerges from Generalization}
\author{Zixuan Gong\thanks{Equal contribution.}\ \ ,\ \ Xiaolin Hu\footnotemark[1]\ \ ,\ \  Huayi Tang,\ \ Yong Liu\thanks{Corresponding author.} \\
Gaoling School of Artificial Intelligence\\
Renmin University of China\\
Beijing, China \\
\texttt{\{zxgong,xiaolinhu,huayitang,liuyonggsai\}@ruc.edu.cn}
}
\begin{document}

\maketitle

\begin{abstract}
Large language models (LLMs) have demonstrated remarkable in-context learning (ICL) abilities. However, existing theoretical analysis of ICL primarily exhibits two limitations: \textbf{(a) Limited \textit{i.i.d.} Setting.} Most studies focus on supervised function learning tasks where prompts are constructed with \textit{i.i.d.} input-label pairs. This \textit{i.i.d.} assumption diverges significantly from real language learning scenarios where prompt tokens are interdependent. \textbf{(b) Lack of Emergence Explanation.} Most literature answers \textbf{\textit{what}} ICL does from an implicit optimization perspective but falls short in elucidating \textbf{\textit{how}} ICL emerges and the impact of pre-training phase on ICL. In our paper, to extend (a), we adopt a more practical paradigm, \textbf{\textit{auto-regressive next-token prediction (AR-NTP)}}, which closely aligns with the actual training of language models. Specifically, within AR-NTP, we emphasize prompt token-dependency, which involves predicting each subsequent token based on the preceding sequence. To address (b), we formalize a systematic pre-training and ICL framework, highlighting the layer-wise structure of sequences and topics, alongside a two-level expectation. In conclusion, we present data-dependent, topic-dependent and optimization-dependent PAC-Bayesian generalization bounds for pre-trained LLMs, investigating that \textbf{\textit{ICL emerges from the generalization of sequences and topics}}. Our theory is supported by experiments on numerical linear dynamic systems, synthetic GINC and real-world language datasets.
\end{abstract}

\section{Introduction}\label{sec:introduction}
Large language models (LLMs) have exhibited intriguing emergent capabilities in in-context learning (ICL) \citep{brown2020language}, which allows effective predictions on downstream tasks only based on a short context without any parameter fine-tuning \citep{black2022gpt, rae2021scaling}. Since then, more scholars have increasingly focused on the intrinsic mechanisms of ICL \citep{chan2022data,garg2022can,oswald2023transformers}, aiming to gain a better understanding of LLMs. 

For simple supervised function learning tasks, the analysis framework for ICL has been well-established, where independently and identically distributed (\textit{i.i.d.}) input-label pairs are stacked into a prompt so that the model directly gives the predicted label for query input without parameter updates. Following that, empirically, \citet{garg2022can} demonstrates that pre-trained LLMs can approximate linear functions with a performance that is nearly equivalent to the least squares estimator. Theoretically, many studies reveal that ICL implicitly employs optimization algorithms. Among these, a prominent viewpoint demonstrates that pre-trained LLMs performing ICL is equivalent to mimicking a single step of gradient descent on linear regression tasks \citep{ahn2024transformers, akyurek2022learning, dai2023can, nichani2024transformers, oswald2023transformers, zhang2023trained}. 
However, there are two main limitations in this literature: (a) Limited \textit{i.i.d.} Setting. The \textit{i.i.d.} assumption in prompt tokens is potentially strong and unrealistic in language tasks where prompt tokens are dependent, making it challenging to easily extend the aforementioned analysis framework for supervised learning tasks to language modeling. (b) Lack of Emergence Explanation. Most literature answers \textbf{\textit{what}} ICL does from an optimization algorithm perspective but falls short in explaining \textbf{\textit{how}} pre-trained LLMs can be good enough to emerge ICL ability as well as the impact of pre-training phase on ICL. Therefore, the following fundamental questions remain relatively underexplored:

\quad\textbf{\textit{(a) How can we model language tasks with token-dependency, going beyond the \textit{i.i.d.} limitation?}}

\quad \textbf{\textit{(b) How can ICL emerge from pre-trained LLMs?}}

For question (a), in extending existing work on supervised function learning, we are eager to explore research on the \textbf{\textit{auto-regressive next-token prediction (AR-NTP)}} 
paradigm, which is key to the success of modern LLMs \citep{achiam2023gpt,brown2020language} in practical language tasks. Specifically, there are generally successive tokens in both training sequences and ICL prompts, which are drawn from the unsupervised corpus. Through AR-NTP, each subsequent token in sequences or prompts is generated based on the preceding tokens. Drawing inspiration from the Bayesian perspective in statistical field, we utilize conditional probability distribution \citep{han2023context,jiang2023latent, li2023transformers, wang2023large, wei2022emergent, wu2023many, xie2021explanation} to theoretically model AR-NTP. In the line of Bayesian research, most view ICL as a process of implicit Bayesian inference, where the pre-trained LLM is thought to subconsciously deduce a concept while generating a prediction. These works assume tokens are generated from Hidden Markov Models. In our paper, we consider a more relaxed generation mode, AR-NTP, where each token depends on all the preceding tokens rather than just one preceding token. Consequently, our core task is to model language tasks, and the core challenge of modeling language tasks is to consider \textit{prompt token-dependency}.

To analyze question (b), we intuitively recognize that the prompt sequence may be new or unseen and the corresponding ICL topic for the sequence is generally unknown. We desire a well-pretrained in-context learner, \textit{i.e.}, the LLM can effectively utilize ICL to generate high-quality responses to any sequence under any topic, regardless of whether the sequence and topic are seen or unseen during pre-training. This necessitates examining population loss by considering expectations over distribution, rather than focusing solely on empirical loss. A low population loss indicates that the model possesses strong generalization ability for diverse sequences and topics, thereby facilitating the emergence of ICL. Therefore, it is natural to \textbf{\textit{explore the origin of ICL from the perspective of measuring generalization ability}}. Specifically, we formalize a systematic pre-training and ICL framework that incorporates data distribution and topic distribution, allowing us to establish the population loss with a two-level expectation. By adopting PAC-Bayesian generalization analysis techniques, we gain a clearer understanding of how ICL emerges. 

Based on the above analysis, we summarize our main contributions as follows.

\textbf{1.} \textbf{Pre-training and ICL Framework under AR-NTP Paradigm.}\quad Towards practical AR-NTP paradigm rather than \textit{i.i.d.} setting, we establish a systematic pre-training and ICL framework considering layer-wise structure of sequences and topics (Section \ref{sec: ICL}). Meanwhile, we propose two-level expectation over data and topic distribution to link pre-training and ICL phase, thereby providing well-defined population loss based on empirical loss (Section \ref{sec:optimization-objective} and \ref{sec:two-level}). 

\textbf{2.} \textbf{ICL Emerges from Generalization.}\quad Our theoretical results of population loss reveal that model generalization, tightly with ICL abilities, is influenced by model size, optimization iterations, pre-training data and prompt length. This further demonstrates that ICL emerges from the excellent generalization of sequences and topics (Section \ref{sec:gen-pre} and Section \ref{sec:gen-ICL}).   

\textbf{3.} \textbf{Generalization Analysis.} By dealing with prompt token-dependency and employing continuous mathematical techniques such as Stochastic Differential Equation (SDE), we present data-dependent and topic-dependent, as well as optimization-dependent PAC-Bayesian generalization bounds for population loss (Section \ref{sec:gen-pre} and Section \ref{sec:gen-ICL}).

\textbf{4.} \textbf{Empirical Verification of Theory.}\quad We perform experiments on numerical linear dynamic system, synthetic GINC and real-word language datasets (Section \ref{sec:exp} and Appendix \ref{exper}). By discussing the effects of pre-training data, optimization process and prior model initialization, we verify our theoretical results and offer practical implications (Appendix \ref{app:practical}).

\section{Related Work}
\textbf{Optimization Perspective on ICL.} \quad The field of ICL in transformers has been extensively explored from various analytical perspectives. 
A prominent approach is to view ICL as an implicit execution of gradient descent algorithm ~\citep{akyurek2022learning,oswald2023transformers,dai2023can,zhang2023trained}. This concept is well-illustrated \citep{akyurek2022learning,oswald2023transformers}, which demonstrates that pre-trained transformers can mimic a single step of gradient descent on linear regression tasks. 
\citet{huang2023context, zhang2023trained} specifically provide evidence that learning linear models via gradient flow aligns with transformers learning in-context, based on optimization convergence analysis. However, all this literature falls short in explaining how LLMs develop the ability of ICL and the connection between the pre-training and ICL phases. 

\textbf{Bayesian Perspective on ICL.} \quad There is some existing work from Bayesian view enriching the understanding of ICL \citep{han2023context,jiang2023latent,wang2023large,wies2023learnability,xie2021explanation}. \citet{xie2021explanation} interpreter ICL as implicit Bayesian inference, where the pre-trained LLM is seen as intuitively deducing a concept during prediction. 
Following \cite{xie2021explanation}, the assumption that the pre-training distribution is a Hidden Markov Model, is relaxed in \cite{wies2023learnability}.  
Further, \citet{zhang2023trained} consider the pre-training and ICL phase and assume that prior and posterior satisfy a uniform distribution. In our study, we adopt data-dependent and topic-dependent prior without relying on some predetermined distribution assumptions. A topic distribution is considered in our pre-training and ICL framework, which weakens the assumption that the pre-training topic distribution covers the ICL topic distribution in \cite{zhang2023trained} to some extent.

\textbf{From Multi-Task Learning to Meta-Learning.} \quad Training LLMs to perform ICL can be viewed as an approach for addressing the wider tasks of meta-learning or learning-to-learn \citep{naik1992meta, schmidhuber1987evolutionary}. In pre-training phase, the LLM is trained on multiple tasks. We expect that a well-pretrained LLM serves as a good \textbf{\textit{meta-learner}} possessing the ICL ability to generalize to new unseen tasks, not only as a \textit{\textbf{multi-task learner}} \citep{radford2019language}. Theoretical analysis of meta-learning has received significant attention \citep{chua2021fine, denevi2018incremental, ji2020convergence,tripuraneni2020theory}. Drawing inspiration from the assumption of an unknown task distribution in meta-learning analysis, we establish a pre-training and ICL framework with topic/task distribution and data distribution, to describe the model's generalization ability to new test prompts and unseen topics (Details in Section \ref{sec: ICL}). However, it is worth emphasizing that our ICL generalization analysis under AR-NTP cannot be equivalent to meta-learning generalization, since the expectation over sequence would be specially spilt into two parts due to the prompt token-dependency (Details in Section \ref{sec:two-level}). We defer more discussion in Appendix \ref{app:related-work}.

\section{Problem Setup}\label{sec:preli}
In this section, for question (a), Section \ref{sec: ICL} establishes the pre-training and ICL framework under AR-NTP paradigm. Following that, to address question (b), Section \ref{sec:optimization-objective} and \ref{sec:two-level}, formalize the optimization objective and generalization of pre-trained LLMs, to illustrate how pre-trained LLMs can be good enough to emerge ICL ability.

\textbf{Notations.} Let $\mathbb{E}[\cdot]$ be the expectation of random variables. The KL divergence between distribution $\mu$ and $\nu$ is $\KL(\mu \parallel \nu)=\mathbb{E}_{\theta\sim\mu}[\log{\mu(\theta)}/{\nu(\theta)}]$ and total variation (TV) distance is $\TV(\mu,\nu)=1/2\sum_{\theta \in \Theta}|\mu(\theta)-\nu(\theta)|$. The detailed notations is shown in Appendix \ref{sec:notation} Table \ref{tab:notation}.

\begin{figure}
	\centering
	\includegraphics[width=0.88\linewidth]{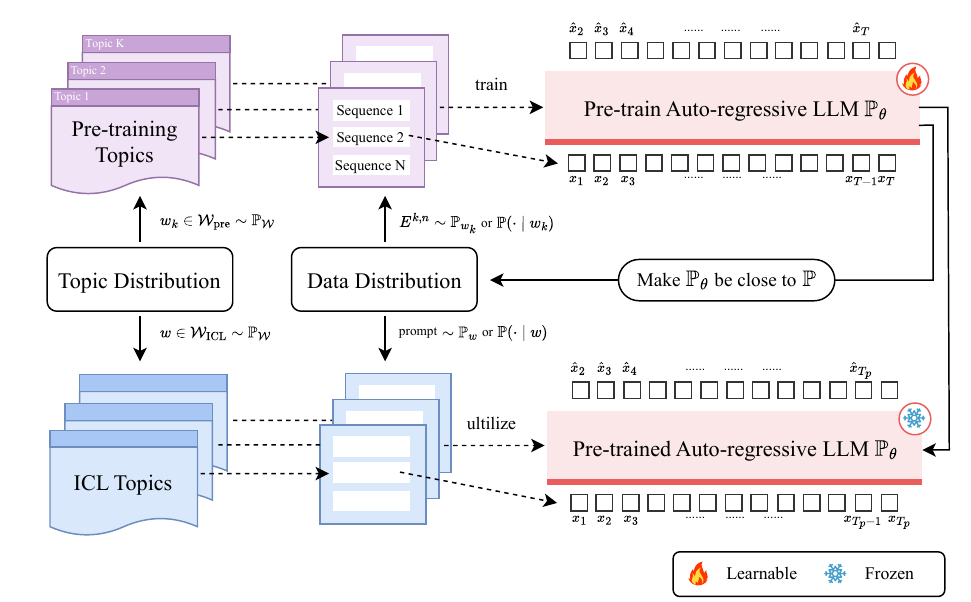}
	\caption{Overview of Pre-training and In-context Learning Framework.} 
	\label{fig:auto-regressive}
    \vspace{-1.3em}
\end{figure}

\subsection{AR-NTP Paradigm and Pre-training and ICL Framework}\label{sec: ICL}
We model the practical AR-NTP paradigm in language learning tasks, where any training sequence or ICL prompt consists of successive tokens drawn from the unsupervised corpus. Within AR-NTP, each subsequent token in the sequence is generated based on the preceding tokens, referred to as prefix sequences. Thus, the data distribution of each token is represented as a conditional probability distribution in statistics. From the analysis to question (b), it's necessary to consider the impact of pre-training and formalize a systematic pre-training and ICL framework to facilitate the generalization analysis where ICL emerges. The overview of pre-training and ICL framework, including topic distribution and data distribution, is shown in Figure \ref{fig:auto-regressive}. 

\textbf{Pre-training Phase.}\quad In pre-training phase, the training data typically encompasses sequences from various topics. To model the training process more realistically, it's essential to characterize and distinguish the training sequences and their respective topics in detail. In the following, we describe the generating process of all training sequences.
\begin{enumerate}
    \vspace{-0.6em}
    \item \textbf{Generate One Training Sequence Under a Pre-training Topic:} Under the assumption of topic distribution $\mathbb{P}_{\mathcal{W}}$ and a fixed topic $w_k \sim \mathbb{P}_{\mathcal{W}}$, the $n$-th sequence $E^{k,n}$ satisfies the true data distribution $\mathbb{P}_{w_k}$, or denoted by $\mathbb{P}(\cdot\mid w_k)$. According to the auto-regressive generating process, $(t+1)$-th token $x^{k,n}_{t+1}$ in $E^{k,n}$ is generated depending on the prefix sequence $E^{k,n}_{t}=\{x^{k,n}_1, x^{k,n}_2, \cdots, x^{k,n}_{t}\}$. It also means $x^{k,n}_{t+1}\sim \mathbb{P}(\cdot \mid E^{k,n}_{t},w_k)$. When the token count reaches $T_{k,n}$, the sequence $E^{k,n}=\{(E^{k,n}_t,x^{k,n}_{t+1})\}_{t=1}^{T_{k,n}-1}$ is already formed.
    \item \textbf{Generate $N_{k}$ Training Sequences Under a Pre-training Topic:} Repeat step 1, $N_k$ training sequences following the same data distribution $\mathbb{P}_{w_k}$ or $\mathbb{P}(\cdot\mid w_k)$, are \textit{i.i.d.} sampled. The pre-training sequences under topic $w_k$ can be denoted by $E^{k}=\{E^{k,n}\}_{n=1}^{N_{k}}$. 
    \item \textbf{Generate Complete Training Sequences Under $K$ Pre-training Topics:} Considering that the set of topics used for pre-training is $\mathcal{W}_{\text{pre}}$ which contains $K$ topics, then repeat Step 2, the complete pre-training sequences can be denoted by $E =\{E^{k}\}_{k=1}^{K} = \{E^{k,n}\}_{k,n=1}^{K,N_k}$.
    \vspace{-0.6em}
\end{enumerate}
Note that the number of sequences for different topics ($N_{k}$) and sequence length ($T_{k,n}$) vary from each other. We give more discussion for $N_k$ and $T_{k,n}$ in Remark \ref{remark-app: the1}. For theoretical convenience, we unify $N$ and $T$ in our main analysis. Using pre-training data $E$ containing a total of $KN$ sequences, the model gives predictions that still follow the AR-NTP methods. Then the LLM $\mathbb{P}_\theta$ parameterized by $\theta \in \Theta$ is pre-trained by establishing AR-NTP loss. 

\textbf{\emph{Note:}}  Throughout our paper, the subscripts or superscripts $k$, $n$, and $t$ represent the topic index, sequence index and token index, respectively.

\textbf{ICL Phase.}\quad In ICL phase, for any ICL topic $w$ that satisfies the same topic distribution $\mathbb{P}_{\mathcal{W}}$ as pre-training topics, $\text{prompt}=\{x_1,x_2,\cdots,x_{T_p}\}$ is generated from data distribution $\mathbb{P}_{w}$ (or $\mathbb{P}(\cdot\mid w)$). Similarly to the above generation process of pre-training sequence $E^{k,n}$, $x_t \sim \mathbb{P}(\cdot \mid \text{prompt}_{t-1}, w)$. The goal for an ICL learner is to make the prediction $\mathbb{P}_{\theta}(x_{T_p} \mid \text{prompt}_{T_p-1},w)$, given by the pre-trained LLM $\mathbb{P}_{\theta}$, as close as possible to $\mathbb{P}(x_{T_p} \mid \text{prompt}_{T_p-1},w)$. To test the performance of ICL on different topics, a set of ICL topics $\mathcal{W}_{\text{ICL}}$ is adopted. Note that different numbers of demonstrations may be used in standard ICL. In our theoretical modeling, we consider directly concatenating demonstrations into ICL prompts. The distinction between zero-shot ICL and few-shot ICL is reflected in the prompt length $T_p$, and our theoretical results reveal the impact of prompt length on model generalization and ICL emergence.

\vspace{-0.5em}
\subsection{Optimization Objective: Empirical Loss}\label{sec:optimization-objective}
Considering the pre-training phase, finite topics and finite sequences are \textit{i.i.d.} sampled from topic distribution $\mathbb{P}_{\mathcal{W}}$ and data distribution $\mathbb{P}_{w_k}$ or $\mathbb{P}(\cdot \mid w_k)$. During the training process, for any sequence $E^{k,n}$ under topic $w_k$, each token $x^{k,n}_{t+1}$ can be predicted depending on the prefix sequence $E^{k,n}_t$, optimizing the negative log-likelihood loss $-\log \mathbb{P}_\theta(x^{k,n}_{t+1}|E^{k,n}_t, w_k)$ in practice. When with fixed the true data distribution $\mathbb{P}(\cdot \mid w_k)$, minimizing $-\log \mathbb{P}_\theta(x^{k,n}_{t+1}|E^{k,n}_t, w_k)$ is equivalent to minimize the $\log \frac{\mathbb{P}(x^{k,n}_{t+1}|E^{k,n}_t, w_k)}{\mathbb{P}_\theta(x^{k,n}_{t+1}|E^{k,n}_t, w_k)}$. It is expected that the prediction of LLM could be close to the true sequence.

Under $k$-th topic, we average the prediction loss of all tokens for the $n$-th sequence and then average this over $N$ sequences, with the definition of $L_{E^k}(\theta,w_k)$. Finally, averaging over $K$ topics, the optimization objection (empirical loss) of the pre-training phase is defined as 
\begin{align}\label{eq-L-E}
	L_E(\theta, \mathcal{W}_{\text{pre}})&=\frac{1}{K}\sum_{k=1}^K \underbrace{\left(\frac{1}{NT}\sum_{n=1}^N\sum_{t=1}^{T} \log \frac{\mathbb{P}(x^{k,n}_{t+1}|E^{k,n}_t, w_k)}{\mathbb{P}_\theta(x^{k,n}_{t+1}|E^{k,n}_t, w_k)}\right)}_{L_{E^k}(\theta, w_k)},
    \vspace{-0.3em}
\end{align}
where $L_{E^{k,n}}(\theta, w_k) = \frac{1}{T}\sum_{t=1}^T \log \mathbb{P}(x^{k,n}_{t+1}|E^{k,n}_t, w_k)/ \mathbb{P}_\theta(x^{k,n}_{t+1}|E^{k,n}_t, w_k)$, represents the average loss of one sequence. Define the minimum of empirical loss as
\begin{equation}\label{eq-argmin-theta}
	\hat{\theta}=\operatorname{argmin}_\theta L_E(\theta, \mathcal{W}_{\text{pre}}).
\end{equation}
In our theoretical analysis, LLMs perform Stochastic Gradient Descent (SGD) as an optimization algorithm to update parameters $\theta$ in order to get the minimum $\hat{\theta}$. We formalize optimization error $\epsilon_{\text{opt}}$ with the logarithmic distribution distance between the pre-trained model $\mathbb{P}_{\theta}$ and the ideal model $\mathbb{P}_{\hat{\theta}}$,
\begin{equation}\label{opt}
		\frac{1}{KNT}\sum_{k=1}^K\sum_{n=1}^{N}\sum_{t=1}^T \left(\log\mathbb{P}_{\hat{\theta}}(x^{k,n}_{t+1}|E^{k,n}_t,w_k)
		-\log \mathbb{P}_{\theta}(x^{k,n}_{t+1}|E^{k,n}_t,w_k)\right).
\end{equation}
\vspace{-1em}

\subsection{Generalization Analysis: Two-Level Expectation}\label{sec:two-level}
We expect a good ICL learner, which means that the pre-trained LLM has the ability to identify new topics and predict new sequences. As introduced in Section \ref{sec: ICL}, we hope that the pre-trained LLM can infer unseen sequences under unseen ICL topics with the assumption of data distribution and topic distribution. Therefore, it's natural to define a two-level expectation, aiming to minimize the expected / population loss. 

\textbf{The First-level Expectation over Sequence.}\quad The first-level expectation (\emph{i.e.} inner expectation) is taken over sequence $E^{k,n}$, indicating a sufficient number of sequences for each topic to facilitate comprehensive learning in the ideal case so that the pre-trained model can perform excellently when faced with new sequences \textbf{under seen topics}. In Equation \ref{eq-L-E}, rather than using $L_{E^k}(\theta,w_k)$ with $N$ sequences, we define $L(\theta, \mathcal{W}_{\text{pre}})$ with sufficient sequences as
\begin{align}
    L(\theta, \mathcal{W}_{\text{pre}}) = \frac{1}{K} \sum_{k=1}^K \mathbb{E}_{E^{k,n}} \left[ \log \frac{\mathbb{P}(x^{k,n}_{t+1}|E^{k,n}_t, w_k)}{\mathbb{P}_\theta(x^{k,n}_{t+1}|E^{k,n}_t, w_k)}\right].\nonumber
\end{align}
More concretely, the \textit{prompt token-dependency} in AR-NTP (\emph{i.e.} the tokens are dependently generated in sequence $E^{k,n}$) motivates that the first-level expectation $\mathbb{E}_{E^{k,n}}$ needs to be divided into two parts: expectation over each token when given prefix sequences $\mathbb{E}_{x^{k,n}_{t+1} \sim \mathbb{P}(\cdot \mid E^{k,n}_t, w_k)}$ and expectation over prefix sequences $\mathbb{E}_{E^{k,n}_t}$. Then combining the definition of KL divergence, it can be transformed into $\frac{1}{K} \sum_{k=1}^K \mathbb{E}_{E^{k,n}_t} \left[D_{\mathrm{KL}}\left(\mathbb{P}(\cdot\mid E^{k,n}_t, w_k)\parallel \mathbb{P}_\theta(\cdot\mid E^{k,n}_t, w_k)\right)\right]$. Using any prefix sequence $P$ to replace $E^{k,n}_t$, we simply the representation and the first-level expected loss finally becomes
\begin{align}
    L(\theta, \mathcal{W}_{\text{pre}}) = \frac{1}{K} \sum_{k=1}^K \mathbb{E}_{P} \left[D_{\mathrm{KL}}\left(\mathbb{P}(\cdot\mid P, w_k)\parallel \mathbb{P}_\theta(\cdot\mid P, w_k)\right)\right]. \label{eq-L-Wpre-two-part-final-main}
\end{align}

\textbf{The Second-level Expectation over Topic.}\quad The second-level expectation (\emph{i.e.} outer expectation) is taken over topic $w_k$. A well-trained LLM with the objective of minimizing the population loss over infinite topics will demonstrate good generalization of topics, which will be directly reflected in the model's accuracy in predicting ICL prompts \textbf{under unseen topics} during the ICL phase, provided these unseen ICL topics satisfy the assumption of topic distribution. Therefore, rather than using $K$ topics in Equation \ref{eq-L-Wpre-two-part-final-main}, we define $L(\theta)$ with sufficient topics as
\begin{equation}
	L(\theta)=\mathbb{E}_{w}\mathbb{E}_{P} \left[D_{\mathrm{KL}}\left(\mathbb{P}(\cdot\mid P, w)\parallel \mathbb{P}_\theta(\cdot\mid P, w)\right)\right],\nonumber
\end{equation}
which is called population loss with two-level expectation. To specifically align to the ICL phase and test the impact of different prompt lengths, we calculate the average loss over each token, similar to pre-training, \textit{i.e.},
\begin{equation}\label{eq-L-ICL-final}
	L(\theta)=\frac{1}{T_p}\sum_{t=1}^{T_p}\mathbb{E}_{w}\mathbb{E}_{\text{prompt}_t} \left[D_{\mathrm{KL}}\left(\mathbb{P}(\cdot\mid \text{prompt}_t, w)\parallel \mathbb{P}_\theta(\cdot\mid \text{prompt}_t, w)\right)\right].
\end{equation}

\section{ICL Emerges from Generalization of Pre-trained LLMs}\label{sec:gen}
In this section, we sequentially present Theorems for the generalization of sequences and topics. Specifically, Theorem \ref{pre-gen-data-dependent} considers the generalization of sequences, providing the upper bound of the first-level expected loss defined in Equation \ref{eq-L-Wpre-two-part-final-main}. Theorem \ref{ICL-gen-topic-dependent} further considers the generalization of topics and provides the upper bound of the two-level expected loss (\textit{i.e.} population loss defined in Equation \ref{eq-L-ICL-final}) by integrating Theorem \ref{pre-gen-data-dependent}. Thus, we answer question (b) that ICL emerges from the excellent generalization of sequences and topics.

\paragraph{Summary of Challenges.} Before diving into the details of Theorems, we summarize the challenges in both modeling and theoretical proof, in comparison to previous research.

\textbf{(1) The consideration of two-level expectation.} \quad In contrast to focusing solely on the ICL process, we model the entire process of training and utilization, aiming to mirror real-world training scenarios and explore the origin of ICL from the perspective of generalization. The consideration of two-level expectation over sequence and topic under a reasonable pre-training and ICL framework significantly amplifies our workload.

\textbf{(2) The dealment of prompt token-dependency.}\quad Under the setting of AR-NTP, we make great efforts to address the dependency between the current token and its preceding tokens by constructing ghost sequences (see the detailed construction in Appendix \ref{appendix-the-1}, where we summarize the proof sketch), thereby enabling the possibility of taking expectation over each token within all possible sequences. It's worth noting that such a dependency is not present in the supervised function learning tasks in other ICL research.

\textbf{(3) The connection of negative logarithm likelihood, KL divergence and TV distance.}\quad We examine the primary optimization objective: negative logarithm likelihood. Naturally, this leads to a connection with KL divergence, thereby formalizing the expression of population loss. Furthermore, in addressing the aforementioned token-dependency, we establish connections between TV distance and the expectation over a single token when given its predecessors. Therefore, it's necessary to establish connections between the two key distribution metrics: TV distance and KL divergence (see in Lemma \ref{lemma:KL-TV-bound}), to obtain our final generalization bounds. The AR-NTP setup necessitates the establishment of the above series of connections, which are not considered in the previous ICL work.

\subsection{Generalization of Sequences: The First-Level Expectation}\label{sec:gen-pre}
Under finite ($K$) pre-training topics, $L(\theta,\mathcal{W}_{\text{pre}})$ defined in Equation \ref{eq-L-Wpre-two-part-final-main}, represents the first-level expected loss where infinite sequences per topic are utilized. It describes comprehensive learning for each pre-training topic in the ideal case so that the pre-trained model can give excellent answers for new sequences on the seen topics in ICL phase. In the following theorem, we present the upper bound of $L(\theta, \mathcal{W}_{\text{pre}})$.

Based on basic notations of general PAC-Bayesian theory, in our discussion, we define the posterior distribution of model parameters as $\mu(\theta)$, which is obtained by training the LLM using $K$ topics and $N$ sequences per topic. Define the prior distribution as $\nu(\theta)$, which is an assumed probability distribution before some evidence is taken into account. In the formal Theorem for $L(\theta, \mathcal{W}_{\text{pre}})$, we derive the KL distance between the posterior and prior in the upper bound, specifically with a data-dependent prior \citep{li2019generalization}. Furthermore, continuous mathematical analysis tools such as SDE are used to detail the KL divergence between posterior and data-dependent prior, which further considers the optimization algorithm. Since then, we can provide data-dependent and optimization-dependent generalization bound of the first-level expected loss.

\textbf{Data-Dependent Prior.}\quad We employ the following method for generating a data-dependent prior \citep{li2019generalization}. Let $J$ include $N^{\prime}$ indexes uniformly sampled from $[N]$ without replacement and $I$ is $[N]\setminus J$, splitting pre-training sequences under fixed topic $w_k$ into two parts $E^k_I$ and $E^k_J$. Under all pre-training topics, we have $E_I=\{E^k_I\}_{k=1}^K$ and $E_J=\{E^k_J\}_{k=1}^K$. The prior distribution of model parameters $\theta$ depends on the subset $E_J$, which is denoted by $\nu_J$ and the posterior distribution of $\theta$ depends on $E_I$ denoted by $\mu$. Thus, a parallel training process with $E_J$ are conducted, and after that, a data-dependent prior $\nu_J$ will be obtained. We emphasize that extracting a portion of training data to learn the prior distribution of model parameters has significant implications. Specifically, this approach allows the prior to adapt to specific features and trends in the data, enhancing the model's ability to capture and learn from these nuances. Even if we sacrifice a portion of the training data, the prior will lead to a posterior distribution that is better aligned with the actual data distribution. In high-dimensional spaces, a data-dependent prior provides a more informed starting point.
 
\begin{assumption}[Bounded Loss Function]\label{ass:B} Given fixed topic $w_k$ and prefix sequence $E^{k,n}_t$, for the true data distribution $\mathbb{P}(\cdot \mid w_k)$ (or $\mathbb{P}_{w_k}$) and pre-trained LLM $\mathbb{P}_\theta$, we have
$$
\log \frac{\mathbb{P}(x^{k,n}_{t+1}\mid E^{k,n}_t,w_k)}{\mathbb{P}_{\theta}(x^{k,n}_{t+1}\mid E^{k,n}_t,w_k)} \leq S.
$$
\end{assumption}
This assumption shows that the logarithm ratio of $\mathbb{P}(\cdot \mid w_k)$ and $\mathbb{P}_\theta$ is bounded suggesting that the learned model is expected to closely approximate the true data distribution. According to the true data distribution, the probability of $x^{k,n}_{t+1}$ tends to $1$. Thus by scaling law \citep{kaplan2020scaling}, the training loss for specific tokens $-\log \mathbb{P}_{\theta}(x^{k,n}_{t+1}\mid E^{k,n}_t,w_k)$ equals to $\left(\frac{N_c}{N_\text{param}}\right)^{\alpha_N}$, where $N_\text{param}$ represents the number of parameters in the model, $N_c$ and $\alpha_N$ are constants obtained through statistical fitting. Thus, $S$ can further be measured with $\left(\frac{N_c}{N_\text{param}}\right)^{\alpha_N}$.

\begin{assumption}[Bounded Gradient]\label{ass: lipschitz} 
	Suppose that for topic $w_k$ and model parameters $\theta_t$ at step $t$ (for any $0 \leq t \leq T^\prime$, $T^\prime$ is the total iteration steps), we have $\left\|\nabla L_{E^{k,n}}(\theta_t, w_k)\right\| \leq L$.
\end{assumption}
Assumption \ref{ass: lipschitz} is the classical $L$-Lipschitz continuous condition, which is widely used in generalization analysis \citep{elisseeff2005stability, li2019generalization}. This suggests that the gradient of an average loss of one sequence (see in Equation \ref{eq-L-E}) is bounded. 

\begin{theorem}[Data-Dependent and Optimization-Dependent Generalization Bound of the First-level Expected Loss] Let the auto-regressive LLM $\mathbb{P}_\theta$ be the empirical solution of Equation $\ref{eq-L-E}$, and $\mathbb{P}(\cdot\mid w)$ denotes the true data distribution under topic $w$. Under Assumptions \ref{ass:B} and \ref{ass: lipschitz}, for any $0<\delta < 1$, with probability at least $1-\delta$, the first-level expected loss with $K$ topics and infinite sequences per topic, denoted by $L(\theta, \mathcal{W}_{\text{pre}})$ (see in Equation \ref{eq-L-Wpre-two-part-final-main}), satisfies,
\label{pre-gen-data-dependent}
\begin{equation*}
    \mathbb{E}_{\mu}\left[L(\theta, \mathcal{W}_{\text{pre}})\right]
    =\mathcal{O}\left\{\sqrt{\frac{ \log 1/\delta}{KNT}} + \sqrt{\frac{1}{KNT}\left(\KL(\mu\parallel\nu)+\log \frac{1}{\delta}\right)-\epsilon_{\text{opt}}}\right\},
\end{equation*}
then considering data-dependent prior $\nu_J$ and detailing the term $\KL(\mu\parallel\nu_J)$, $L(\theta, \mathcal{W}_{\text{pre}})$ is further bounded by
\begin{equation}\label{the3-right}
    \mathcal{O}\left\{\sqrt{\frac{\log 1/\delta}{K(N-N^\prime)T}} + \sqrt{\frac{1}{K(N-N^\prime)T}\left(\frac{L^2C(\frac{1}{N_{\text{param}}},{T^\prime})}{N^\prime} + \log \frac{1}{\delta}\right)-\epsilon_{\text{opt}}}\right\},
\end{equation}
where $C(\frac{1}{N_{\text{param}}},T^\prime)=\frac{\beta}{2}e^{8\beta S}\left(1-e^{-\frac{{T^\prime}}{\exp(8\beta S)}}\right)$. $\epsilon_{\text{opt}}$ is the optimization error (see in Equation \ref{opt}). $K$, $N (N^\prime)$ and $T$ denote the number of topics, the number of sequences per topic and the sequence length utilized in the optimization process of Equation $\ref{eq-L-E}$. $T^\prime$ denotes the total training iterations. $N_{\text{param}}$ denotes the number of model parameters.
\end{theorem}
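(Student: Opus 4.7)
The plan is to derive the two forms of the bound in sequence, corresponding to the two displayed inequalities in the statement. The first uses a standard PAC-Bayesian inequality with a generic prior $\nu$; the second specialises to the data-dependent prior $\nu_J$ and replaces $\KL(\mu\parallel\nu_J)$ with an SDE-based estimate of how much SGD on $E_I$ drifts away from SGD on $E_J$. Throughout, I would work with the per-token log-ratio $\log\mathbb{P}(x^{k,n}_{t+1}\mid E^{k,n}_t,w_k)/\mathbb{P}_\theta(x^{k,n}_{t+1}\mid E^{k,n}_t,w_k)$ as the base loss, which Assumption \ref{ass:B} bounds by $S$.

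For the first inequality, I would start from a McAllester/Catoni-type PAC-Bayes bound applied at the token level with effective sample size $KNT$, and rewrite the population-side expectation as the first-level expected loss of Equation \ref{eq-L-Wpre-two-part-final-main} using the chain rule for likelihoods together with the identity $\mathbb{E}_{x\sim p}\log p(x)/q(x)=\KL(p\parallel q)$. The non-trivial point is that the $T$ tokens inside a single sequence are not independent, so a direct PAC-Bayes argument would only yield an $\sqrt{1/(KN)}$ rate. To recover the extra factor of $T$, I would couple each training sequence $E^{k,n}$ with a ghost sequence $\tilde E^{k,n}$ drawn i.i.d.\ from $\mathbb{P}(\cdot\mid w_k)$, then use Lemma \ref{lemma:KL-TV-bound} to pass between the negative log-likelihood, the TV distance, and the KL divergence while interchanging the expectation over a fresh token with the expectation over its prefix. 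Adding and subtracting $L_E(\hat\theta,\mathcal{W}_{\text{pre}})$ lets me absorb the mismatch between $\mathbb{E}_{\mu}[L_E(\theta,\mathcal{W}_{\text{pre}})]$ and $L_E(\hat\theta,\mathcal{W}_{\text{pre}})$ into the optimisation error $\epsilon_{\text{opt}}$ of Equation \ref{opt}, producing the first displayed bound.

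For the second inequality, I would instantiate $\nu=\nu_J$, the posterior produced by running the identical SGD procedure on $E_J$ only, so that the remaining work is to control $\KL(\mu\parallel\nu_J)$ explicitly. I would model the two SGD trajectories corresponding to $E_I$ and $E_J$ as Euler discretisations of a common Langevin-type SDE with inverse temperature $\beta$, and then bound the KL between the two path measures up to time $T^\prime$ via Girsanov's theorem. This yields a time integral of the expected squared drift difference; the drift difference is controlled by $L$ from Assumption \ref{ass: lipschitz}, the subsampling of $N^\prime$ out of $N$ sequences supplies the $1/N^\prime$ denominator, and a Grönwall step propagated over $[0,T^\prime]$ produces the constant $C(\tfrac{1}{N_{\text{param}}},T^\prime)=\tfrac{\beta}{2}e^{8\beta S}\bigl(1-e^{-T^\prime/\exp(8\beta S)}\bigr)$, with the $N_{\text{param}}$ dependence entering through the scaling-law interpretation of $S$ given after Assumption \ref{ass:B}. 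Inserting this estimate into the first bound, with $E_I$ in place of $E$ so that $N$ becomes $N-N^\prime$, gives the second displayed bound.

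The main obstacle is the first stage: recovering the token-level $1/(KNT)$ rate in the presence of prompt token-dependency. A naive PAC-Bayes argument would treat a whole sequence as a single atomic observation and give only $1/(KN)$, so the ghost-sequence coupling together with the KL-to-TV-to-KL passage of Lemma \ref{lemma:KL-TV-bound} is essential and must be carried out inside $\mathbb{E}_{\mu}[\cdot]$ without breaking the PAC-Bayes change of measure. The Girsanov analysis in the second stage is technically heavier but conceptually more standard; the delicate choice there is the continuous-time proxy for SGD that reproduces exactly the $e^{8\beta S}$ factor appearing in $C$ rather than a looser polynomial-in-$T^\prime$ bound.
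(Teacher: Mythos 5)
Your outline captures the high-level skeleton the paper uses (ghost sequences plus a PAC-Bayes change of measure for the first bound; an SDE analysis of two parallel training runs for the second), but there are two places where the mechanism you propose would not actually deliver the stated result.

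First, your ghost sequences are specified incorrectly. You propose coupling $E^{k,n}$ with a ghost sequence $\tilde E^{k,n}$ drawn \emph{i.i.d.}\ from $\mathbb{P}(\cdot\mid w_k)$. The paper instead draws each ghost token $\tilde x^{k,n}_{t+1}$ from the conditional law $\mathbb{P}(\cdot\mid E^{k,n}_t,w_k)$, \emph{conditioned on the original prefix}. This tangent-sequence construction (de la Peña--Giné style decoupling) is what makes the tokens $\{\tilde x^{k,n}_{t+1}\}_t$ conditionally independent given $E^k$, which is exactly what is needed for the moment-generating term to factorize as $\mathbb{E}_{\tilde E^k}\bigl[\exp(g(\theta,w_k))\mid E^k\bigr]=\prod_{n,t}\mathbb{E}_{\tilde x^{k,n}_{t+1}\mid E^{k,n}_t}[\cdots]$ and for the Donsker--Varadhan identity $\mathbb{E}_{E^k}\mathbb{E}_\nu[\exp(T)]=1$ to close. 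An i.i.d.\ ghost draw from the marginal breaks this product structure because the ghost and the original do not share prefixes, and a black-box McAllester/Catoni bound likewise fails because the per-token losses are dependent. Relatedly, you still need the Markov-chain concentration inequality (the paper's Lemma~\ref{lemma:mc-markov} / Proposition~\ref{prop1: chernoff}) to replace the empirical average over prefixes $E^{k,n}_t$ by the expectation $\mathbb{E}_{E^{k,n}_t}$; your phrase ``interchanging the expectation over a fresh token with the expectation over its prefix'' gestures at this but the mixing-time-based concentration step is doing real work and cannot be omitted.

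Second, for the KL bound between the two SGD trajectories, Girsanov alone will not produce the saturating constant $C(\tfrac{1}{N_{\text{param}}},T')=\tfrac{\beta}{2}e^{8\beta S}\bigl(1-e^{-T'/\exp(8\beta S)}\bigr)$. Girsanov controls the KL between the \emph{path} measures and gives $\tfrac{\beta}{2}\int_0^{T'}\mathbb{E}\|\Delta\text{drift}\|_2^2\,\mathrm{d}t$, which grows linearly in $T'$ and has no $1-e^{-T'/\exp(8\beta S)}$ factor. The paper instead differentiates $\KL(p_t\parallel q_t)$ between the \emph{time-marginals} using the Fokker--Planck equation, and then applies the Log-Sobolev inequality for the Langevin dynamics (Lemma~\ref{lemma:LSI}) to extract a strictly negative term $-\alpha\KL(p_t\parallel q_t)$ with $\alpha=1/\exp(8\beta S)$ in the ODE. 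It is this negative drift, not a generic Gr\"onwall step, that gives the contraction and hence the bounded, saturating form of $C$. Without the LSI (or an equivalent hypercontractivity/spectral-gap argument), your plan would yield a bound that diverges as $T'\to\infty$ and does not match the theorem. You also need the data-dependent McDiarmid variant (Lemma~\ref{lemma:mc-data-dependent}) to convert the integral of the drift gap, which depends on the random index set $J$, into a high-probability deterministic estimate in terms of $L$, $N'$ and $K$.
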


\begin{remark}\label{remark: the2}
	Theorem \ref{pre-gen-data-dependent} reveals that when considering the first-level expectation over sequence, the expected loss achieves $\mathcal{O}\{1/\sqrt{KNT}\}$ rate. This indicates that an increase in the number of training topics ($K$), the number of sequences per topic ($N$), and the sequence length ($T$) leads to a reduction in the first-level expected loss, aligning with both intuitive understanding and empirical evidence. Furthermore, from the term $C(\frac{1}{N_{\text{param}}}, T^\prime)$, the expected loss is smaller with a larger model size (\textit{i.e.}, larger $N_{\text{param}}$). It reveals why LLMs outperform small language models in emerging ICL, even though they adopt a similar AR-NTP paradigm. $T^\prime$ is related to the optimization process. As $T^\prime$ increases, $C(\beta, T^\prime)$ increases, \textit{i.e.}, the generalization error increases. This reflects the influence of total training iterations $T^\prime$ on testing loss, corresponding to the classical viewpoint `train faster, generalize better’ \citep{hardt2016train, lei2020fine, zhang2022stability}. We defer more detailed discussion to Appendix \ref{remark-app: the2} and proof to Appendix \ref{appendix-the-1} and \ref{appendix-the-2}.
\end{remark}

\subsection{Generalization of Sequences and Topics: Two-Level Expectation}\label{sec:gen-ICL}
Up to now, we have analyzed the first-level expected loss with $K$ topics and infinite sequences per topic. With small first-level expected loss, the pre-trained LLM can perform excellently on the new test prompt under \textbf{\textit{seen}} topics in ICL phase. In this section, we use similar techniques to further consider the second-level expectation with infinite topics, so that the pre-trained LLM with small population loss can perform well on \textbf{\textit{unseen}} topics. At this moment, ICL emerges from the generalization of sequences and topics.

In the following theorem for the two-level expected loss (population loss) $L(\theta)$, similarly, we derive the KL distance between the posterior $\mu$ and prior $\nu$ in the upper bound, specifically propose a topic-dependent prior whose core idea comes from data-dependent prior \citep{li2019generalization}, \emph{i.e.}, a portion of $K$ topics will be used for calculating model prior and other topics will be used for obtaining posterior. Based on SDE analysis, we detail the KL divergence between posterior and topic-dependent prior. Since then, we can provide data-dependent, topic-dependent and optimization-dependent generalization bound of the population loss.

\textbf{Topic-Dependent Prior.}\quad We employ the following method for generating a topic-dependent prior, similar to data-dependent prior \citep{li2019generalization}. We split topics into two parts and let $J$ include $K^{\prime}$ indexes uniformly sampled from $[K]$ without replacement and let $I$ be $[K]\setminus J$, then the total sequences are divided into $E^I=\{E^k\}_{k \in \mathcal{W}_{\text{pre},I}}$ and $E^J=\{E^k\}_{k \in \mathcal{W}_{\text{pre},J}}$. Assume that the posterior distribution of model parameters $\theta$ depends on $E^I$ denoted by $\mu$ and the prior distribution of $\theta$ depends on the topic subset $E^J$ denoted by $\nu_J$. A parallel training process is performed with $E^J$ based on the same LLM architecture, and after that, a topic-dependent prior $\nu_J$ will be obtained.

\begin{assumption}[Bounded Expected Gradient]\label{ass: lipschitz-2} Suppose that for topic $w_k$ and model parameters $\theta_t$ at step $t$ (for any $0 \leq t \leq T^\prime$, $T^\prime$ is the total iteration steps), we have $\left\|\mathbb{E}_{E^{k,n}}\left[\nabla  L_{E^{k,n}}(\theta_t,w_k)\right]\right\| \leq \sigma$.
\end{assumption}
Note that $L_{E^{k,n}}$ denotes the average loss of one sequence (Equation \ref{eq-L-E}). Then $\mathbb{E}_{E^{k,n}}\left[\nabla  L_{E^{k,n}}(\theta_t,w_k)\right]$ denotes the gradient averaging over all possible sequences $E^{k,n}$, therefore $\sigma$ is less than the common Lipschitz constant $L$, which bounds the gradient at individual sample points.

\begin{theorem}[Data-Dependent, Topic-Dependent and Optimization-Dependent Generalization Bound of the Two-level Expected Loss] Let the auto-regressive LLM $\mathbb{P}_\theta$ be the empirical solution of Equation $\ref{eq-L-E}$, and $\mathbb{P}(\cdot\mid w)$ is the true data distribution under topic $w$. Under Assumptions \ref{ass:B}, \ref{ass: lipschitz} and \ref{ass: lipschitz-2}, for any $0<\delta < 1$, with probability at least $1-\delta$, the two-level expected loss (population loss) with infinite topics and infinite sequences per topic, denoted by $L(\theta)$ (see in Equation \ref{eq-L-ICL-final}), satisfies,
	\label{ICL-gen-topic-dependent}
	\begin{equation*}
			\mathbb{E}_{\mu}\left[L(\theta)\right]
			=\mathcal{O}\left\{\sqrt{\frac{1}{KT_p}}\left(\KL(\mu \parallel \nu)+\log \frac{1}{\delta}\right)+U(\mathcal{W}_{\text{pre}},K,N,N^\prime,T)\right\},
	\end{equation*}
	then considering data-dependent and topic-dependent prior $\nu_J$ and detailing the term $\KL(\mu\parallel\nu_J)$, $L(\theta)$ is further bounded by
	\begin{equation}
			\mathcal{O}\left\{\sqrt{\frac{1}{(K-K^\prime)T_p}}\left(\frac{\sigma^2C(\frac{1}{N_{\text{param}}},T^\prime)}{K^\prime}+\log \frac{1}{\delta}\right)
			+ U(\mathcal{W}_{\text{pre}},K,N,N^\prime,T)\right\},
	\end{equation}
	where $C(\frac{1}{N_{\text{param}}},T^\prime)=\frac{\beta}{2}e^{8\beta S}\left(1-e^{-\frac{ T^\prime}{\exp(8\beta S)}}\right)$, $U(\mathcal{W}_{\text{pre}},K,N,N^\prime,T)$ denotes the right hand of Equation \ref{the3-right}. $K (K^\prime)$, $N (N^\prime)$ and $T$ denote the number of topics, the number of sequences per topic and the sequence length utilized in the optimization process of Equation $\ref{eq-L-E}$. $T^\prime$ denotes the total training iterations. $N_{\text{param}}$ denotes the number of model parameters.
\end{theorem}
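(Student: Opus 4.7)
The plan is to prove Theorem \ref{ICL-gen-topic-dependent} by layering a second PAC-Bayesian argument on top of Theorem \ref{pre-gen-data-dependent}, treating topics as the outer sampling level. First I would decompose the population loss as
\begin{equation*}
\mathbb{E}_\mu[L(\theta)] = \mathbb{E}_\mu[L(\theta,\mathcal{W}_{\text{pre}})] + \mathbb{E}_\mu\bigl[L(\theta) - L(\theta,\mathcal{W}_{\text{pre}})\bigr],
\end{equation*}
where the first summand is already controlled by Theorem \ref{pre-gen-data-dependent} and contributes the term $U(\mathcal{W}_{\text{pre}},K,N,N',T)$. The remaining summand is a pure \emph{topic-level} generalization gap between the outer expectation $\mathbb{E}_w$ and the empirical average over the $K$ pre-training topics drawn i.i.d.\ from $\mathbb{P}_{\mathcal{W}}$.

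To bound that topic-level gap I would apply a standard PAC-Bayes inequality at the topic scale, treating each $w_k$ as one i.i.d.\ draw and each prompt token (there are $T_p$ of them after unrolling the ICL average in Equation \ref{eq-L-ICL-final}) as the effective sample unit. The per-topic integrand $\mathbb{E}_P[\KL(\mathbb{P}(\cdot\mid P,w)\,\|\,\mathbb{P}_\theta(\cdot\mid P,w))]$ must first be related to something bounded; this is where the auto-regressive prompt token-dependency shows up again, and I would reuse the ghost-sequence construction from the proof of Theorem \ref{pre-gen-data-dependent} together with Assumption \ref{ass:B} and Lemma \ref{lemma:KL-TV-bound} to pass between the log-likelihood, the KL, and the TV distance on each conditional. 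Invoking PAC-Bayes then yields a bound of the shape
\begin{equation*}
\mathbb{E}_\mu\bigl[L(\theta)-L(\theta,\mathcal{W}_{\text{pre}})\bigr] = \mathcal{O}\!\left(\sqrt{\tfrac{1}{KT_p}\bigl(\KL(\mu\parallel\nu)+\log 1/\delta\bigr)}\right),
\end{equation*}
giving the first inequality in the theorem once combined with the Theorem \ref{pre-gen-data-dependent} term.

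Next I would refine the KL term by introducing the topic-dependent prior $\nu_J$ built from the complementary topic subset $E^J$, in direct analogy with the data-dependent prior of the previous theorem. A parallel SGD trajectory run on $E^J$ induces $\nu_J$, while the posterior $\mu$ is induced by SGD on $E^I$. Modeling both trajectories as solutions of the same SDE $d\theta_t = -\nabla L(\theta_t)dt + \sqrt{2\beta^{-1}}dW_t$ with drifts differing only by the empirical topic average, I would apply Girsanov's theorem to obtain $\KL(\mu\parallel\nu_J)$ as a time-integrated squared drift difference. The difference of drifts is now an empirical mean over the disjoint topic subsets of $\mathbb{E}_{E^{k,n}}[\nabla L_{E^{k,n}}(\theta_t,w_k)]$, which by Assumption \ref{ass: lipschitz-2} is bounded in norm by $\sigma$; averaging over $K'$ topics produces the $\sigma^2/K'$ factor. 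Integrating the resulting Ornstein–Uhlenbeck-type contraction up to horizon $T'$ yields exactly the factor $C(1/N_{\text{param}},T')$, mirroring the pre-training case but with $L^2/N'$ replaced by $\sigma^2/K'$.

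The step I expect to be the main obstacle is the careful decoupling of the two expectation levels inside a single PAC-Bayes inequality: because the posterior $\mu$ depends on \emph{all} of $E$ (and hence on both the topic sample $\{w_k\}$ and the sequence samples under each $w_k$), naive application of PAC-Bayes at the topic level double-counts the sequence-level randomness. I would address this by conditioning on the sequence-level sigma algebra, applying topic-level PAC-Bayes to the conditional, and then taking the outer expectation—this is the technical role played by constructing $\nu_J$ from $E^J$ (which includes both the topic split and the sequence sampling within those topics). Once this conditioning is in place, combining the two bounds and substituting the SDE-based estimate of $\KL(\mu\parallel\nu_J)$ produces the stated rate $\mathcal{O}\!\bigl(\sqrt{(K-K')^{-1}T_p^{-1}(\sigma^2 C/K'+\log 1/\delta)}\bigr) + U(\mathcal{W}_{\text{pre}},K,N,N',T)$.
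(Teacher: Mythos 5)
Your proposal follows the paper's actual strategy quite closely: the same decomposition $L(\theta)=L(\theta,\mathcal{W}_{\text{pre}})+\bigl[L(\theta)-L(\theta,\mathcal{W}_{\text{pre}})\bigr]$, absorbing the first summand into $U(\cdot)$ via Theorem \ref{pre-gen-data-dependent}, a topic-level PAC-Bayes bound for the gap, and then an SDE-based refinement of $\KL(\mu\parallel\nu_J)$ using Assumption \ref{ass: lipschitz-2} to get the $\sigma^2/K'$ factor. Two sub-steps diverge from the paper in tool choice rather than in strategy. First, to handle the prompt-token dependency inside the topic-level concentration step, you propose to reuse the ghost-sequence construction, whereas the paper instead invokes a McDiarmid-type concentration inequality for Markov chains (Proposition \ref{prop0: high-prob}, derived from \cite{paulin2015concentration}) to handle the dependent tokens while treating the topics themselves as i.i.d.; both routes are plausible, and the paper's choice gives the $\tau_{\min}$ mixing-time factor that gets absorbed into the big-O constant. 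Second, you frame the KL estimate via Girsanov's theorem plus an Ornstein--Uhlenbeck contraction; the paper instead works with the Fokker--Planck equation for the two trajectory densities combined with the Log-Sobolev inequality (Lemma \ref{lemma:LSI}) to obtain a linear ODE in $\KL(p_t\parallel q_t)$ whose solution yields the $1-e^{-T'/\exp(8\beta S)}$ factor in $C(\cdot,T')$. Note that Girsanov alone would give a bound growing linearly in $T'$; the contraction you mention is essential and is exactly the role played by the LSI step, so your plan lands in the same place once that piece is supplied. The concern you raise about double-counting sequence-level randomness is resolved in the paper implicitly: the difference $L(\theta)-L(\theta,\mathcal{W}_{\text{pre}})$ shares the same inner sequence-level expectation, so the topic-level PAC-Bayes only sees outer randomness, and the switch to the topic-dependent prior $\nu_J$ (trained on $E^J$) plays the decoupling role you anticipate.
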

\begin{remark}[Optimality Analysis]
	The term $U(\mathcal{W}_{\text{pre}},K,N,N^\prime,T)$ comes from Theorem \ref{pre-gen-data-dependent} whose analysis can refer to Remark \ref{remark: the2}. As for the first term in the result, with order $\mathcal{O}\{1/\sqrt{KT_p}\}$, it illustrates the impact of training with a finite number of topics on the model's predictive ability for unseen topics in ICL. In addition, by directly concatenating demonstrations into the ICL prompt in our setting, ICL prompt length reflects the distinction between zero-shot ICL and few-shot ICL. Our theorem exhibits that longer prompts (\textit{i.e.} larger $T_p$) with more demonstrations lead to smaller population loss, facilitating the emergence of ICL. In total, our guarantees reveal the impact of pre-training on the generalization performance on unseen topics and sequences in ICL, with order $\mathcal{O}\{C(\frac{1}{N_{\text{param}}}, T^\prime)(1/\sqrt{KT_p}+1/\sqrt{KNT})\}$. In comparison, \citet{li2023transformers} derive a generalization bound on unseen topics based on algorithm stability technique, with order $\mathcal{O}\{1/\sqrt{T}+1/\sqrt{nMT}\}$ where $n, M, T$ denote the sequence length, number of sequences per topic and number of source topics. Our bound is tighter than \cite{li2023transformers} in the first term, with a compatible second term. We defer the proof to Appendix \ref{appendix-the-3} and \ref{appendix-the-4}.
\end{remark}

\textbf{More Insights Beyond Recent ICL Research.} Our PAC-Bayesian approach offers statistical insights into model performance, emphasizing the impact of pre-training topics, sequences and sequence length. The data-dependent and topic-dependent prior uniquely enhances optimization and may provide more practical guidance on model training, data selection and deduplication, distinguishing our work from related generalization studies \citep{li2023transformers, zhang2023and}. Detailed practical implications are discussed in Appendix \ref{app:practical}.

\section{Experiments}\label{sec:exp}
In this section, we primarily provide some typical experiments to verify our theory, observe the optimization process and prior model initialization. We defer more experiments on linear dynamic systems, synthetic language dataset GINC and real-world language datasets to Appendix \ref{exper} \footnote{Our code is available at \href{https://github.com/zx-gong/ICL-Emerge}{https://github.com/zx-gong/ICL-Emerge}.}.

\textbf{Experiments on Synthetic Language Dataset GINC.}\quad
Inspired by \cite{xie2021explanation}, we first perform experiments on the synthetic language dataset GINC to verify our theory. GINC is a small-scale language dataset generated from uniform Hidden Markov Models (HMMs) over topics, where distinct state transition matrices represent the unique topics for each HMM, without defining topics explicitly. We train the GPT-2 model with GINC dataset using a single 24GB NVIDIA GeForce RTX 3090. Detailed data-generating process, model and Hyperparameter settings are provided in Appendix \ref{app:GINC}.

In the following, we arrange groups of comparative experiments to explore the separate effects of the number of topics ($K$), number of sequences per topic ($N$), sequence length ($T$) and prompt length ($T_p$). We also provide an interesting case where ICL failed.
\begin{figure*}
    \centering
    \vspace{-1em}
    \subfigure[]{
                \includegraphics[width=0.24\linewidth]{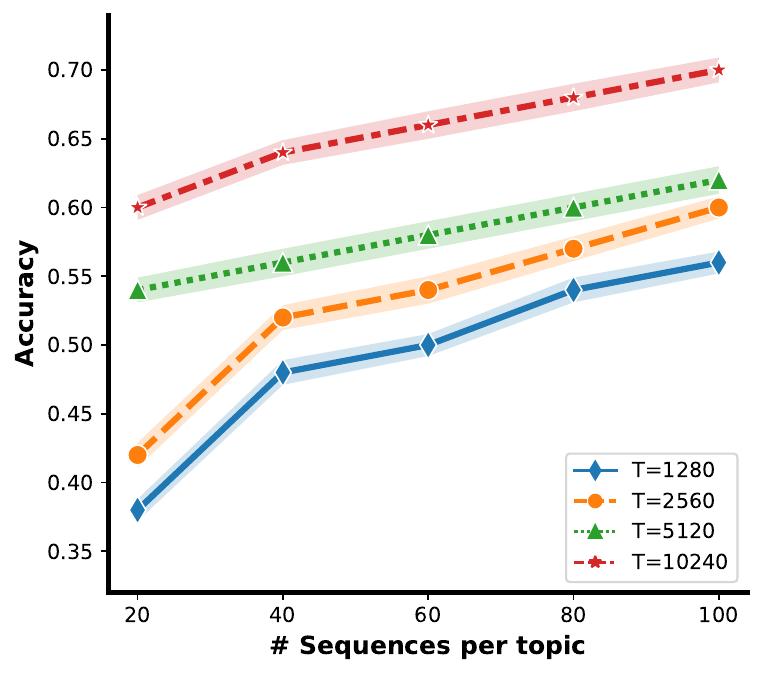}
                \label{exp:lan-N-T}
    }
    \hspace{-1em}
    \subfigure[]{
                \includegraphics[width=0.24\linewidth]{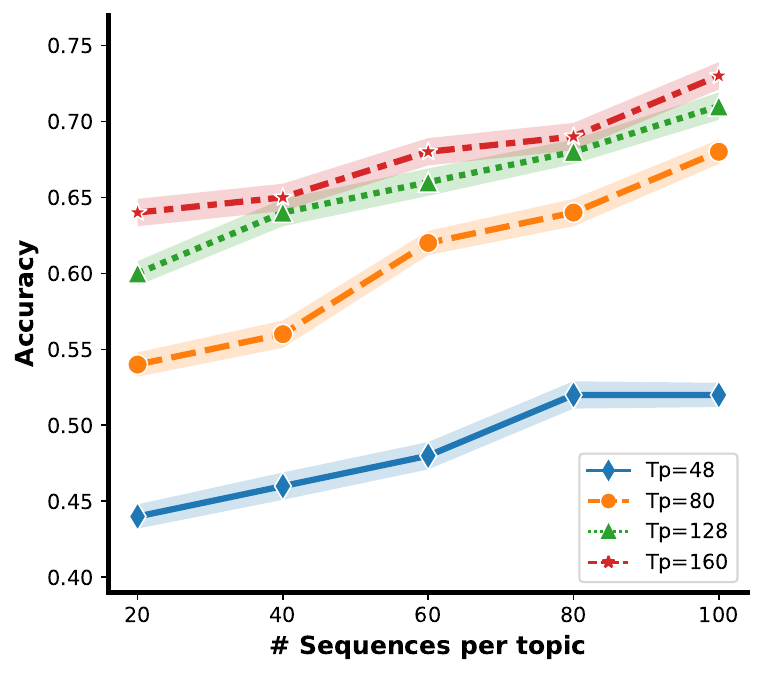}
                \label{exp:lan-N-Tp-K10}
    }
    \hspace{-1em}
    \subfigure[]{
                \includegraphics[width=0.24\linewidth]{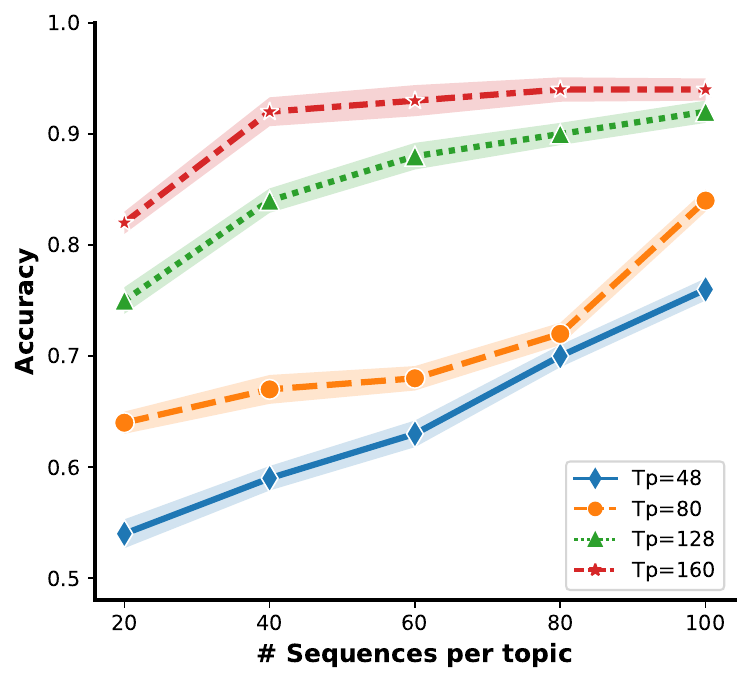}
                \label{exp:lan-N-Tp-K20}
    }
    \hspace{-1em}
    \subfigure[]{
                \includegraphics[width=0.24\linewidth]{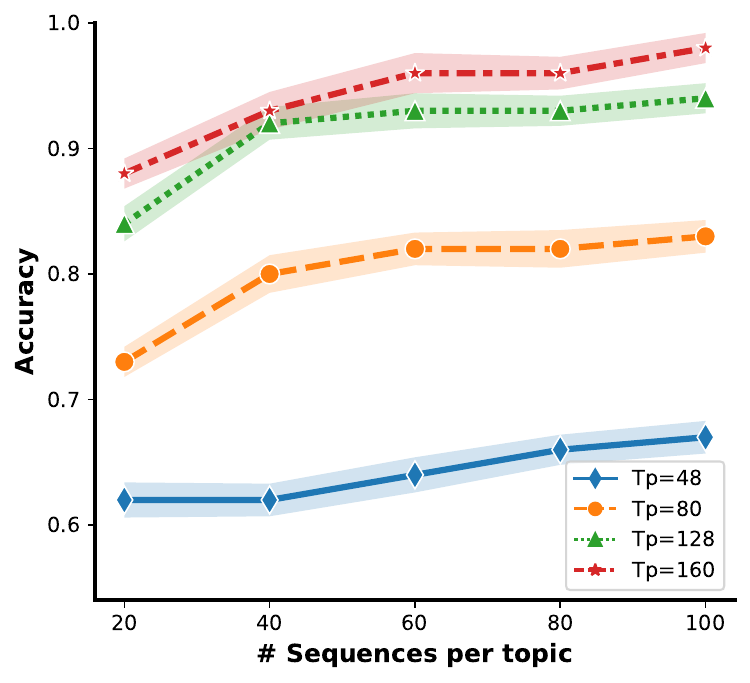}
                \label{exp:lan-N-Tp-K30}
    }
    \\
    \vspace{-1em}
    \hspace{-0.7em}
    \subfigure[]{
                \includegraphics[width=0.25\linewidth]{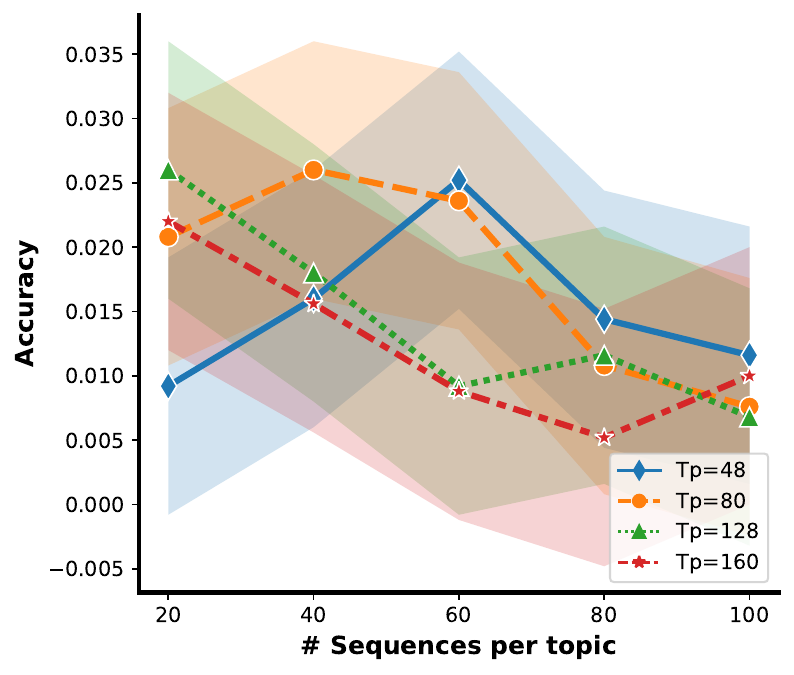}
                \label{exp:lan-fail}
    }
    \hspace{-1em}
    \subfigure[]{
                \includegraphics[width=0.24\linewidth]{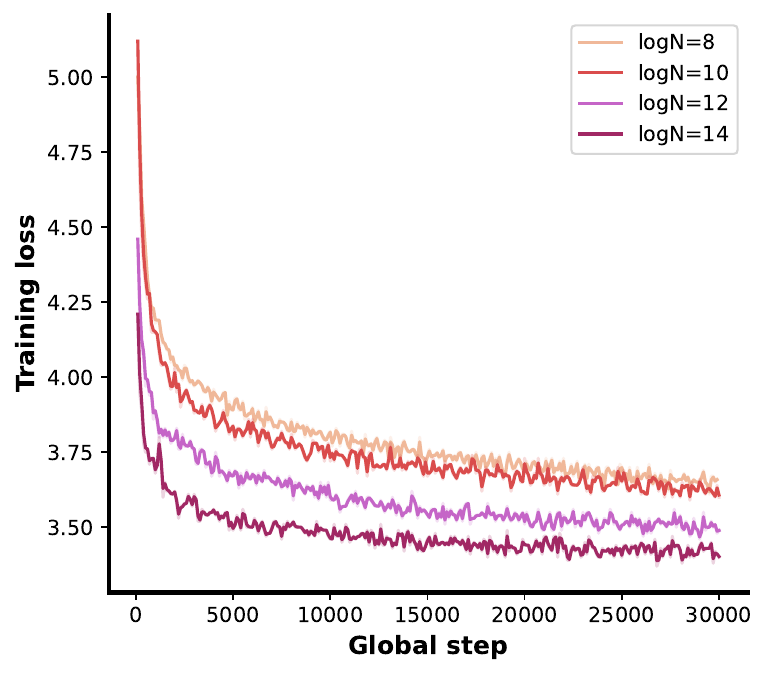}
                \label{exp:loss-N}
    }
    \hspace{-1em}
    \subfigure[]{
                \includegraphics[width=0.24\linewidth]{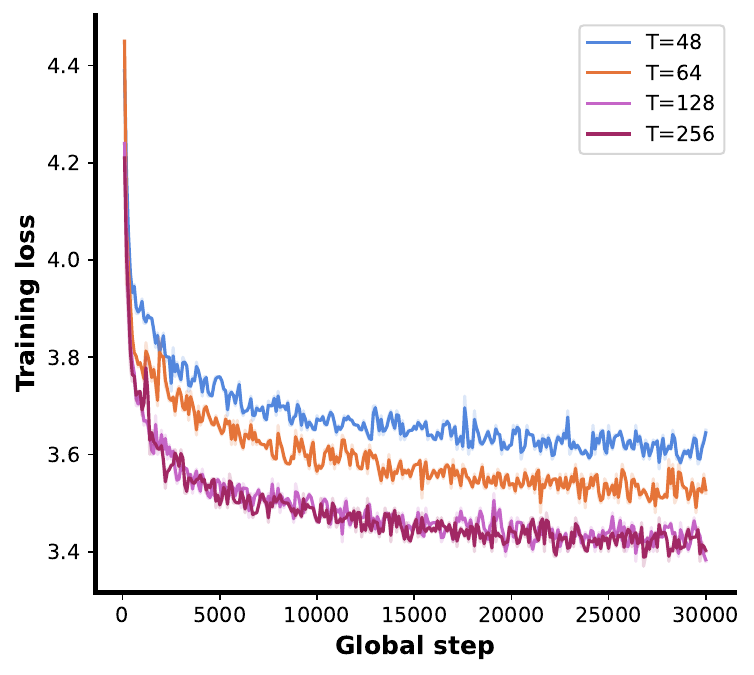}
                \label{exp:loss-T}
    }
    \hspace{-1em}
    \subfigure[]{
                \includegraphics[width=0.24\linewidth]{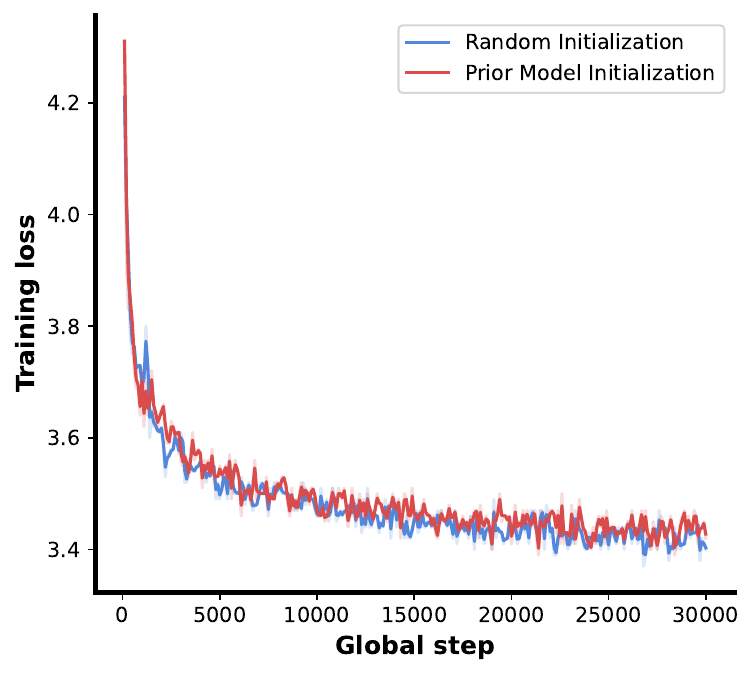}
                \label{exp:loss-init}
    }
    \vspace{-0.8em}
    \caption{Experiments on GINC and Real-world Language Datasets.}
    \label{fig:exp-combine}
    \vspace{-1.2em}
\end{figure*} 

\textbf{Observation (1): Separate Effects of $K$, $N$, $T$ and $T_p$.}\quad In Figure \ref{fig:exp-combine}, we first present four groups of experiments \ref{exp:lan-N-T}-\ref{exp:lan-N-Tp-K30} to analyze the impact of different factors on generalization. \textit{In Figure \ref{exp:lan-N-T}:} For pre-training, take $K=10$ topics and generate $N \in \{20,40,60,80,100\}$ pre-training sequences per topic with varying sequence length $T \in \{1280, 2560, 5120, 10240\}$. The ICL performance of pre-trained model is then tested with $T_p=128$ prompt length. Each line exhibits a growing trend, indicating a better generalization performance with increasing sequences per topic. Comparing the four lines, a larger sequence length also brings better generalization. \textit{From Figure \ref{exp:lan-N-Tp-K10}-\ref{exp:lan-N-Tp-K30}}, we vary $K\in \{10,20,30\}$. Under each $K$, keep $T=10240$, adjust $N \in \{20,40,60,80,100\}$ and $T_p \in \{48, 80, 128, 160\}$. Combining these experiments, we validate the effects of $K,N,T_p$ on generalization to emerge ICL ability, closely aligning our Theorems. 

\textbf{Observation (2): An Interesting Case that ICL Fails.}\quad \textit{In Figure \ref{exp:lan-fail}}, when the pre-training data contains random transitions, the model observes all token transitions, yet ICL fails. This suggests that the pre-trained models cannot extract information when data distributions do not match the topic, thus failing to achieve ICL.

\textbf{Experiments on Real-world Language Dataset.}\quad We further perform experiments on real-world language datasets, inspired by \citep{min2021metaicl,wang2023large}. We train the GPT2-large model over 20 diverse pre-training datasets covering sentiment analysis, question answering and reasoning tasks. We defer the detailed description of datasets, model and Hyperparameter settings, and observations on the effects of training data, to Appendix \ref{app:language}. Here, we focus on more insightful experiments regarding the effects of optimization on generalization, as well as potential benefits of effective prior model initialization, guided by the KL term in generalization bounds.

\textbf{Observation (1): Optimization Process.}\quad Through continuous analysis of optimization trajectory, our generalization bounds are optimization-dependent, extending beyond the influence of training data. \textit{In Figure \ref{exp:loss-N}}, we present four training processes with varying $N \in \{2^{8},2^{10},2^{12},2^{14}\}$, while keeping $K=20$ and $T=256$ fixed. We observe that larger $N$ brings faster convergence in addition to better performance. Similarly, \textit{in Figure \ref{exp:loss-T}}, we take varied $T \in \{48,64,128,256\}$ and keep $K=20$ and $N=2^{14}$ fixed. All these observations align with our Theorems that faster training leads to better generalization.

\textbf{Observation (2): Prior Model Initialization.}\quad Building on our generalization results with data-dependent and topic-dependent prior, we design experiments to observe the effects of prior model initialization on training and performance (detailed designation is deferred to Appendix \ref{app:language}). Our results show that in the random initialization regime, where all pre-training data is used, training for 30,000 steps takes nearly \textit{\textbf{7 hours}} on four A100 GPUs. In contrast, under the prior model initialization regime, where a smaller model is used for warmup and serves as the prior for initializing the larger model, training the GPT2-large model takes only \textbf{\textit{4 hours}} for the same 30,000 steps on four A100 GPUs, with 0.5 hours required for training the GPT2-small model for 15,000 steps. Furthermore, as shown in the optimization loss curve in Figure \ref{exp:loss-init}, prior model initialization not only accelerates training but also stabilizes the training process (especially in the early stages), leading to comparable model performance. This approach demonstrates the effectiveness of leveraging prior knowledge in enhancing both training efficiency and model performance, supporting the KL term in our generalization bounds and offering more practical insights.

\section{Conclusion}\label{sec:con}
In this paper, under the AR-NTP paradigm, we propose a systematic pre-training and ICL framework with a layer-wise structure of sequences and topics, alongside two-level expectation. By employing PAC-Bayesian analysis and continuous mathematical techniques like SDE, we provide a comprehensive analysis of data-dependent, topic-dependent and optimization-dependent generalization bounds, demonstrating that ICL emerges for the excellent generalization of sequences and topics. Ultimately, our work aims to take an initial exploration of the origin of ICL ability from the perspective of generalization, supported by both theoretical and experimental results. 

\section*{Acknowledgments}
This research was supported by National Natural Science Foundation of China (No.62476277), National Key Research and Development Program of China (NO.2024YFE0203200), CCF-ALIMAMA TECH Kangaroo Fund (No.CCF-ALIMAMA OF 2024008), and Huawei-Renmin University joint program on Information Retrieval. We also acknowledge the support provided by the fund for building worldclass universities (disciplines) of Renmin University of China and by the funds from Beijing Key Laboratory of Big Data Management and Analysis Methods, Gaoling School of Artificial Intelligence, Renmin University of China, from Engineering Research Center of Next-Generation Intelligent Search and Recommendation, Ministry of Education, from Intelligent Social Governance Interdisciplinary Platform, Major Innovation \& Planning Interdisciplinary Platform for the `DoubleFirst Class' Initiative, Renmin University of China, from Public Policy and Decision-making Research Lab of Renmin University of China, and from Public Computing Cloud, Renmin University of China.

\bibliography{iclr2025_conference}
\bibliographystyle{iclr2025_conference}

\newpage
\appendix
\renewcommand{\appendixpagename}{\centering \LARGE Appendix}
\appendixpage
\startcontents[section]
\printcontents[section]{l}{1}{\setcounter{tocdepth}{2}}

\newpage
\section{Table of Notations}\label{sec:notation}
\begin{table}[h]
	\centering
	\caption{Table of Notations.}
	\begin{tabularx}{\textwidth}{p{3cm}X}
		\specialrule{1pt}{0pt}{0pt}
		\toprule
		\textbf{Notation}     &  \textbf{Description} \\
		\midrule
		$K$ & Number of pre-training topics \\
		$K^\prime$ & Number of pre-training topics used to compute topic-dependent prior \\
		$N$ & Number of pre-training sequences per topic \\
		$N^\prime$ & Number of pre-training sequences per topic used to compute data-dependent prior \\
		$T$ & Pre-training Sequence length \\
	  $T_p$ & ICL Prompt length \\
		
		\midrule	
		$w_k$ & A pre-training topic with index $k$ \\
		$w$ & A ICL topic \\
		$\mathcal{W}_{\text{pre}}$ & The set of pre-training topics \\
		$\mathcal{W}_{\text{ICL}}$ & The set of ICL topics \\
		$\mathbb{P}_{\mathcal{W}}$ & Topic distribution, each topic $w_k \in \mathcal{W}_{\text{pre}}$, $w \in \mathcal{W}_{\text{ICL}}$ is \emph{i.i.d.} drawn from the topic distribution. \\		

		\midrule
		$E^{k,n}$ & The $n$-th pre-training sequence under the $k$-th topic\\
		$E^{k,n}_t$ & The subsequence consisting of the first $t$ tokens of $E^{k,n}$  \\
		$E^k$ & The set of pre-training sequences under the $k$-th topic, $|E^k|=N$. \\
		$E$ & The set of all pre-training sequences, $E=\{E^k\}_{k=1}^K = \{E^{k,n}\}_{k,n=1}^{K,N}$, $|E|=KN$.  \\
		$E_{T_p}$ & ICL prompt under ICL topic $w$\\
		$\mathbb{P}_{w_k}$ or $\mathbb{P}(\cdot \mid w_k)$ & Data distribution, each pre-training sequence $E^{k,n} \in E^{k}$ is \textit{i.i.d.} drawn from the data distribution. \\
		$\mathbb{P}_{w}$ or $\mathbb{P}(\cdot \mid w)$ & Data distribution, ICL prompt $E_{T_p}$ is drawn from the data distribution. \\
		
		\midrule
		$x^{k,n}_{t+1}$ & The $t+1$-th token of pre-training sequence $E^{k,n}$, generated depending on the prefix sequence $E^{k,n}_t$.   \\
		$x_t$ & The $t$-th token of ICL prompt $E_T$   \\
		\midrule
		$\theta$ & The parameters of the pre-trained LLM \\
		$\hat{\theta}$ & The optimal parameters of the pre-trained LLM \\
		$\mathbb{P}(x^{k,n}_{t+1} \mid E^{k,n}_t, w_k)$ & The true data distribution of token $x^{k,n}_{t+1}$ when given topic $w_k$ and the prefix sequence $E^{k,n}_t$.  \\
		$\mathbb{P}_\theta(x^{k,n}_{t+1} \mid E^{k,n}_t, w_k)$ & The prediction of token $x^{k,n}_{t+1}$, made from the pre-trained model, when given topic $w_k$ and the prefix sequence $E^{k,n}_t$. \\
		$\mathbb{P}_{\hat{\theta}}(x^{k,n}_{t+1} \mid E^{k,n}_t, w_k)$ & The prediction of token $x^{k,n}_{t+1}$, made from the optimal pre-trained model, when given topic $w_k$ and the prefix sequence $E^{k,n}_t$. \\
		\midrule
		$L_E(\theta, \mathcal{W}_{\text{pre}})$ & The empirical loss of all pre-training sequences in $E$, see in Equation \ref{eq-L-E}. \\
		$L_{E^k}(\theta, w_k)$ & The loss of sequences in $E^{k}$, see in Equation \ref{eq-L-E}. \\
		$L_{E^{k,n}}(\theta, w_k)$ & The loss of sequence $E^{k,n}$, see in Equation \ref{eq-L-E}.\\
        $L(\theta, \mathcal{W}_{\text{pre}})$ & The first-level expected loss, take expectation over sequence, see in Equation \ref{eq-L-Wpre-two-part-final-main}.\\
		$L(\theta)$ & The population loss, take expectation over topic and sequence, see in Equation \ref{eq-L-ICL-final}.\\
		\specialrule{1pt}{0pt}{0pt}
		\bottomrule
	\end{tabularx}
	\label{tab:notation}
\end{table}

\section{Overview of Two-Level Expectation}\label{sec:overview-of-two-level}

\begin{figure*}[t]
	\centering
	\includegraphics[width=\linewidth]{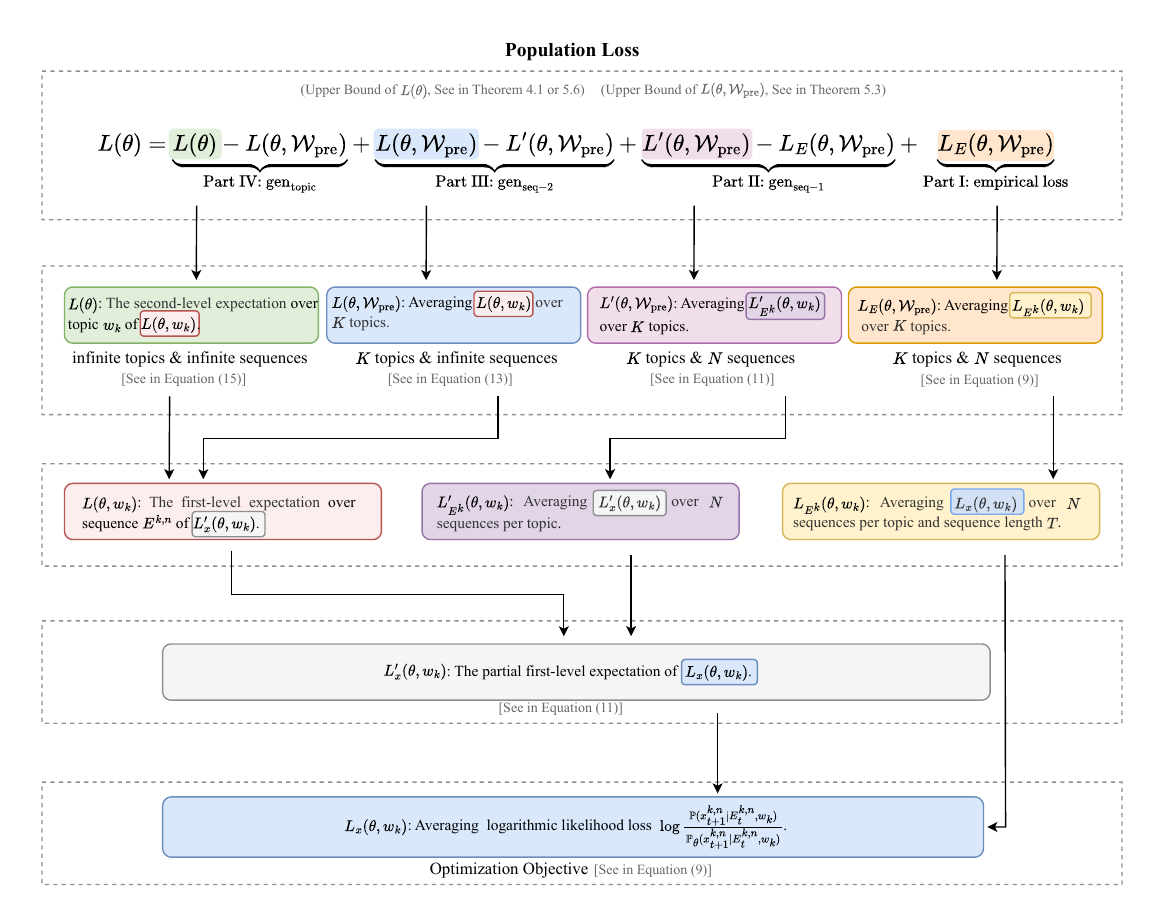}
	\caption{\textbf{Overview of Two-Level Expectation.} \textbf{From a horizontal perspective:} \textbf{The first box (from top to bottom):} according to Equation \ref{app-eq-L-decompose}, the population loss is decomposed into four parts. We ultimately obtain the upper bound of the population loss by separately defining the upper bound for each part. Combining Part $\text{\uppercase\expandafter{\romannumeral1}}$, Part $\text{\uppercase\expandafter{\romannumeral2}}$ and Part $\text{\uppercase\expandafter{\romannumeral3}}$, we obtain  Theorem \ref{pre-gen-data-dependent}; further combining with Part $\text{\uppercase\expandafter{\romannumeral4}}$, we obtain Theorem \ref{ICL-gen-topic-dependent}. \textbf{The second box:} comparing $L(\theta)$ and $L(\theta,\mathcal{W}_{\text{pre}})$, we aim to describe the second-level expectation defined over topic. \textbf{The third box:} comparing $L(\theta,w_k)$ and $L^\prime_{x}(\theta,w_k)$, we aim to describe the complete first-level expectation defined over sequence. \textbf{The fourth box:} comparing $L^\prime_{x}(\theta,w_k)$ and $L_{x}(\theta,w_k)$, $L^\prime_{E^{k,n}}(\theta,w_k)$ is a partial first-level expectation over token $x^{k,n}_{t+1}$ conditioned on $E^{k,n}_t$. \textbf{The fifth box:} Negative logarithmic likelihood loss, the optimization objective for a token. \textbf{From a vertical perspective}, the formulas described in the four columns can be found in Equation \ref{eq-L-2}, \ref{eq-L-W}, \ref{eq-L-E-prime} and \ref{eq-L-E-complete}, respectively. \textbf{The first column:} the chain of $L(\theta) \rightarrow L(\theta,w_k) \rightarrow L^\prime_{x}(\theta,w_k) \rightarrow L_{x}(\theta,w_k)$. \textbf{The second column:} the chain of $L(\theta,\mathcal{W}_{\text{pre}}) \rightarrow L(\theta,w_k) \rightarrow L^\prime_{x}(\theta,w_k) \rightarrow L_{x}(\theta,w_k)$. \textbf{The third column:} the chain of $L^\prime(\theta,\mathcal{W}_{\text{pre}}) \rightarrow L^\prime_{E^{k}}(\theta,w_k) \rightarrow L^\prime_{x}(\theta,w_k) \rightarrow L_{x}(\theta,w_k)$. \textbf{The fourth column:} the chain of $L_E(\theta,\mathcal{W}_{\text{pre}}) \rightarrow L_{E^k}(\theta,w_k) \rightarrow L_{x}(\theta,w_k)$.} 
	\label{fig:symbol}
\end{figure*}

\paragraph{Decomposition of Population Loss.} No matter the inner or outer expectation, the expected loss $L(\theta)$ is incalculable since the data distribution $\mathbb{P}_{w_k}$ and topic distribution $\mathbb{P}_{\mathcal{W}}$ are both unknown (as introduced in Section \ref{sec:optimization-objective}, finite sequences and topics are utilized to optimize the empirical loss in practical). ICL ability can be measured by population loss, which can be decomposed by simply adding and subtracting three terms $L_E(\theta,\mathcal{W}_{\text{pre}})$, $L^\prime(\theta,\mathcal{W}_{\text{pre}})$ and $L(\theta,\mathcal{W}_{\text{pre}})$ in Equation \ref{app-eq-L-decompose}. A good ICL learner means a small population loss, i.e. a small value in all four parts. The overview of two-level expectation is shown in Figure \ref{fig:symbol} and the table of notations is shown in Table \ref{tab:notation-fig}.
\begin{multline}\label{app-eq-L-decompose}
	L(\theta)=\underbrace{L(\theta) - L(\theta,\mathcal{W}_{\text{pre}})}_{\text{Part \uppercase\expandafter{\romannumeral4}: }\text{gen}_{\text{topic}}}+\underbrace{L(\theta,\mathcal{W}_{\text{pre}})-L^\prime(\theta,\mathcal{W}_{\text{pre}})}_{\text{Part \uppercase\expandafter{\romannumeral3}: }\text{gen}_{\text{seq-2}}}\\
    +\underbrace{L^\prime(\theta,\mathcal{W}_{\text{pre}})-L_E(\theta,\mathcal{W}_{\text{pre}})}_{\text{Part \uppercase\expandafter{\romannumeral2}: }\text{gen}_{\text{seq-1}}}+\underbrace{L_E(\theta,\mathcal{W}_{\text{pre}})}_{\text{Part \uppercase\expandafter{\romannumeral1}: empirical loss}}
\end{multline}

\textbf{Part $\text{\uppercase\expandafter{\romannumeral1}: empirical loss}$.} For Part $\text{\uppercase\expandafter{\romannumeral1}}$, the training of the LLM takes into account $K$ topics and $N$ sequences per topic. In this setting, finite topics and finite sequences could affect the performance of model so that the training loss is called as empirical loss (optimization objective). For a detailed explanation of empirical loss, the same as Equation \ref{eq-L-E},
\begin{align}
	L_E(\theta, \mathcal{W}_{\text{pre}})&=\frac{1}{K}\sum_{k=1}^K L_{E^k}(\theta,w_k), \nonumber \\
	L_{E^k}(\theta, w_k)&=\frac{1}{N}\sum_{n=1}^N L_{E^{k,n}}(\theta,w_k), \label{eq-L-E-complete}\\
	L_{E^{k,n}}(\theta,w_k)&=\frac{1}{T}\sum_{t=1}^T L_x(\theta, w_k), \nonumber\\ 
    L_x(\theta, w_k) &= \log \frac{\mathbb{P}(x^{k,n}_{t+1}\mid E^{k,n}_t, w_k)}{\mathbb{P}_\theta(x^{k,n}_{t+1}\mid E^{k,n}_t, w_k)}.\nonumber
\end{align}

\textbf{Part $\text{\uppercase\expandafter{\romannumeral2}}: \text{gen}_\text{seq-1}$.} Through Part $\text{\uppercase\expandafter{\romannumeral1}}$, we have obtained the empirical loss with finite sequences and finite topics. To address the first-level expectation, it's necessary to evaluate the expected loss over sequence, that is, utilizing an infinite number of sequences for each pre-training topic. Given that the sequential dependence in token generation or prediction, where each subsequent token relies on the preceding tokens, our approach involves initially calculating the expectation of token $x^{k,n}_{t+1}$ conditioned on $E^{k,n}_{t}$ in this Part $\text{\uppercase\expandafter{\romannumeral2}}$. It's a partial generalization error for the first-level expected loss. This is followed by taking expectation over $E^{k,n}_{t}$ in the Part $\text{\uppercase\expandafter{\romannumeral3}}$, thereby achieving the comprehensive first-level expectation over sequence $E^{k,n}$. 

According to the definition of KL divergence, the partial first-level expectation over sequences $\mathbb{E}_{x^{k,n}_{t+1} \sim \mathbb{P}(\cdot\mid E^{k,n}_t, w_k)}\left[L_{x}(\theta,w_k)\right]$ can be related to $\KL \left(\mathbb{P}(\cdot\mid E^{k,n}_t, w_k)\parallel \mathbb{P}_\theta(\cdot\mid E^{k,n}_t, w_k)\right)$, \textit{i.e.}
\begin{align}
	\mathbb{E}_{x^{k,n}_{t+1} \sim \mathbb{P}(\cdot\mid E^{k,n}_t, w_k)}\left[L_{x}(\theta,w_k)\right]&=\mathbb{E}_{x^{k,n}_{t+1} \sim \mathbb{P}(\cdot\mid E^{k,n}_t, w_k)}\left[\log\frac{\mathbb{P}(x^{k,n}_{t+1}\mid E^{k,n}_t, w_k)}{\mathbb{P}_\theta(x^{k,n}_{t+1}\mid E^{k,n}_t, w_k)} \right]\nonumber \\
	&=\KL\left(\mathbb{P}(\cdot\mid E^{k,n}_t, w_k)\parallel \mathbb{P}_\theta(\cdot\mid E^{k,n}_t, w_k)\right)\nonumber  \\
    &\triangleq L^\prime_{x}(\theta,w_k). \label{eq-E-and-KL}
\end{align}

Then, taking average of all tokens in a sequence, $N$ sequences per topic and $K$ topics and combining with Equation \ref{eq-E-and-KL}, we define a partial first-level expected loss $L^\prime(\theta,\mathcal{W}_{\text{pre}})$ as
\begin{align}
	L^\prime(\theta, \mathcal{W}_{\text{pre}})&=\frac{1}{K}\sum_{k=1}^K L^\prime_{E^k}(\theta,w_k), \nonumber \\
	L^\prime_{E^k}(\theta, w_k)&=\frac{1}{N}\sum_{n=1}^N L^\prime_{E^{k,n}}(\theta,w_k), \label{eq-L-E-prime}\\
	L^\prime_{E^{k,n}}(\theta,w_k)&=\frac{1}{T}\sum_{t=1}^T L^\prime_{x}(\theta,w_k), \nonumber \\
    L^\prime_{x}(\theta,w_k)&= \KL\left(\mathbb{P}(\cdot\mid E^{k,n}_t, w_k)\parallel \mathbb{P}_\theta(\cdot\mid E^{k,n}_t, w_k)\right) \nonumber.
\end{align}
Finally, a partial generalization error for the first-level expected loss can be described as
\begin{equation}\label{eq-gen-pre-1}
	\text{gen}_{\text{seq-1}}=L^\prime(\theta,\mathcal{W}_{\text{pre}})-L_E(\theta,\mathcal{W}_{\text{pre}}).
\end{equation}

\begin{table}[t]
	\centering
	\caption{Table of Notations in Figure \ref{fig:symbol}.}
	\begin{tabularx}{\textwidth}{p{1.6cm}p{1.9cm}X}
		\specialrule{1pt}{0pt}{0pt}
		\toprule
		  & \textbf{Notation}  &  \textbf{Description} \\	
		\midrule
		$L_E(\theta, \mathcal{W}_{\text{pre}})$ & $L_E(\theta, \mathcal{W}_{\text{pre}})$ & Averaging $L_{E^k}(\theta, w_k)$ over $K$ topics, see in Equation \ref{eq-L-E-complete}. \\
		 & $L_{E^k}(\theta, w_k)$ & Averaging $L_{E^{k,n}}(\theta, w_k)$ over $N$ sequences per topic. \\
		& $L_{E^{k,n}}(\theta, w_k)$& Averaging $L_{x}(\theta, w_k)$ over sequence length.\\
        & $L_{x}(\theta, w_k)$& Negative logarithmic likelihood loss $\log \frac{\mathbb{P}(x^{k,n}_{t+1}\mid E^{k,n}_t, w_k)}{\mathbb{P}_\theta(x^{k,n}_{t+1}\mid E^{k,n}_t, w_k)}$.\\
		
		\midrule
		$L^\prime(\theta, \mathcal{W}_{\text{pre}})$ & $L^\prime(\theta, \mathcal{W}_{\text{pre}})$ & Averaging $L^\prime_{E^k}(\theta, w_k)$ over $K$ topics, see in Equation \ref{eq-L-E-prime}. \\
		& $L^\prime_{E^k}(\theta, w_k)$ & Averaging $L^\prime_{E^{k,n}}(\theta, w_k)$ over $N$ sequences per topic.   \\
		& $L^\prime_{E^{k,n}}(\theta, w_k)$& Averaging $L^\prime_{x}(\theta, w_k)$ over sequence length.\\	
        & $L^\prime_{x}(\theta, w_k)$ & \textbf{Taking the partial first-level expectation} over token $x^{k,n}_{t+1} \sim \mathbb{P}(\cdot\mid E^{k,n}_t, w_k)$.\\
		
		\midrule
		$L(\theta, \mathcal{W}_{\text{pre}})$ & $L(\theta, \mathcal{W}_{\text{pre}})$ & Averaging $L(\theta, w_k)$ over $K$ topics, see in Equation \ref{eq-L-W}. \\
		& $L(\theta, w_k)$ & \textbf{Taking the complete first-level expectation} over prefix sequence $E^{k,n}_t$ and token $x^{k,n}_{t+1} \sim \mathbb{P}(\cdot\mid E^{k,n}_t, w_k)$.\\
		& $L^\prime_{x}(\theta, w_k)$ & The partial first-level expectation over token $x^{k,n}_{t+1}$. \\
  
		\midrule
		$L(\theta)$ & $L(\theta)$ & \textbf{Taking the second-level expectation} over topic $w_k$ of $L(\theta, w_k)$, see in Equation \ref{eq-L-2}.\\
		& $L(\theta, w_k)$ &  The first-level expectation over sequence $E^{k,n}$.\\
		\specialrule{1pt}{0pt}{0pt}
		\bottomrule
	\end{tabularx}
	\label{tab:notation-fig}
\end{table}

\textbf{Part $\text{\uppercase\expandafter{\romannumeral3}}: \text{gen}_\text{seq-2}$.} Through Part $\text{\uppercase\expandafter{\romannumeral2}}$, we derived a partial first-level expected loss $L^\prime(\theta,\mathcal{W}_{\text{pre}})$. Subsequently, in this part, by taking expectation over $E^{k,n}_{t}$, we will achieve a comprehensive first-level expectation over prefix sequence $E^{k,n}$. Utilizing infinite sequences per topic rather than $N$ sequences, the first-level expected loss $L(\theta,\mathcal{W}_{\text{pre}})$ can be more concretely described as
\begin{align}
	L(\theta,\mathcal{W}_{\text{pre}})&=\frac{1}{K}\sum_{k=1}^K L(\theta, w_k), \nonumber \\ 
	L(\theta,w_k)&=\mathbb{E}_{E^{k,n}_t}\left[ L^\prime_{x}(\theta,w_k)\right]. \label{eq-L-W}
\end{align}

Compared with $L(\theta,\mathcal{W}_{\text{pre}})$ and $L^\prime(\theta,\mathcal{W}_{\text{pre}})$, the difference lies in the second line of Equation \ref{eq-L-W} and \ref{eq-L-E-prime} with infinite sequences or $N$ sequences. This difference represents the complete generalization error of sequences which can be denoted as $\text{gen}_\text{seq-2}$,
\begin{equation}\label{eq:gen-pre}
	\text{gen}_\text{seq-2}=L(\theta,\mathcal{W}_{\text{pre}})-L^\prime(\theta,\mathcal{W}_{\text{pre}}).
\end{equation}

\textbf{Part $\text{\uppercase\expandafter{\romannumeral4}}: \text{gen}_\text{topic}$.} In this part, we further consider the second-level expectation over topic, that is, considering the population loss with infinite sequences and infinite topics. According to the difference between Equation \ref{eq-L-W} and population loss lies in the number of topics with infinite  or $K$, we have the population loss,
\begin{equation}\label{eq-L-2}
	L(\theta)=\mathbb{E}_{w_k}\left[L(\theta,w_k)\right].
\end{equation}

After which ICL will emerge from the good generalization of sequences and topics. It can be denoted as $\text{gen}_\text{topic}$,
\begin{equation}\label{eq:gen-ICL}
	\text{gen}_\text{topic} = L(\theta) - L(\theta,\mathcal{W}_{\text{pre}}).
\end{equation}
\section{More Related Work}\label{app:related-work}
\paragraph{From Multi-Task Learning to Meta-Learning.} Although drawing inspiration from the assumption of an unknown task distribution in meta-learning analysis, it is worthy to emphasize that ICL generalization analysis under auto-regressive next-token prediction cannot be equivalent to meta-learning generalization. We conduct our analysis under the unique setup of auto-regressive pre-trained LLMs. The prompt token-dependency issue brought by auto-regressive language modeling implies that we cannot directly apply the general meta-learning analysis to ICL generalization analysis. For instance, the study in \cite{bai2024transformers} directly applied the general approach of meta-learning, assuming that a prompt consists of $N+1$ \emph{i.i.d.} samples, which is unreasonable for AR-NTP problem we investigate. For a prompt $(x_1, x_2, \cdots, x_T)$ under AR-NTP, we do not require $x_1 \sim x_T$ to be independent of each other; instead, subsequent tokens depend on previously generated tokens. As mentioned in Section \ref{sec:gen-ICL}, addressing prompt token-dependency is one of the significant contributions and challenges compared to other works, including meta-learning works in non-ICL domains. This is the key distinction from traditional meta-learning approach.

\paragraph{Generalization Analysis.} Understanding the generalization error in learning algorithm which meansures the model performance on unseen data with population loss, has led to the development of several classic methods for establishing its upper bounds. Among these, uniform convergence (including VC dimension, Rademacher complexity) \citep{bartlett2017spectrally, shalev2010learnability, vapnik1994measuring}, algorithm stability \citep{bousquet2002stability, feldman2018generalization, hardt2016train, lei2020fine, zhang2022stability}, information-theoretic bounds \citep{russo2016controlling, russo2019much, xu2017information}, and PAC-Bayesian \citep{catoni2007pac,dziugaite2017computing, mcallester1998some} are prominent techniques. 
For VC dimension, it depends solely on the hypothesis class which offers the worst-case analysis. For Rademacher complexity, it depends both on the hypothesis class and on the unknown distribution which can be understood as an average-case analysis. The above obtained bounds almost depend on the size of hypothesis space, and become vacuous hence may be unable to explain generalization in deep learning with over-parameterized neural network \citep{nagarajan2019uniform, zhang2021understanding}. Additionally, compared with algorithm stability theory, it considers worst-case and fails in analyzing the relationship between input data and output model. Therefore, we turn to the PAC-Bayesian approach for its unique data-dependent and hypothesis space-independent analysis. In our work, we specifically incorporate a topic-dependent prior within the PAC-Bayesian framework, adding a novel dimension to this analysis. Furthermore, by detailing the KL divergence when considering the optimization process, we obtain optimization algorithm-dependent generalization bound, naturally combining the advantage of algorithm stability technique. 
\section{More Experiments}\label{exper}
\subsection{Experiments on Linear Dynamic System}\label{sec:app-linear}
We conduct numerical experiments of linear dynamic system. Our experimental setup follows \cite{li2019generalization}: All ICL experiments are trained and evaluated using the same GPT-2 architecture with 12 layers, 8 attention heads, and 256 dimensional embeddings, on NVIDIA 3090 GPUs.

For a partially-observed dynamical system, the mathematical model can be represented by state and observation equation. Consider the state equation $x_{t+1} = Wx_t + \zeta_t$, where $x_t$ represents the state vector at time $t$ in a $d$-dimensional space. This is analogous to the tokens in our analysis. $W$ denotes the state transition matrix and $\zeta_t$ is the process noise satisfying $\mathcal{N}(0, \sigma^2 I_d)$. The observation equation is given by $y_{t+1} = Cx_{t+1}$, where $C$ is the observation matrix, indicating that only partial dimensions of the state vector are observable. The uniqueness of different topics is reflected in the parameters $W$ and $C$. Within this linear dynamic system setting, we examine how the number of pre-training topics $(K)$, the number of sequences per topic $(N)$, and the sequence length $(T)$ significantly affect the generalization performance of auto-regressive LLMs. Additionally, we highlight the advantages of both data-dependent and topic-dependent priors.

\begin{figure*}[ht]
	\qquad
	\subfigure{
		\includegraphics[width=0.35\linewidth]{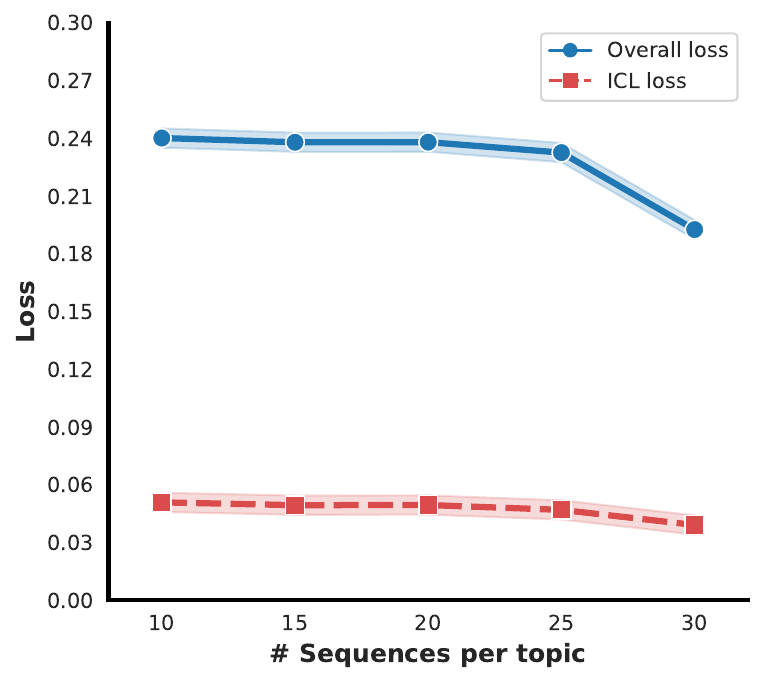}
		\label{fig:fig0overalliclcompare2}
	}
	\qquad
	\qquad
	\qquad
	\subfigure{
		\includegraphics[width=0.35\linewidth]{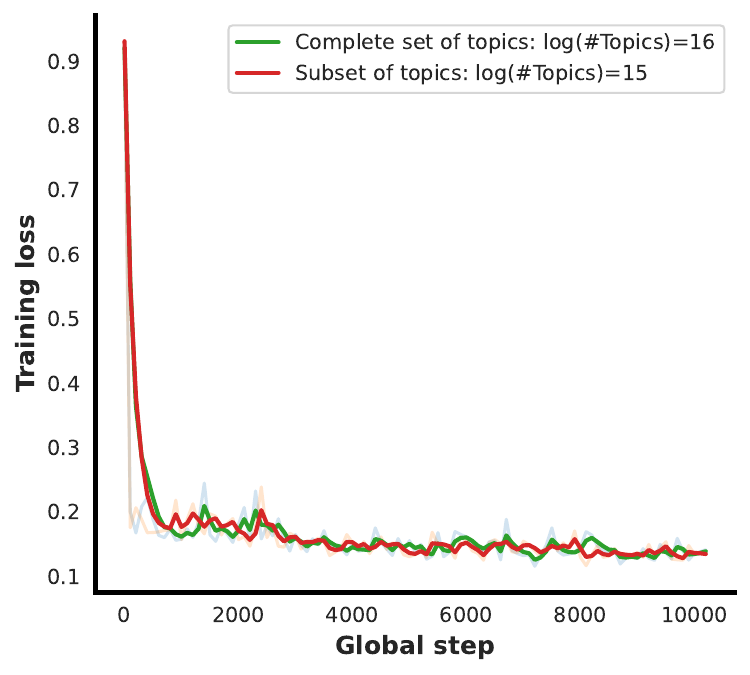}
		\label{fig:topicdependent}
	}
	\caption{Experiments on Linear Dynamic System. Left: The comparison of overall loss and in-context learning loss. Right: The comparison of experiments conducted on complete topic set and subset of topics.}
	\label{fig:exp-2}
\end{figure*} 

\textbf{The Comparison of Overall Loss and In-context Learning Loss}.\quad Before embarking on our main experiments, we conduct a preliminary comparison between the absolute values of the overall loss and the in-context learning (ICL) loss. In the pre-training phase, we predict all tokens in a sequence and consider the average of these predictions as the overall loss. According to our theoretical proof, this average prediction loss can be naturally generalized to the ICL phase to represent the ability of ICL. Although in more scenarios, the focus often shifts to the predicted outcome of the last token, here the prediction loss of the last token is denoted as ICL loss. In the left of Figure \ref{fig:exp-2}, our observations reveal that the ICL loss is consistently lower than the overall loss with a different number of sequences per topic. It's because the prediction loss decreases as the sequence length increases, corresponding to our theory. Consequently, our theoretical bounds hold validity and significance under both overall and ICL loss assessments.

\textbf{Separate Effects of $K$, $N$ and $T$}.\quad In our experimental design, we manipulate the variables $K$, $N$, and $T$ independently and the experimental results are shown in Figure \ref{fig:exp1}. In Figure \ref{fig:2-a}, with fixed number of sequences per topic and sequence length $(T=11)$ for each line ($N=10,20,30,40$), we vary the number of topics within the range of $K=2^{10} \sim 2^{17}$. As $K$ increases, it's noticeable that the ICL loss consistently show a downward trend across all four lines. Furthermore, the sharp drops in ICL loss observed in these cases suggests that LLMs exhibit emerging abilities when the accumulated topic count reaches certain thresholds. In Figure \ref{fig:2-b}, holding the number of topics and sequence length $(T=11)$ constant for each line (with topics set at $K=2^{13},2^{14},2^{15},2^{16}$), we adjust the number of sequences per topic, varying it within a range of $N=5 \sim 40$. Comparing the four cases, the ICL loss diminishes as $N$ grows. Notably, in cases with less sufficient topics (like $K=2^{13}$ and $2^{14}$), a larger $N$ leads to significant reductions in ICL loss. Specially, the decrease trend of ICL loss is particularly evident in the magnified view of the case where $K=2^{16}$. In Figure \ref{fig:2-c}, maintaining a constant number of topics $(K=2^{14})$ and sequence per topic $(N=40)$, we modify the sequence length, allowing it to vary within a range of $T=11 \sim 51$. We can find that ICL loss clearly decreases as the sequence length grows.

\begin{figure*}
	\centering
	\subfigure[Vary number of topics.]{
		\includegraphics[width=0.29\textwidth]{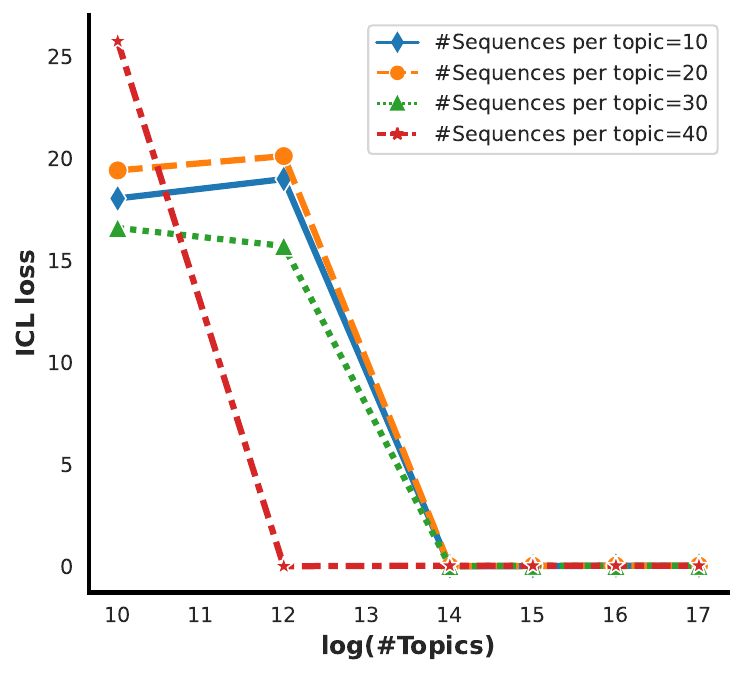}
		\label{fig:2-a}
	}
	\quad
	\subfigure[Vary number of sequences per topic.]{
		\includegraphics[width=0.29\textwidth]{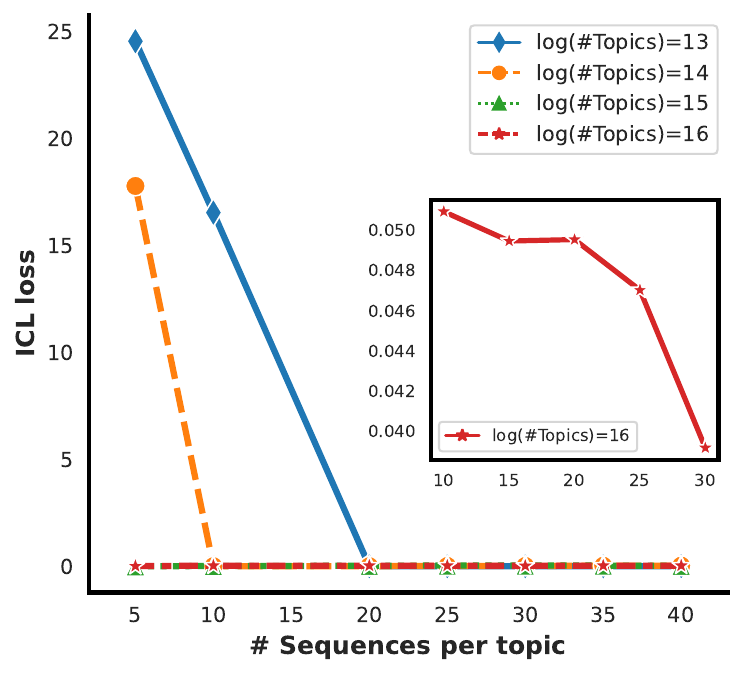}
		\label{fig:2-b}
	}
	\quad
	\subfigure[Vary sequence length.]{
		\includegraphics[width=0.29\textwidth]{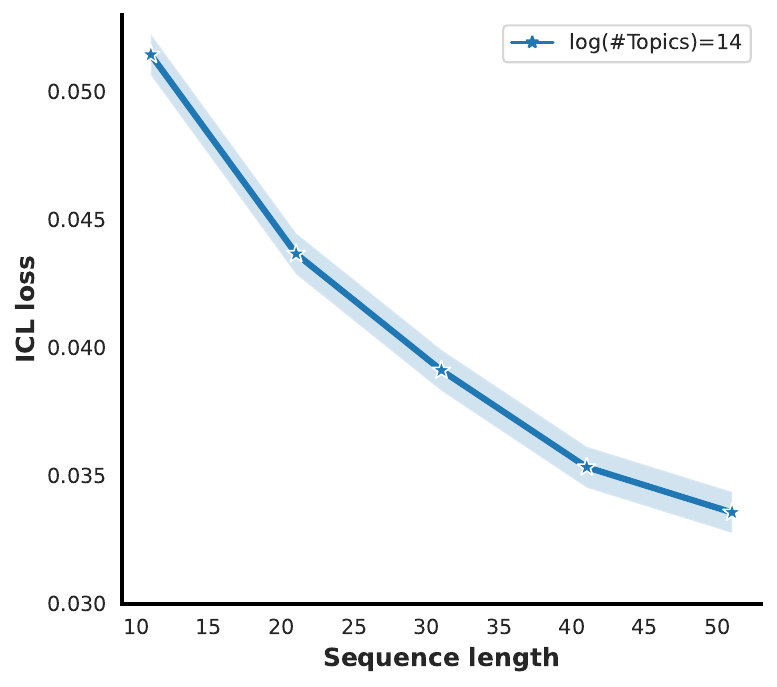}
		\label{fig:2-c}
	}
	\caption{Experiments on Linear Dynamic System: The effect of the number of pre-training topics ($K$), the number of sequences per topic ($N$) and sequence length ($T$).}
	\label{fig:exp1}
\end{figure*}

\textbf{The Advantages of Both Data-dependent and Topic-dependent Priors.}\quad As introduced before, data-dependent and topic-dependent priors provide a chance to make the generalization bound computable. To illustrate this, we take the example of topic-dependent prior and two experiments are conducted: one with a complete set of topics $(K=2^{16})$ and another with its subset $(K^\prime=2^{15})$. Observing the results in the right of Figure \ref{fig:exp-2}, both training processes eventually converge to nearly identical steady states. This suggests that using a subset of topics to obtain a topic-dependent prior in preliminary experiments yields a prior that is closer to the posterior than a randomly selected prior. Then for the KL divergence between prior and posterior distribution of model parameters in our generalization bounds, assume these distributions are either uniform or Gaussian allows us to derive the closed-form expressions for the KL divergence.

\subsection{Experiments on GINC Synthetic Language Dataset}\label{app:GINC}
\begin{figure*}
    \centering
    \vspace{-1em}
    \subfigure[]{
                \includegraphics[width=0.24\linewidth]{fig2/language_changeN_T_K10_v50.pdf}
                \label{app-exp:lan-N-T}
    }
    \hspace{-1em}
    \subfigure[]{
                \includegraphics[width=0.24\linewidth]{fig2/language_changeN_Tp_K10_v50.pdf}
                \label{app-exp:lan-N-Tp-K10}
    }
    \hspace{-1em}
    \subfigure[]{
                \includegraphics[width=0.24\linewidth]{fig2/language_changeN_Tp_K20.pdf}
                \label{app-exp:lan-N-Tp-K20}
    }
    \hspace{-1em}
    \subfigure[]{
                \includegraphics[width=0.24\linewidth]{fig2/language_changeN_Tp_K30.pdf}
                \label{app-exp:lan-N-Tp-K30}
    }
    \\
    \hspace{-0.7em}
    \subfigure[]{
                \includegraphics[width=0.25\linewidth]{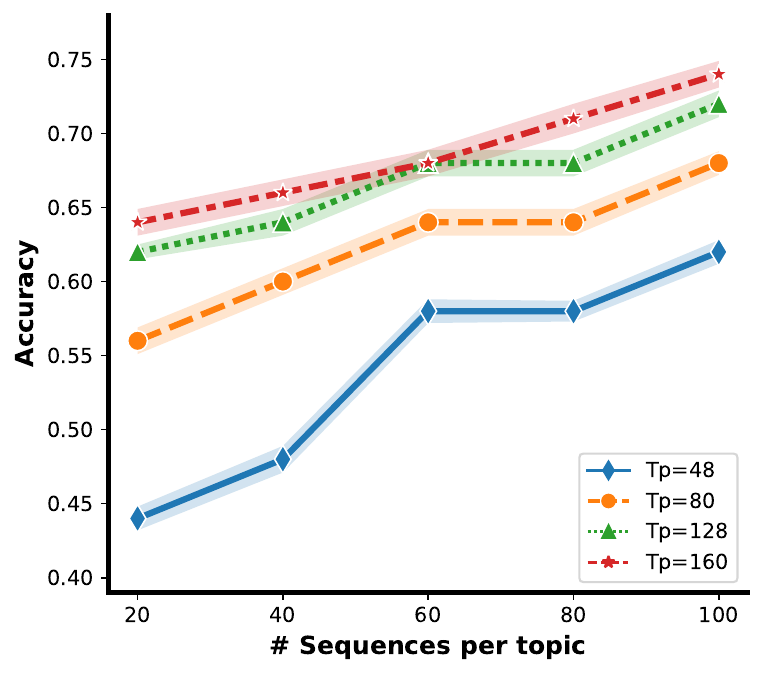}
                \label{app-exp:lan-N-Tp-v100}
    }
    \hspace{-1em}
    \subfigure[]{
                \includegraphics[width=0.24\linewidth]{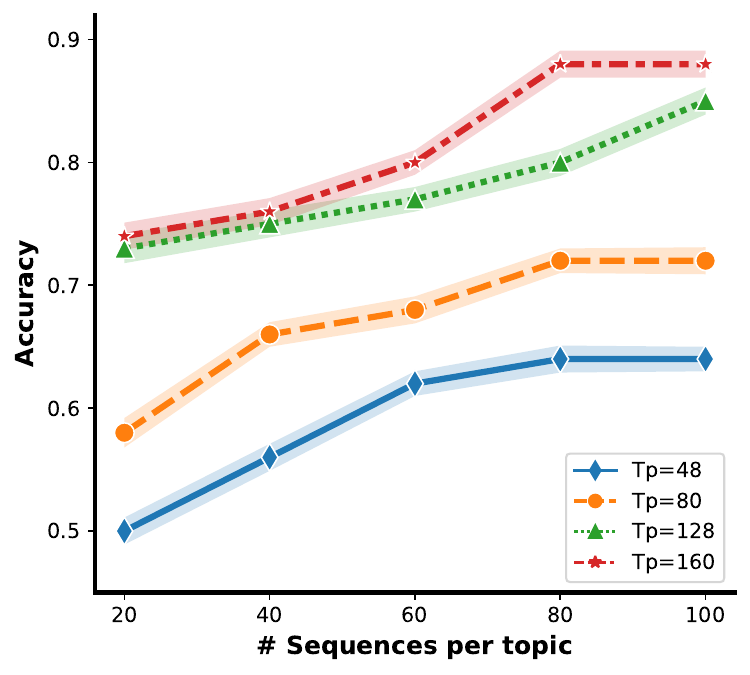}
                \label{app-exp:lan-N-Tp-v150}
    }
    \hspace{-1em}
    \subfigure[]{
                \includegraphics[width=0.24\linewidth]{fig2/language_changeN_Tp_fail.pdf}
                \label{app-exp:lan-fail}
    }
    \caption{Experiments on GINC Synthetic Language Dataset.}
    \label{fig:exp-GINC}
\end{figure*} 

Inspired by \cite{xie2021explanation}, we perform experiments on a synthetic language dataset GINC to verify our theoretical results.

\textbf{GINC Dataset.}\quad GINC is a small-scale language dataset generated from a uniform mixture of Hidden Markov Models (HMMs) over a family of topics/concepts \citep{xie2021explanation}. The generation steps are as follows: \textit{(1) Prepare transition matrix for HMMs:} The topic/concept determines the state transition matrix in HMM. For simulation, the transition matrix is randomly generated for each topic (each HMM), respectively; \textit{(2) Prepare vocabulary:} The vocabulary is generated as combinations of letters starting from `a' to `z', `aa' to `az', and so on. We can obtain vocabularies with different sizes; \textit{(3) Prepare memory matrix:} A unique matrix is created that records the mapping of vocabulary and state; \textit{(4) Generate sequences:} Given a fixed topic and an initial state, generate the next state based on the transition matrix, and then obtain the observed token using the memory matrix. In total, each sequence is sampled from a random HMM in the family.

\paragraph{Model and Hyperparameter Settings.} Our transformer model is based on the GPT-2 architecture with 4 layers, 12 attention heads, and 768-dimensional embeddings \citep{wolf2019huggingface}. We train the model for 20 epochs using the AdamW optimizer with a batch size of 8 and a linear learning rate schedule. The schedule includes a warmup phase of 1000 steps, up to the learning rate of 8e-4. Notably, we adopt a large portion of the code from \cite{xie2021explanation}. All experiments on GINC are conducted using a single 24GB NVIDIA GeForce RTX 3090.

In the following, We empirically explore the separate effects of the number of topics ($K$), number of sequences per topic ($N$), sequence length ($T$) and prompt length ($T_p$). We detail $K \in \{10,20,30\}$, $N \in \{20,40,60,80,100\}$, $T \in \{1280, 2560, 5120, 10240\}$, $T_p \in \{48, 80, 128, 160\}$, where ranging $T$ with directly masking the token that exceeds the specified length and do not taking special consideration. In totoal, we arrange groups of comparative experiments to verify that increasing $K, N, T, T_p$ individually improves the model's generalization performance as demonstrated in our Theorems. Additionally, we discuss the effect of vocabulary size and provide an interesting case involving with a failed ICL.

\paragraph{Observation (1): Separate Effects of $K$, $N$, $T$ and $T_p$.} We first present four groups of experiments \ref{app-exp:lan-N-T}-\ref{app-exp:lan-N-Tp-K30} in Figure \ref{fig:exp-GINC}. \textit{In Figure \ref{app-exp:lan-N-T}:} For pre-training data, take $K=10$ topics and generate $N \in \{20,40,60,80,100\}$ pre-training sequences/documents per topic, in addition with varying sequence length $T \in \{1280, 2560, 5120, 10240\}$. Then with the pre-trained model, test ICL performance on the prompt with $T_p=128$ prompt length. Each line exhibits a growing trend, indicating a better generalization performance with increasing sequences per topic. Comparing the four lines from bottom to top, a larger sequence length also brings better generalization. \textit{From Figure \ref{app-exp:lan-N-Tp-K10}-\ref{app-exp:lan-N-Tp-K30}}, we vary $K\in \{10,20,30\}$. Under each $K$, keep sequence length $T=10240$, with varying $N \in \{20,40,60,80,100\}$ and $T_p \in \{48, 80, 128, 160\}$. Combining these three groups of experiments, we validate the effects of $K,N,T_p$ on generalization, closely aligning our Theorems. 

\paragraph{Observation (2): Effect of Vocabulary Size and an Interesting Case that ICL Fails.} \textit{In Figure \ref{app-exp:lan-N-Tp-K10}, \ref{app-exp:lan-N-Tp-v100} and \ref{app-exp:lan-N-Tp-v150}}, We vary the vocabulary size within $\{50, 100, 150\}$. With fixed $K=10$ topics, we vary $N \in \{20,40,60,80,100\}$ and $T_p \in \{48, 80, 128, 160\}$. Apart from the observations similar to Figure \ref{app-exp:lan-N-Tp-K10}-\ref{app-exp:lan-N-Tp-K30} about $N, T_p$, we surprisingly find that a larger vocabulary size leads to higher ICL prediction accuracy. This aligns with our understanding that the number of possible token combinations in sequences grows with increased vocabulary size. It also implies that more diverse training data improves ICL performance. This is further implicitly supported by our theory, which suggests increasing the training sample size as much as possible to ensure sample diversity. Furthermore, we conduct an interesting experiment \textit{in Figure \ref{app-exp:lan-fail}}. When the pre-training data contains random transitions, the model observes all token transitions, yet ICL fails. This suggests that the pre-trained models cannot extract information when data distributions do not match the topic, thus failing to achieve ICL.

\subsection{Experiments on Real-world Language Datasets.}\label{app:language}
\begin{figure*}
    \centering
    \vspace{-1em}
    \subfigure[]{
                \includegraphics[width=0.3\linewidth]{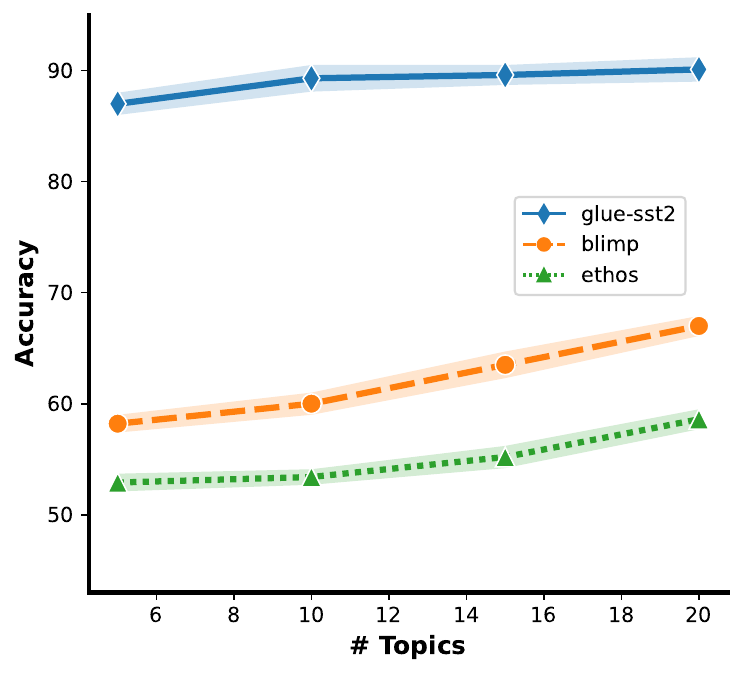}
                \label{app-exp:K}
    }
    \hspace{-1em}
    \subfigure[]{
                \includegraphics[width=0.3\linewidth]{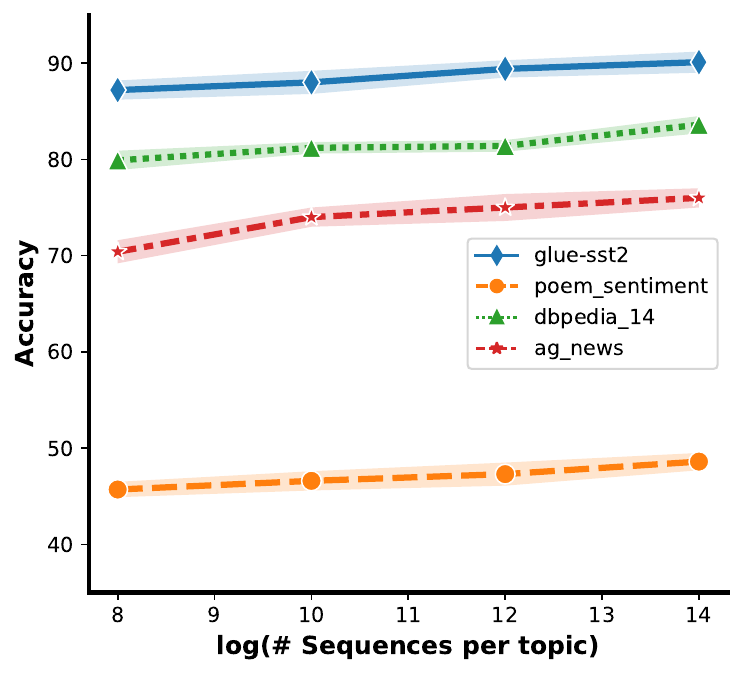}
                \label{app-exp:N}
    }
    \hspace{-1em}
    \subfigure[]{
                \includegraphics[width=0.3\linewidth]{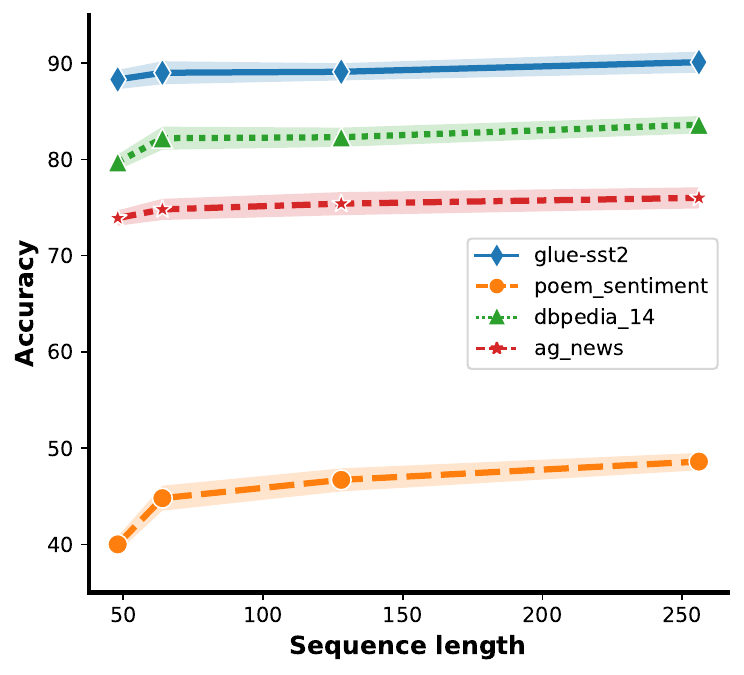}
                \label{app-exp:T}
    }
    \\
    \vspace{-1em}
    \hspace{-0.7em}
    \subfigure[]{
                \includegraphics[width=0.3\linewidth]{fig2/lossN-2.pdf}
                \label{app-exp:lossN}
    }
    \hspace{-1em}
    \subfigure[]{
                \includegraphics[width=0.3\linewidth]{fig2/lossT.pdf}
                \label{app-exp:lossT}
    }
    \hspace{-1em}
    \subfigure[]{
                \includegraphics[width=0.3\linewidth]{fig2/lossinit.pdf}
                \label{app-exp:lossinit}
    }
    \caption{Experiments on Real-world Language Dataset.}
    \label{fig:exp-real-language}
\end{figure*} 

We further perform experiments on real-world language datasets, inspired by \citep{min2021metaicl,wang2023large}. 

\paragraph{Datasets, Model and Hyperparameter Settings.} In the pre-training phase, we consider a mixture of a series of language tasks, mainly including 20 datasets. Classified by task types, including sentiment analysis (glue-sst2, poem$\_$sentiment, yelp$\_$polarity and emotion), linguistic analysis (glue-cola, blimp), text classification (ag$\_$news, dbpedia$\_$14, ethos), question answering (tweet$\_$qa) and commonsense reasoning (swag). Different datasets are considered as different topics (reflected in $K$ from our framework). In ICL phase, we test ICL performance with different datasets. All the datasets are obtained from Hugging Face. We train the GPT2-large model with a batch size of 16 and a learning rate of 1e-4 for total 30,000 iterations. Notably, we adopt a large portion of the code from \cite{wang2023large}. All experiments are conducted using four 24GB NVIDIA GeForce RTX 3090 and 40GB A100 GPUs.

In the following, we empirically explore the separate effects of the number of topics ($K$), number of sequences per topic ($N$) and sequence length ($T$). By detailing $K \in \{5,10,15,20\}$, $N \in \{2^{8}, 2^{10}, 2^{12}, 2^{14}\}$, $T \in \{48, 64, 128, 256\}$, we arrange groups of comparative experiments to verify that increasing $K, N, T$ individually improves the model's generalization performance as demonstrated in our Theorems. Additionally, we observe the impact of optimization process and prior model initialization.

\paragraph{Observation (1): Separate Effects of $K$, $N$ and $T$.} \textit{In Figure \ref{app-exp:K}}, we investigate the impact of varying the number of topics $K$. Specifically, varying $K \in \{5,10,15,20\}$, keeping fixed $N=2^{14}$ sequences per topic and sequence length $T=256$. The results show that for ICL test prompts from different datasets, increasing $K$ consistently improves ICL accuracy, as expected from our theoretical analysis. \textit{In Figure \ref{app-exp:N}}, we examine the effect of varying $N \in \{2^{8},2^{10},2^{12},2^{14}\}$, with fixed $K=20$, $T=256$. We observe that increasing $N$ leads to better performance in ICL phases, reinforcing the idea that more sequences per topic enhances model generalization and further benefits ICL. Similarly, \textit{in Figure \ref{app-exp:T}}, we explore the impact of varying $T \in \{48,64,128,256\}$ while keeping fixed $K=20$ and $N=2^{14}$. Increasing $T$ also brings better ICL performance.

\paragraph{Observation (2): Optimization Process.} Through continuous optimization trajectory analysis, our generalization bounds are also optimization-dependent. Thus beyond the influence of training data, we investigate whether optimization properties align with our theory. \textit{In Figure \ref{app-exp:lossN}}, we present four different training processes where $N \in  \{2^{8},2^{10},2^{12},2^{14}\}$ is varied, with fixed $K=20$ and $T=256$. This setting mirrors Figure \ref{app-exp:N} where we have demonstrated that increasing $N$ leads to better generalization performance. Furthermore, we observe that larger $N$ also brings faster convergence during training. This aligns with our Theorems that with a smaller number of training iterations $T^\prime$ to converge, \textit{i.e.}, the model trains faster, and further generalizes better. Similarly, \textit{Figure \ref{app-exp:lossT}} takes the same configuration with Figure \ref{app-exp:T}, which also exhibits the connection between optimization and generalization that `trains faster, generalize better'. 

\paragraph{Observation (3): Prior Model Initialization.} Based on our generalization analysis with a data-dependent prior, we propose that leveraging prior model initialization could accelerate model training. Specifically, consider the following setup: our training data consists of $K=20$ pre-training topics, $N=2^{14}$ training sequences per topic and sequence length $T=256$. 
\begin{itemize}
    \item \textbf{Step 1:} Train the GPT2-small model for 15,000 steps using $K=5$ pre-training topics, $N=2^{14}$ training sequences per topic and sequence length $T=256$.
    \item \textbf{Step 2:} Transfer the weights from GPT2-small model to the corresponding weight matrices of GPT2-large, ensuring dimension compatibility. Initialize the weights randomly for the additional transformer layers in GPT2-large.
    \item \textbf{Step 3:} Train the GPT2-large model for an additional 30,000 steps using the full pre-training data ($K=20$, $N=2^{14}$, $T=256$)
\end{itemize}
According to our experimental results, the random initialization regime with all pre-training data requires nearly \textit{\textbf{7 hours}} on four A100 GPUs to complete 30,000 steps. However, under the prior model initialization regime, where a smaller model is used for warmup and serves as the prior model initialization for the larger model, training the GPT2-large model takes only \textbf{\textit{4 hours}} for 30,000 steps on four A100 GPUs under the same setting of $K,N,T$ (with 0.5 hours needed for training the GPT2-small model for 15,000 steps).

Furthermore, as shown in the optimization loss curve in Figure \ref{app-exp:lossinit}, the prior model initialization not only accelerates training but also stablizes the training process (especially at the early stage), leading to comparable or even improved model performance. This approach demonstrates how effectively leveraging prior knowledge can contribute to the training process and performance, supporting the KL term in our generalization bounds and presenting more practical insights.  
\newpage
\section{Practical Implications}\label{app:practical}
We first provide guidance for the quantitative selection of $K$, $N$ and $T$ based on the upper bound of expected loss described by theoretical results.

\paragraph{Increase the Number of Pre-training Topics.} For the ICL ability of LLMs, it relies on examples within a given prompt to adjust its behavior, so more topics (or tasks) provide richer information and learning opportunities. As the number of tasks increases, the model is able to learn from a broader range of contexts, thereby enhancing its generalization ability. This is different from general multi-task learning that it aims to learn multiple tasks simultaneously and if the tasks are too different or unrelated, it may lead to task interference, thereby reducing overall performance (i.e., under multitask learning, having more topics does not necessarily lead to better model generalization performance). Instead, our defined topics satisfy the assumption of topic distribution, implying a correlation between pre-training and ICL topics. This also leads to our conclusion that `more topics lead to better model generalization performance', which differs from general multi-task learning. Furthermore, when increasing the number of topics, our goal is to cover as many different types of topics as possible, to guarantee the diversity of topics, which will enrich the model's learning experience and help the model better understand new contexts with unseen topics. It potentially explains why one can improve ICL performance by selecting appropriate kind of `few-shot’ examples or exemplars to optimize performance (i.e. retrieving shots best suited to the topic/task).

\paragraph{Expand the Scale of Pre-training Sequences.} Using a large amount of training sequences per topic can provide more topic information, which helps the model better understand the language patterns for this topic. This guarantees the ability to perform well on the new sequences with a seen topic. This opinion is similar to the classical machine learning problem, where more training data helps the model perform excellently on the test data. 

\paragraph{Increase Sequence Length or Prompt Length.} Training the model to process longer sequences can enable it to better understand the context and details of lengthy texts, especially for topics or tasks that require an in-depth understanding of long articles, such as text summarization and extended question answering. We hold that longer sequence length help the model maintain coherence and completeness of information when dealing with complex problems.

Furthermore, in our PAC-Bayesian generalization bounds, the key term $D_{KL}(\mu \parallel \nu)$ surely offers possibilities to quantify the information contained in the model and data, thereby providing practical guidance for model training, training data selection and deduplication.

\paragraph{Practical Guidance for Model Training - Prior Model Initialization.} Typically, randomly initialized parameters follow uniform or standard normal distributions, which lack any specific information about the data. In contrast, during pre-training, we begin with a small-scale subset of data to train a prior model. The parameters of this prior model can then serve as an informative starting point for longer and more sufficient training with greatly-large-scale pre-training data. When using a data-dependent prior rather than random initializations, this results in a smaller $D_{KL}(\mu \parallel \nu)$, which in our theorems represents the distance between model posterior $\mu$ and prior $\nu$, contributing to a better \emph{generalization}. Furthermore, a lower $D_{KL}(\mu \parallel \nu)$ also enhances the \emph{optimization}, by detailing this term with continuous optimization trajectory analysis. Specifically for example in Theorem \ref{ICL-gen-topic-dependent}, when with topic-dependent prior $\nu_J$,
$
D_{KL}(\mu \parallel \nu_J) \approx \sigma^2 C(\frac{1}{N_{param}}, T^\prime)/K^\prime,
$
where $C(\frac{1}{N_{param}}, T^\prime) = \frac{\beta}{2} e^{8\beta S}(1-e^{-\frac{T^\prime}{exp(8\beta S)}})$. A smaller $D_{KL}(\mu \parallel \nu_J)$ means that this favorable initialization brings a stable training (with reduced gradient norm $\sigma$) and avoids exploring the entire parameter space (with fewer optimization iterations $T^\prime$). This aligns with our understanding that data patterns guide the model toward appropriate directions during training, reducing the likelihood of encountering unsuitable local minima or saddle points.

In total, using a data-dependent and topic-dependent prior for model initialization can significantly \textit{improve training stability, model convergence, and generalization.} This approach is particularly useful in multi-task learning, where it helps establish relevant priors for each task/topic in advance. Although employing more strategies to choose the subset $K^\prime$ can further refine the prior, excessive refinement may introduce new computational costs and efficiency trade-offs. We emphasize that even without careful data selection for prior model learning, a data-dependent prior generally outperforms random initialization. Particularly, when random initialization does not yield good performance, a data-dependent prior model may provide a new opportunity.

\paragraph{Practical Guidance for Training Data Selection and Deduplication.} It is well-known that the vast amount of data obtained from the internet serves as input for compressing world knowledge into LLMs \citep{deletang2023language}. The data quality among redundant data determines the upper limit of the performance of LLMs. Therefore, considering a data-dependent pre-training and ICL generalization framework has immense potential for guiding data. In our theory, to explicitly show the impact of data, we adopt a data-dependent and topic-dependent prior $\nu_J$ and further detail $D_{KL}(\mu \parallel \nu_J)$ with optimization analysis. We have discussed this in detail before: in \textit{`Practical Guidance for Model Training'} part, we emphasize the advantages of prior model initialization over random initialization in model training. Here, we aim to further explore its guidance for training data \textit{from the perspective of compression}. 

Specifically, we select a subset of size $K^\prime$ from the $K$ pre-training topics to estimate a prior distribution. If a smaller $K^\prime$ can estimate a prior that is very close to the posterior distribution, it indicates that the information from the $K$ topics can actually be compressed into a smaller subset of $K^\prime$ topics. This reflects the compressibility of the data, and can thus \textit{backward guide pre-training data} to further undergo data selection and deduplication, such as through topic clustering, data diversity, or information gain metrics (e.g., $D_{KL}(\nu(D) \parallel \nu(D_i))$, if this value is small, the data block $D_i$ is considered redundant and can be reduced in weight or removed to decrease the model's reliance on redundant information.) The reprocessed pretraining data may exclude some noise interference, further improving model performance, saving computational resources, and facilitating training for new models.

\section{Complete Theorems: ICL Emerges from Generalization of Pre-trained LLMs}\label{sec-app:gen}
In Section \ref{sec:gen}, we have introduced Theorem \ref{pre-gen-data-dependent} and \ref{ICL-gen-topic-dependent}. Here, in the following Section \ref{sec-app:gen-pre}, we divide Theorem \ref{pre-gen-data-dependent} into two parts: Theorem \ref{app:pre-gen} and Theorem \ref{app:pre-gen-data-dependent}. Similarly, in the following Section \ref{sec-app:gen-ICL}, we divide Theorem \ref{ICL-gen-topic-dependent} into two parts: Theorem \ref{app:ICL-gen} and Theorem \ref{app:ICL-gen-topic-dependent}.

\subsection{Generalization of Sequences: The First-Level Expectation}\label{sec-app:gen-pre}
Under finite ($K$) pre-training topics, we consider the first-level expectation where infinite sequences per topic are utilized. It describes comprehensive learning for each pre-training topic in the ideal case so that the pre-trained model can perform excellently on the seen topics in ICL phase. For this first-level expected loss $L(\theta, \mathcal{W}_{\text{pre}})$ with two partial expectation, it's represented as $\frac{1}{K}\sum_{k=1}^K\mathbb{E}_{E^{k,n}_t}\left[\KL\big(\mathbb{P}(\cdot\mid E^{k,n}_t,w_k) \parallel \mathbb{P}_\theta(\cdot\mid E^{k,n}_t,w_k)\big)\right]$ (details see in Equation \ref{eq-L-W}). The following Theorem will give the upper bound of $L(\theta, \mathcal{W}_{\text{pre}})$.

In the following Theorem, we first give an general result that KL distance between posterior $\mu$ and prior $\nu$ is kept in the upper bound of the first-level expected loss. Here, the prior is a general prior distribution rather than a data-dependent prior.

\begin{theorem}[Generalization Bound of the First-Level Expected Loss] Let the auto-regressive LLM $\mathbb{P}_\theta$ be the empirical solution of Equation $\ref{eq-L-E}$, and $\mathbb{P}(\cdot\mid w)$ denotes the true data distribution under topic $w$. Under Assumptions \ref{ass:B}, for any $0<\delta < 1$, with probability at least $1-\delta$, the first-level expected loss with $K$ topics and infinite sequences per topic, denoted by $L(\theta, \mathcal{W}_{\text{pre}})$ (see in Equation \ref{eq-L-Wpre-two-part-final-main} or Equation \ref{eq-L-W}), satisfies,
	\label{app:pre-gen}
    \begin{align*}
        \mathbb{E}_{\mu}\left[L(\theta, \mathcal{W}_{\text{pre}})\right]
        =\mathcal{O}\left\{\sqrt{\frac{\log 1/\delta}{KNT}}+\sqrt{\frac{1}{KNT}\left(\KL(\mu\parallel \nu)+\operatorname{log}\frac{1}{\delta}\right)-\epsilon_{\text{opt}}}\right\},
    \end{align*}

	where $\epsilon_{\text{opt}}$ is the optimization error (see in Equation \ref{opt}). $\mu$ and $\nu$ are the posterior and prior distribution of model parameters $\theta$, respectively. $K$, $N$ and $T$ denote the number of topics, the number of sequences per topic and the sequence length utilized in the optimization process of Equation $\ref{eq-L-E}$.
\end{theorem}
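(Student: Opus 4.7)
The plan is to apply the classical PAC-Bayesian change-of-measure argument, adapted to handle the layered expectation structure and the token-dependency intrinsic to AR-NTP. Following the decomposition spelled out in Appendix B, I would write
\begin{equation*}
L(\theta, \mathcal{W}_{\text{pre}}) = \underbrace{\big[L(\theta, \mathcal{W}_{\text{pre}}) - L'(\theta, \mathcal{W}_{\text{pre}})\big]}_{\text{Part III}} + \underbrace{\big[L'(\theta, \mathcal{W}_{\text{pre}}) - L_E(\theta, \mathcal{W}_{\text{pre}})\big]}_{\text{Part II}} + \underbrace{L_E(\theta, \mathcal{W}_{\text{pre}})}_{\text{Part I}}
\end{equation*}
and bound each piece separately. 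The empirical term is split off via $L_E(\theta, \mathcal{W}_{\text{pre}}) = L_E(\hat{\theta}, \mathcal{W}_{\text{pre}}) + \epsilon_{\text{opt}}$ (Equation \ref{opt}), which explains the appearance of $\epsilon_{\text{opt}}$ in the final bound; since $\hat{\theta}$ minimizes the empirical loss in Equation \ref{eq-argmin-theta}, the remaining $L_E(\hat{\theta}, \mathcal{W}_{\text{pre}})$ can be absorbed or treated as essentially zero in the stated rate.

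For Part II, I would use the identity from Equation \ref{eq-E-and-KL}: conditioning on the prefix $E^{k,n}_t$, the token-level loss $\log[\mathbb{P}(x^{k,n}_{t+1}|E^{k,n}_t,w_k)/\mathbb{P}_\theta(x^{k,n}_{t+1}|E^{k,n}_t,w_k)]$ has conditional mean $L'_x(\theta,w_k)$. Thus $L'(\theta,\mathcal{W}_{\text{pre}}) - L_E(\theta,\mathcal{W}_{\text{pre}})$ is, for each fixed $\theta$, a sum of conditionally centered bounded random variables across $(k,n,t)$. Applying the Donsker–Varadhan variational formula gives $\mathbb{E}_\mu[F(\theta)] \le \tfrac{1}{\lambda}\bigl[\log \mathbb{E}_\nu\,\mathbb{E}_{E} e^{\lambda F(\theta)} + \KL(\mu\parallel\nu)\bigr]$ for $F(\theta) = L'(\theta,\mathcal{W}_{\text{pre}}) - L_E(\theta,\mathcal{W}_{\text{pre}})$. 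Iterating Hoeffding-type cumulant bounds in the correct order — innermost over $x^{k,n}_{t+1}$ given the prefix (using Assumption \ref{ass:B} for bounded increments), then outward over $t$, $n$, $k$ — controls the log-MGF by $\mathcal{O}(\lambda^2 S^2/(KNT))$; optimizing $\lambda$ and inverting via Markov then yields the desired $\sqrt{(\KL(\mu\parallel\nu)+\log 1/\delta)/(KNT)}$ rate.

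For Part III, the gap $L(\theta,\mathcal{W}_{\text{pre}}) - L'(\theta,\mathcal{W}_{\text{pre}})$ replaces the empirical prefix $E^{k,n}_t$ by an expectation over $\mathbb{P}(\cdot\mid w_k)$. Here I would introduce a ghost-sequence construction: an independent copy drawn from the same conditional law, which lets me express the gap as a centered empirical process over prefixes while preserving boundedness. A second PAC-Bayes + Hoeffding round then absorbs this part into the same rate. The three pieces are finally combined by a union bound with $\delta$ split across them; the terms collapse into the advertised $\sqrt{\log(1/\delta)/(KNT)}$ plus $\sqrt{(\KL(\mu\parallel\nu) + \log 1/\delta)/(KNT) - \epsilon_{\text{opt}}}$ form.

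The principal obstacle is the token-dependency. Standard PAC-Bayesian bounds presume i.i.d.\ samples, whereas here the $T$ tokens of a sequence are generated auto-regressively, and the per-token loss depends on the whole history. The fix is to exploit the martingale-like structure revealed by Equation \ref{eq-E-and-KL} and to carry out the Hoeffding/cumulant bound conditionally, layer by layer: topic $\to$ sequence $\to$ prefix $\to$ next token. Keeping track of which $\sigma$-algebra each centering is valid under, and coupling this with the ghost-sequence trick so that the resulting process concentrates at rate $1/\sqrt{KNT}$ rather than a looser $1/\sqrt{K}$, is the step that requires the most care; the KL-vs-TV tool (Lemma \ref{lemma:KL-TV-bound}) will be used downstream in Theorem \ref{ICL-gen-topic-dependent}, but here the analysis stays on the KL side via the log-likelihood-ratio representation.
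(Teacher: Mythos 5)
Your high-level decomposition into Parts I--III matches the paper's framing in Appendix B, and you correctly identify ghost sequences as the device for decoupling the auto-regressive token dependency. However, the mechanics you propose diverge from the paper's proof in ways that leave genuine gaps, and your plan as stated would not reproduce the form of the bound.

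First, you assert that ``the analysis stays on the KL side via the log-likelihood-ratio representation'' and that Lemma~\ref{lemma:KL-TV-bound} is only needed downstream. This is not how the paper's proof of Theorem~\ref{app:pre-gen} goes, and it obscures the key mechanism. The paper plugs the specific Donsker--Varadhan test function $T = g(\theta,w_k) - \log\mathbb{E}_{\tilde E^k}[\exp(g)\mid E^k]$, with the factor $\tfrac12$ baked into $g$, so that the ghost-sequence self-coupling term $\log\mathbb{E}_{\tilde E^k}[\exp(g)\mid E^k]$ becomes tractable precisely because Lemma~\ref{lemma:TV} converts it into a sum of squared TV distances. The argument therefore produces a bound on $\mathbb{E}_\mu[\tfrac{1}{KNT}\sum \TV^2]$; one takes a square root by Jensen and then maps TV back to KL using Lemma~\ref{lemma:KL-TV-bound} (which is where Assumption~\ref{ass:B} is actually consumed). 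Your alternative --- iterating Hoeffding-type cumulant bounds directly on the centered log-likelihood-ratio --- skips the TV detour entirely, but then the $-\epsilon_{\text{opt}}$ term cannot land \emph{inside} the square root as the theorem requires. That placement is an artifact of bounding $\TV$ by $\sqrt{\TV^2}$ \emph{after} the $\log(\mathbb{P}_{\hat\theta}/\mathbb{P}_\theta)$ splitting has already been performed inside the Donsker--Varadhan step; your additive ``$L_E(\theta) = L_E(\hat\theta) + \epsilon_{\text{opt}}$'' with $L_E(\hat\theta)$ ``treated as essentially zero'' gives an additive $\epsilon_{\text{opt}}$ outside the root, which does not match and is not obviously rescuable.

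Second, for Part III (replacing empirical prefixes by an expectation over prefixes), you propose ``a second PAC-Bayes + Hoeffding round.'' The paper does not do this: it applies a plain high-probability Chernoff/McDiarmid-type bound for Markov chains (Proposition~\ref{prop1: chernoff}) to the already-obtained TV quantities, which is why the first term in the theorem is $\sqrt{\log(1/\delta)/(KNT)}$ with \emph{no} KL factor. A second PAC-Bayes change of measure would double the $\KL(\mu\parallel\nu)$ term and break the advertised form. Also, be careful with Assumption~\ref{ass:B}: it is stated as a one-sided upper bound $\log(\mathbb{P}/\mathbb{P}_\theta)\le S$, so the ``bounded increments'' you invoke for a Hoeffding/Azuma argument are not immediately available; the paper's route sidesteps this because TV is intrinsically bounded by $1$. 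In short, the skeleton of your plan is right, but the two load-bearing steps --- the TV-producing Donsker--Varadhan function and the Markov-chain concentration for prefixes --- are missing, and their substitutes would not yield the stated bound.
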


\begin{remark}\label{remark-app: the1}
	Theorem \ref{app:pre-gen} reveals that when considering the first-level expectation over sequences, the expected loss achieves $\frac{1}{\sqrt{KNT}}$ rate. This indicates that an increase in the number of training topics ($K$), the number of sequences per topic ($N$), and the sequence length ($T$) leads to a reduction in the first-level expected loss, aligning with both intuitive understanding and empirical evidence. Note that the length of different sequences $T_{k,n}$ vary from each other which implies the potential for sampling imbalanced sequences from various topics. Moreover, the number of sequences $N_k$ per topic can also be different. If sequences under a specific theme are notably short, balancing can be achieved by sampling a greater number of these sequences, i.e. increasing $N_k$, ensuring that the product of $N_kT_{k,n}$ for all themes maintains a level of equilibrium. This approach ensures that the final representation of $NT$ conveys an averaged meaning. If certain themes yield fewer sequences, it indicates a lower probability of occurrence for those themes. Under the framework of theme distribution (as defined by the second level expectation), the contribution of such themes (smaller $N_kT_{k,n}$) to $NT$ won't be dominant. In conclusion, themes with higher occurrence probabilities are predominant and more sequences can be more readily sampled. Even if these sequences are shorter, we can compensate by sampling more sequences to achieve an average level $NT$ which corresponds to our result.
\end{remark}

In the next Theorem, we carefully consider a data-dependent prior \citep{li2019generalization}, replacing $\KL(\mu\parallel\nu)$ with $\KL(\mu\parallel\nu_J)$ in Theorem \ref{app:pre-gen} and further deriving a more detailed upper bound.

\paragraph{Data-Dependent Prior.} We employ the following method for generating a data-dependent prior \citep{li2019generalization}. Let $J$ include $N^{\prime}$ indexes uniformly sampled from $[N]$ without replacement and $I$ is $[N]\setminus J$, splitting pre-training sequences under fixed topic $w_k$ into two parts $E^k_I$ and $E^k_J$. Under all pre-training topics, we have $E_I=\{E^k_I\}_{k=1}^K$ and $E_J=\{E^k_J\}_{k=1}^K$. The prior distribution of model parameters $\theta$ depends on the subset $E_J$, which is denoted by $\nu_J$ and the posterior distribution of $\theta$ depends on $E_I$ denoted by $\mu$. Thus, a parallel training process with $E_J$ are conducted, and after that, a data-dependent prior $\nu_J$ will be obtained. We emphasize that extracting a portion of training data to learn the prior distribution of model parameters has significant implications. Specifically, this approach allows the prior to adapt to specific features and trends in the data, enhancing the model's ability to capture and learn from these nuances. In addition, even if we sacrifice a portion of the training data, the prior will lead to a posterior distribution that is better aligned with the actual data distribution. In high-dimensional spaces, a data-dependent prior provides a more informed starting point.

\begin{theorem}[Data-Dependent and Optimization-Dependent Generalization Bound of the First-Level Expected Loss] Under the conditions maintained in Theorem \ref{app:pre-gen} and Assumption \ref{ass: lipschitz}, when considering data-dependent prior $\mu_J$, for any $0<\delta < 1$, with probability at least $1-\delta$, the first-level expected loss with $K$ topics and infinite sequences per topic, denoted by $L(\theta, \mathcal{W}_{\text{pre}})$ (see in Equation \ref{eq-L-Wpre-two-part-final-main} or Equation \ref{eq-L-W}), satisfies,
	\label{app:pre-gen-data-dependent}
	\begin{align*}
			\mathbb{E}_{\mu}\left[L(\theta, \mathcal{W}_{\text{pre}})\right]
			=\mathcal{O}\left\{\sqrt{\frac{ \log 1/\delta}{K(N-N^\prime)T}} + \sqrt{\frac{1}{K(N-N^\prime)T}\left(\KL(\mu\parallel\nu_J)+\log \frac{1}{\delta}\right)-\epsilon_{\text{opt}}}\right\},
	\end{align*}
	then detailing the term $\KL(\mu \parallel \nu_J)$, $L(\theta, \mathcal{W}_{\text{pre}})$ further satisfies,
	\begin{align}\label{app-theF3}
			\mathcal{O}\left\{\sqrt{\frac{\log 1/\delta}{K(N-N^\prime)T}}+\sqrt{\frac{1}{K(N-N^\prime)T}\left[\frac{L^2C(\frac{1}{N_{\text{param}}},T^\prime)}{N^\prime}+\log \frac{1}{\delta}\right]-\epsilon_{\text{opt}}}\right\},
	\end{align}
    where $C(\frac{1}{N_{\text{param}}},T^\prime)=\frac{\beta}{2}e^{8\beta S}\left(1-e^{-\frac{{T^\prime}}{\exp(8\beta S)}}\right)$. $\epsilon_{\text{opt}}$ is the optimization error (see in Equation \ref{opt}). $K$, $N (N^\prime)$ and $T$ denote the number of topics, the number of sequences per topic and the sequence length utilized in the optimization process of Equation $\ref{eq-L-E}$. $T^\prime$ denotes the total training iterations. $N_{\text{param}}$ denotes the number of model parameters.
\end{theorem}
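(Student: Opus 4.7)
The plan is to build on Theorem~\ref{app:pre-gen} in two stages: first, instantiate its PAC-Bayesian bound with the data-dependent prior $\nu_J$ in place of a generic $\nu$; second, bound the resulting $\KL(\mu\parallel\nu_J)$ explicitly by modelling SGD as a continuous-time Langevin diffusion and comparing the two parallel training trajectories (one on $E_I$, one on $E_J$). The first stage yields the intermediate expression in the theorem statement, and the second stage refines the KL term into $L^{2}C(1/N_{\text{param}},T')/N'$.

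For stage one, the key observation is that $E_I$ and $E_J$ partition the per-topic sequences, so $\nu_J$, being a function only of $E_J$, is independent of $E_I$ conditionally on the partition. I would therefore condition on $E_J$, regard $\nu_J$ as a fixed prior, and apply Theorem~\ref{app:pre-gen} to the restricted empirical loss on $E_I$. Since $|E_I|=K(N-N')$ sequences of length $T$ remain, replacing $N$ with $N-N'$ and $\nu$ with $\nu_J$ in the previous theorem immediately gives the first displayed bound. Taking an outer expectation over $E_J$ and using the independence above preserves the probability statement with parameter $\delta$.

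For stage two, I would exploit Assumption~\ref{ass: lipschitz} together with Assumption~\ref{ass:B} to write the SGD iterates $\theta_t$ as a discretised Langevin SDE of the form
\begin{equation*}
d\theta_t=-\nabla L_{E^{\cdot}}(\theta_t)\,dt+\sqrt{2\beta^{-1}}\,dW_t,
\end{equation*}
with $\beta$ an inverse-temperature parameter set by the learning rate and noise scale. Two coupled SDEs, one driven by gradients on $E_I$ and one by gradients on $E_J$, produce distributions $\mu$ and $\nu_J$ respectively. Girsanov's theorem bounds the KL divergence between their path measures by $\tfrac{\beta}{2}\int_0^{T'}\|\nabla L_{E_I}(\theta_s)-\nabla L_{E_J}(\theta_s)\|^{2}\,ds$. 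Assumption~\ref{ass: lipschitz} controls each gradient by $L$, and the average over the $N'$ independent sequences contributing to the split yields a per-step squared difference of order $L^{2}/N'$. A Grönwall argument, in which the bound $S$ from Assumption~\ref{ass:B} governs the effective contraction rate of the drift, turns the naive linear-in-$T'$ bound into the saturating form $\tfrac{\beta}{2}e^{8\beta S}(1-e^{-T'/\exp(8\beta S)})$, giving precisely $C(1/N_{\text{param}},T')$ and closing the bound.

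The main obstacle is the second stage: simply invoking Girsanov yields a KL bound that grows linearly in $T'$, whereas the target $C(1/N_{\text{param}},T')$ is bounded in $T'$. Recovering the exponential saturation requires replacing the crude integration by a contraction estimate on the two SDE flows, using Assumption~\ref{ass:B} to get an exponential-in-$S$ bound on the log-Sobolev or Poincaré constant of the stationary measure, and then applying a Bakry–Émery/Wasserstein-contraction argument to prevent the KL from blowing up with training time. Translating $\beta$ and the effective dimension into the $N_{\text{param}}$ dependence, and tracking the $e^{8\beta S}$ prefactor carefully through the coupling, is where the technical calculation lives; the earlier analysis from the proof of Theorem~\ref{pre-gen-data-dependent} in Appendix~\ref{appendix-the-2} should supply the necessary machinery.
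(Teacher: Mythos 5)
Your two-stage plan is structurally the same as the paper's: stage one is exactly right (condition on $E_J$, treat $\nu_J$ as a fixed prior independent of $E_I$, and re-invoke Theorem~\ref{app:pre-gen} with $N$ replaced by $N-N'$), and stage two correctly identifies the key tools, namely a continuous Langevin comparison of the two parallel training trajectories and an exponential-in-$S$ log-Sobolev constant to turn linear growth in $T'$ into the saturating factor $C(1/N_{\text{param}},T')$.

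Where your sketch diverges from what the paper actually does is in the mechanism for stage two. You begin with a Girsanov bound on the KL between \emph{path} measures, which by data processing dominates $\KL(p_{T'}\parallel q_{T'})$ but is inherently linear in $T'$; you then flag this as an obstacle and propose patching it with a Bakry--\'Emery/Wasserstein contraction. The paper never invokes Girsanov. Instead it computes $\frac{\mathrm{d}}{\mathrm{d}t}\KL(p_t\parallel q_t)$ directly from the two Fokker--Planck equations, obtaining a negative relative-Fisher-information dissipation term together with a gradient-mismatch source term; Lemma~\ref{lemma:LSI} converts the dissipation into $-\alpha\KL(p_t\parallel q_t)$ with $\alpha=\exp(-8\beta S)$, and Gr\"onwall then yields the integral with the exponential discount $e^{\alpha(t-T')}$, hence the bounded form. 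The two routes are closely related (LSI and Bakry--\'Emery are tied), but a referee checking your write-up would expect the Fokker--Planck computation, since Girsanov alone cannot produce the $e^{\alpha(t-T')}$ weight inside the integral.

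Two further points your sketch glosses over. First, the $L^2/N'$ scaling of $\mathbb{E}_{\theta_t}[\|\nabla L_{E_I}-\nabla L_{E_J}\|_2^2]$ is obtained in the paper as a high-probability statement via a McDiarmid-type concentration inequality for sampling $N'$ indices without replacement (Lemma~\ref{lemma:mc-data-dependent}), applied to the function $G(J)$; your appeal to averaging over $N'$ sequences captures the intuition but omits the concentration step that makes it compatible with the ``with probability $1-\delta$'' statement. Second, the $N_{\text{param}}$ dependence of $C(\tfrac{1}{N_{\text{param}}},T')$ enters solely through the scaling-law reading of the bound $S$ in Assumption~\ref{ass:B} (i.e.\ $S\approx(N_c/N_{\text{param}})^{\alpha_N}$), not through $\beta$ or an ``effective dimension'' as your last paragraph suggests. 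Neither gap is fatal, but both need to be filled to make the argument rigorous.
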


\begin{remark}\label{remark-app: the2}
	The PAC-Bayesian generalization bound of the first-level expected loss can be bounded by the KL divergence between the distribution of the model obtained by the real training process and data-dependent prior, i.e. $\KL(\mu \parallel \nu_J)$. Analyzing the continuous Langevin dynamic of model parameters $\theta$, Fokker-Planck Equation is used to describe the KL distance between two probability density function of two optimization processes, furthermore, referring to the proof of Lemma G.5 in \cite{li2019generalization}, we demonstrate that the integral of the gradient difference of $\big\|\nabla L_{E_I}(\theta_t, \mathcal{W}_{\text{pre}})-\nabla L_{E_J}(\theta_t, \mathcal{W}_{\text{pre}})\big\|_2^2$. Consequently, we bound $\KL(\mu \parallel \nu_J)$ with $\frac{L^2C(\frac{1}{N_{\text{param}}},T^\prime)}{N^\prime}$, which is related to optimization algorithm. As $T^\prime$ increases, $C(\beta, T^\prime)$ increases, i.e., the generalization error increases. This reflects the influence of total training iterations $T^\prime$ on testing loss, corresponding to the classical viewpoint `train faster, generalize better’ \citep{hardt2016train, lei2020fine, zhang2022stability}. In addition, the constant $L$ related to optimization reflects that the upper bound of the gradient of AR-NTP loss also impacts the generalization performance. Observing the derived upper bound, we notice that the last term, $\sqrt{\frac{\log 1/\delta}{K(N-N^\prime)T}} \sim \frac{1}{\sqrt{K(N-N^\prime)T}}$, provides similar insights to $\frac{1}{\sqrt{KNT}}$ in Theorem \ref{app:pre-gen}. In total, by detailing the KL divergence, we establish a more refined bound which is data-dependent and optimization-dependent.
\end{remark}

\textbf{In summary, the proof of Theorem \ref{pre-gen-data-dependent} is provided in Appendix \ref{appendix-the-1} and Appendix \ref{appendix-the-2}.}

\subsection{Generalization of Sequences and Topics: Two-Level Expectation}\label{sec-app:gen-ICL}
Up to now, we have analyzed the first-level expected loss with $K$ topics and infinite sequences per topic. In this ideal case, the pre-trained LLM can perform excellently on the new test prompt under seen topics in ICL phase. In this section, we use similar techniques to further consider the second level expectation with infinite topics, so that the pre-trained LLM could perform well on unseen topics under the topic distribution assumption. At this moment, ICL emerges from the generalization of sequences and topics.

We first give a general result in Theorem \ref{app:ICL-gen} with $\KL(\rho(\theta)\parallel \pi(\theta))$, which extends beyond Theorem \ref{app:pre-gen-data-dependent} by incorporating infinite topics.

\begin{theorem}[Data-Dependent and Optimization-Dependent Generalization Bound of the Two-Level Expected Loss]
	\label{app:ICL-gen} Let the auto-regressive LLM $\mathbb{P}_\theta$ be the empirical solution of Equation $\ref{eq-L-E}$, and $\mathbb{P}(\cdot\mid w)$ is the true data distribution under topic $w$. Under Assumptions \ref{ass:B}, \ref{ass: lipschitz} and \ref{ass: lipschitz-2}, for any $0<\delta < 1$, with probability at least $1-\delta$, the two-level expected loss (population loss) with infinite topics and infinite sequences per topic, denoted by $L(\theta)$ (see in Equation \ref{eq-L-ICL-final}), satisfies,
		\begin{align*}
			\mathbb{E}_{\mu}[L(\theta)]
			=\mathcal{O}\biggl\{\sqrt{\frac{1}{KT_p}}\left(\KL(\mu\parallel \nu)+\log \frac{1}{\delta}\right)+U(\mathcal{W}_{\text{pre}},K,N,N^\prime,T)\biggr\},
		\end{align*}
	where $U(\mathcal{W}_{\text{pre}},K,N,N^\prime,T)$ denotes the right hand of equality \ref{the3-right} or equality \ref{app-theF3}. $\mu$ and $\nu$ are the posterior and prior distribution of model parameters $\theta$, respectively. $K$, $N (N^\prime)$ and $T$ denote the number of topics, the number of sequences per topic and the sequence length utilized in the optimization process of Equation $\ref{eq-L-E}$.
\end{theorem}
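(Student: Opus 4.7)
The plan is to use the four-part decomposition of $L(\theta)$ displayed in Equation \ref{app-eq-L-decompose}, namely writing
\begin{equation*}
L(\theta) = \text{gen}_{\text{topic}} + \text{gen}_{\text{seq-2}} + \text{gen}_{\text{seq-1}} + L_E(\theta,\mathcal{W}_{\text{pre}}),
\end{equation*}
and then to observe that the last three summands together are exactly $L(\theta,\mathcal{W}_{\text{pre}})$, which is already controlled by Theorem \ref{app:pre-gen-data-dependent}. This furnishes the term $U(\mathcal{W}_{\text{pre}},K,N,N^\prime,T)$ in the claimed bound once we take $\mathbb{E}_\mu$ on both sides. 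What remains is the outer topic-level generalization gap $\text{gen}_{\text{topic}} = L(\theta) - L(\theta,\mathcal{W}_{\text{pre}})$, for which a PAC-Bayesian argument over the topic distribution $\mathbb{P}_{\mathcal{W}}$ is needed.

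For the topic-level gap, I would first rewrite $L(\theta,\mathcal{W}_{\text{pre}})$ and $L(\theta)$ as an empirical and population average of a per-topic functional $\ell(w,\theta) \triangleq \mathbb{E}_{\text{prompt}}\KL(\mathbb{P}(\cdot\mid\text{prompt},w)\parallel\mathbb{P}_\theta(\cdot\mid\text{prompt},w))$ computed over prompts of length $T_p$. The $K$ pre-training topics are \emph{i.i.d.} draws from $\mathbb{P}_{\mathcal{W}}$, so the gap is a PAC-Bayesian deviation between $\mathbb{E}_w[\ell(w,\theta)]$ and its $K$-sample empirical mean. I would then invoke the standard McAllester/Donsker--Varadhan PAC-Bayesian inequality with posterior $\mu$ and prior $\nu$, yielding a bound of the form $\sqrt{(\KL(\mu\parallel\nu)+\log(1/\delta))/K}$ times the effective range of $\ell(w,\theta)$. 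The factor $1/\sqrt{T_p}$ appears precisely because $\ell(w,\theta)$ is itself a $T_p$-token average of conditional KL terms; unrolling this average and then applying the Pinsker-type KL--TV relation (referenced as Lemma \ref{lemma:KL-TV-bound} in the paper) converts each conditional KL term into a per-token TV distance, after which the token-averaging structure produces the additional $1/\sqrt{T_p}$ concentration gain.

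The main obstacle I anticipate is the \emph{prompt token-dependency}: within one prompt, the tokens $x_1,\dots,x_{T_p}$ are not independent, so the $T_p$-gain is not an immediate application of a bounded-difference or Hoeffding-type inequality. I would handle this through the ``ghost sequence'' construction already used for Theorem \ref{pre-gen-data-dependent} (summarized in Appendix \ref{appendix-the-1}): for each prefix length $t$, condition on $\text{prompt}_t$ and take expectation only over $x_{t+1}\sim\mathbb{P}(\cdot\mid\text{prompt}_t,w)$, turning the inner object into a KL divergence via Equation \ref{eq-E-and-KL}; then average over prefix sequences and topics. Assumption \ref{ass:B} controls the range $S$ of each term uniformly, enabling a sub-Gaussian/Hoeffding-type PAC-Bayesian concentration over the $K$ \emph{i.i.d.} topics. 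Combining this topic bound with Theorem \ref{app:pre-gen-data-dependent} via the decomposition, and taking $\mathbb{E}_\mu$ throughout, yields the stated $\mathcal{O}\{\sqrt{1/(KT_p)}(\KL(\mu\parallel\nu)+\log(1/\delta)) + U(\mathcal{W}_{\text{pre}},K,N,N^\prime,T)\}$ bound.

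A secondary but not trivial step will be checking that the per-topic functional $\ell(w,\theta)$ is measurable and bounded under Assumption \ref{ass:B} so that the PAC-Bayesian inequality genuinely applies (the logarithmic ratio bound $S$ gives the required uniform bound on the integrand), and that the prior $\nu$ may be left generic at this stage — the specialization to the topic-dependent prior $\nu_J$ that leads to Theorem \ref{app:ICL-gen-topic-dependent} is a separate refinement using the same SDE/Fokker--Planck argument recalled in Remark \ref{remark-app: the2}, and is not required here.
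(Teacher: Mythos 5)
Your high-level architecture matches the paper's: decompose $L(\theta)$ into the topic-level gap plus $L(\theta,\mathcal{W}_{\text{pre}})$, delegate the latter to Theorem \ref{app:pre-gen-data-dependent} to obtain $U(\mathcal{W}_{\text{pre}},K,N,N^\prime,T)$, control the topic-level gap by a PAC-Bayesian argument over the $K$ i.i.d.\ pre-training topics, and defer the topic-dependent prior $\nu_J$ to the next theorem. Where the proposal breaks down is the mechanism you offer for the $1/\sqrt{T_p}$ factor. You claim it ``appears precisely because $\ell(w,\theta)$ is itself a $T_p$-token average of conditional KL terms'' and that ``applying the Pinsker-type KL--TV relation \ldots produces the additional $1/\sqrt{T_p}$ concentration gain.'' But Lemma \ref{lemma:KL-TV-bound} merely exchanges one bounded divergence for another; it introduces no $T_p$-dependent improvement. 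More importantly, once you take the inner expectation $\mathbb{E}_{\text{prompt}_t}$ as you propose, the per-topic functional $\ell(w_k,\theta)=\frac{1}{T_p}\sum_{t}\mathbb{E}_{\text{prompt}_t}\big[\KL(\mathbb{P}(\cdot\mid\text{prompt}_t,w_k)\parallel\mathbb{P}_\theta(\cdot\mid\text{prompt}_t,w_k))\big]$ is a \emph{deterministic} quantity given $(w_k,\theta)$; averaging $T_p$ deterministic numbers reduces no variance. A standard PAC-Bayes over $K$ i.i.d.\ topics, which is where your reasoning terminates, therefore yields only $1/\sqrt{K}$, not $1/\sqrt{KT_p}$. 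For the same reason the ghost-sequence construction you invoke has nothing to bite on here: in Theorem \ref{app:pre-gen} it decouples dependence across concrete \emph{random} token draws inside a sampled sequence, but at this stage the token-level randomness has already been integrated out.

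What the paper actually uses for this step is Proposition \ref{prop0: high-prob}, a PAC-Bayesian corollary of McDiarmid's inequality for Markov chains (Lemma \ref{lemma:mc-markov}), applied over the combined index set of size $KT_p$, so that the mixing-time constant $\tau_{\min}$ absorbs the within-prompt dependency while still delivering a $\sqrt{\tau_{\min}/(KT_p)}$ rate. This also fixes the structure of the bound: Proposition \ref{prop0: high-prob} places $\KL(\mu\parallel\nu)+\log(1/\delta)$ \emph{outside} the square root, whereas the form you quote, $\sqrt{(\KL(\mu\parallel\nu)+\log(1/\delta))/K}$, has it inside — a small but real mismatch with the theorem statement. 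To repair the argument along your lines you would need to replace the final ``concentration over $K$ topics'' with a concentration over all $KT_p$ summands jointly, together with explicit quantitative control of their dependency; Proposition \ref{prop0: high-prob} is precisely that tool.
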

\begin{remark}
	The term $U(\mathcal{W}_{\text{pre}},K,N,N^\prime,T)$ comes from Theorem \ref{app:pre-gen-data-dependent} whose analysis can refer to Remark \ref{remark-app: the2}. As for the first term in the result, with order $\mathcal{O}\{\frac{1}{\sqrt{KT_p}}\}$, it illustrates the impact of training with a finite number of topics on the model's predictive ability for unseen topics in ICL. In addition with larger prompt length, ICL emerges much easier from the generalization of pre-trained LLMs. 
\end{remark}

Next, we propose topic-dependent prior whose core idea comes from data-dependent prior \citep{li2019generalization}, i.e., a portion of $K$ topics will be used for calculating model prior and other topics will be used for obtaining posterior. $\KL(\rho\parallel\pi)$ in Theorem \ref{app:ICL-gen} will be replaced by $\KL(\rho\parallel\pi_J)$ and further derives a more detailed upper bound. Since then, we can provide data-dependent, topic-dependent and optimization algorithm-dependent generalization error bound of the two-level expected loss.

\paragraph{Topic-Dependent Prior.} We employ the following method for generating a topic-dependent prior, similar to data-dependent prior \citep{li2019generalization}. We split topics into two parts and let $J$ include $K^{\prime}$ indexes uniformly sampled from $[K]$ without replacement and let $I$ be $[K]\setminus J$, then the total sequences are divided into $E^I=\{E^k\}_{k \in \mathcal{W}_{\text{pre},I}}$ and $E^J=\{E^k\}_{k \in \mathcal{W}_{\text{pre},J}}$. Assume that the posterior distribution of model parameters $\theta$ depends on $E^I$ denoted by $\rho$ and the prior distribution of $\theta$ depends on the topic subset $E^J$, which is denoted by $\pi_J$. A parallel training process is performed with $E^J$ based on the same LLM architecture, and after that, a topic-dependent prior $\pi_J$ will be obtained.

\begin{theorem}[Data-Dependent, Topic-Dependent and Optimization-Dependent Generalization Bound of the Two-Level Expected Loss] Under the conditions maintained in Theorem \ref{app:ICL-gen} and Assumption \ref{ass: lipschitz-2}, when further considering topic-dependent prior, for any $0<\delta < 1$, with probability at least $1-\delta$, the two-level expected loss (population loss) with infinite topics and infinite sequences per topic, denoted by $L(\theta)$ (see in Equation \ref{eq-L-ICL-final}), satisfies,
	\label{app:ICL-gen-topic-dependent}
    \begin{align*}
        \mathbb{E}_{\mu}\left[L(\theta) \right]
        =\mathcal{O}\biggl\{\sqrt{\frac{1}{(K-K^\prime)T_p}}\left(\KL(\mu\parallel \nu_J)+\log \frac{1}{\delta}\right)+ U(\mathcal{W}_{\text{pre}},K,N,N^\prime,T)\biggr\},
    \end{align*}
	then detailing the term $\KL(\mu \parallel \nu_J)$, $L(\theta)$ further satisfies,
	\begin{align}
			\mathcal{O}\biggl\{\sqrt{\frac{1}{(K-K^\prime)T}}\left(\frac{\sigma^2C(\frac{1}{N_{\text{param}}},{T^\prime})}{K^\prime}+\log \frac{1}{\delta}\right)
			+ U(\mathcal{W}_{\text{pre}},K,N,N^\prime,T)\biggr\}, \nonumber
	\end{align}
	where $C(\frac{1}{N_{\text{param}}},T^\prime)=\frac{\beta}{2}e^{8\beta S}\left(1-e^{-\frac{ T^\prime}{\exp(8\beta S)}}\right)$, $U(\mathcal{W}_{\text{pre}},K,N,N^\prime,T)$ denotes the right hand of equality \ref{the3-right} or equality \ref{app-the3-right}. $\mu$ and $\nu_J$ are the posterior and topic-dependent prior distribution of model parameters $\theta$, respectively. $K (K^\prime)$, $N (N^\prime)$ and $T$ denote the number of topics, the number of sequences per topic and the sequence length utilized in the optimization process of Equation $\ref{eq-L-E}$. $T^\prime$ denotes the total training iterations. $N_{\text{param}}$ denotes the number of model parameters.
\end{theorem}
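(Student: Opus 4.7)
}

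The plan is to combine the decomposition of the population loss in Equation \ref{app-eq-L-decompose} with a PAC-Bayesian argument for the topic-level generalization gap, then plug in the already-established bound for the sequence-level terms. Concretely, I would write
\begin{equation*}
\mathbb{E}_{\mu}[L(\theta)] = \underbrace{\mathbb{E}_{\mu}[L(\theta)-L(\theta,\mathcal{W}_{\text{pre}})]}_{\text{topic generalization}} + \underbrace{\mathbb{E}_{\mu}[L(\theta,\mathcal{W}_{\text{pre}})]}_{\text{first-level term}}.
\end{equation*}
By Theorem \ref{app:pre-gen-data-dependent}, the first-level term is already bounded, with probability at least $1-\delta/2$, by $U(\mathcal{W}_{\text{pre}},K,N,N^{\prime},T)$ (using a union bound to share the confidence budget with the topic part). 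What remains is to control the topic generalization gap with a PAC-Bayesian inequality using the topic-dependent prior $\nu_J$ constructed from $E^{J}$.

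For the topic generalization term, I would view each pre-training topic $w_k$ as an i.i.d.\ draw from $\mathbb{P}_{\mathcal{W}}$, and the per-topic loss $L(\theta,w_k)=\frac{1}{T_p}\sum_{t=1}^{T_p}\mathbb{E}_{\text{prompt}_t}[\KL(\mathbb{P}(\cdot\mid \text{prompt}_t,w_k)\parallel \mathbb{P}_{\theta}(\cdot\mid \text{prompt}_t,w_k))]$ as the ``loss per sample''. The key issue is that this per-topic loss is itself a KL divergence averaged over prompts of length up to $T_p$, so a direct application of bounded-difference McDiarmid/PAC-Bayes requires handling the prompt token-dependency inside each topic. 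I would invoke the KL-to-TV reduction (Lemma \ref{lemma:KL-TV-bound}) together with the ghost-sequence construction referenced in Appendix \ref{appendix-the-1} to rewrite each inner KL as a supremum of bounded test functions, yielding an effective $1/\sqrt{KT_p}$ concentration after applying a standard PAC-Bayesian inequality (e.g.\ McAllester or Maurer) on the $K-K^{\prime}$ topics in $I$ that are independent of the prior $\nu_J$. This is exactly where the $\sqrt{1/((K-K^\prime)T_p)}$ rate and the $\KL(\mu\parallel\nu_J)+\log(1/\delta)$ complexity term arise.

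The last step is to make $\KL(\mu\parallel\nu_J)$ explicit. Following the SDE / Fokker-Planck route sketched for the data-dependent prior in Theorem \ref{app:pre-gen-data-dependent}, I would model SGD on the $K$-topic and the $K^{\prime}$-topic subsets as two continuous Langevin diffusions sharing the same noise schedule, write the time-evolution of the relative entropy between the two laws via the Fokker-Planck equation, and integrate up to time $T^{\prime}$. The driving term is now $\big\|\mathbb{E}_{k}[\nabla L_{E^{k}}(\theta_t,w_k)]_{k\in I}-\mathbb{E}_{k}[\nabla L_{E^{k}}(\theta_t,w_k)]_{k\in J}\big\|^{2}$; since the average is now over topics rather than over sequences within a topic, Assumption \ref{ass: lipschitz-2} (expected-gradient bound $\sigma$, which is tighter than $L$) yields the scaling $\sigma^{2}/K^{\prime}$, giving $\KL(\mu\parallel\nu_J)=\mathcal{O}(\sigma^{2}C(1/N_{\text{param}},T^{\prime})/K^{\prime})$ with the same time-dependent factor $C(1/N_{\text{param}},T^{\prime})=\tfrac{\beta}{2}e^{8\beta S}(1-e^{-T^{\prime}/\exp(8\beta S)})$ that appears in Theorem \ref{app:pre-gen-data-dependent}. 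Substituting this back yields the claimed detailed bound.

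The main obstacle I anticipate is handling the prompt token-dependency inside the inner sum $\tfrac{1}{T_p}\sum_{t=1}^{T_p}\mathbb{E}_{\text{prompt}_t}[\cdot]$ when applying PAC-Bayes: a naive bounded-difference analysis would give $\sqrt{1/K}$ rather than $\sqrt{1/(KT_p)}$. Overcoming this requires the ghost-sequence device and the KL/TV conversion to treat each conditional prediction as an independent bounded increment along the prefix chain, exactly the same technical step that made Theorem \ref{app:pre-gen} non-trivial; I would reuse that machinery verbatim, but applied now at the topic level rather than the sequence level. Everything else (union bound over the two confidence events, combining the two displays, and simplifying constants) is routine.
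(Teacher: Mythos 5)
Your overall architecture matches the paper's: the same decomposition of $\mathbb{E}_\mu[L(\theta)]$ into a topic-level gap plus a first-level term bounded by $U(\mathcal{W}_{\text{pre}},K,N,N^\prime,T)$ via Theorem \ref{app:pre-gen-data-dependent}, and the same Langevin/Fokker--Planck/log-Sobolev computation to detail $\KL(\mu\parallel\nu_J)$, where Assumption \ref{ass: lipschitz-2} replaces $L$ by $\sigma$ and the averaging over the $K^\prime$ held-out topics yields $\sigma^2 C(\frac{1}{N_{\text{param}}},T^\prime)/K^\prime$; that part of your plan is exactly what the paper does in Appendix \ref{appendix-the-4}.

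The gap is in how you obtain the $\sqrt{1/((K-K^\prime)T_p)}$ rate for the topic generalization term. You propose to reuse the ghost-sequence device and the KL/TV conversion ``verbatim, at the topic level.'' But in the topic-level gap the inner expectation over $\text{prompt}_t$ has already been taken: each per-topic loss $L(\theta,w_k)=\frac{1}{T_p}\sum_{t}\mathbb{E}_{\text{prompt}_t}\left[\KL\big(\mathbb{P}(\cdot\mid\text{prompt}_t,w_k)\parallel\mathbb{P}_\theta(\cdot\mid\text{prompt}_t,w_k)\big)\right]$ is a deterministic functional of $(w_k,\theta)$, so there are no sampled tokens left to decouple, and the ghost-sequence construction (which decouples the conditional law of an observed token given its observed prefix) has nothing to act on. The paper instead gets the extra $1/\sqrt{T_p}$ from Proposition \ref{prop0: high-prob}, i.e.\ Donsker--Varadhan combined with the McDiarmid variant for Markov chains (Lemma \ref{lemma:mc-markov}): the $KT_p$ conditional-KL terms, each bounded via Assumption \ref{ass:B}, are treated as a chain with bounded differences of order $C/(KT_p)$ and mixing time $\tau_{\min}$, which produces the factor $\sqrt{C^2\tau_{\min}/((K-K^\prime)T_p)}$ multiplying $\KL(\mu\parallel\nu_J)+\log\frac{1}{\delta}$ (note this also explains why the KL term sits outside a square root, unlike a McAllester-type bound). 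A plain PAC-Bayes inequality over the $K-K^\prime$ i.i.d.\ topics, as you suggest, would only give a $\sqrt{1/(K-K^\prime)}$ rate, since the $T_p$ inner terms share the same $w_k$ and give no automatic variance reduction; so as written your mechanism for the $T_p$ gain would not go through, and you would need to substitute the Markov-chain concentration step (or an equivalent argument) to recover the stated bound. The remaining bookkeeping (splitting the confidence budget, combining with $U$) is fine.
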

\begin{remark}
	In Theorem \ref{app:ICL-gen-topic-dependent}, we establish a comprehensive upper bound of population loss combining the results in Theorem \ref{app:pre-gen}, \ref{app:pre-gen-data-dependent} and \ref{app:ICL-gen}.
\end{remark}

\textbf{In summary, the proof of Theorem \ref{ICL-gen-topic-dependent} is provided in Appendix \ref{appendix-the-3} and Appendix \ref{appendix-the-4}.}
\section{Proof of Theorems}
\subsection{Useful Definitions, Lemmas and Propositions}
\begin{definition}[Entropy]\label{def:Entropy} For random variable $\theta$, which takes value in $\Theta$ and its probability distribution is $\mu$, the entropy of random variable $\theta$ is
	$$H(\theta)=-\sum_{\theta \in \Theta}\mu(\theta)\log \mu(\theta)=\mathbb{E}_{\theta \sim \mu}\left[-\log \mu(\theta)\right].$$
\end{definition}

\begin{definition}[Kullback–Leibler Divergence]\label{def:KL} The Kullback–Leibler (KL) divergence between two probability distributions $\mu$ and $\nu$ is defined by
	$$\KL(\mu \parallel \nu)=\mathbb{E}_{\theta\sim\mu}\left[\log\frac{\mu(\theta)}{\nu(\theta)}\right].$$
\end{definition}

\begin{lemma}[Donsker–Varadhan representation in \cite{belghazi2018mutual} Theorem 1]
	\label{lemma:Donsker-ieq}
	The KL divergence between probability distribution $\mu$ and $\nu$ obeys the following dual representation:
	$$
	\KL(\mu\parallel \nu)=\sup_{T: \mathcal{A} \rightarrow \mathbb{R}}\left\{\mathbb{E}_\mu\big[T\big]-\log(\mathbb{E}_\nu[e^T])\right\},
	$$
	where the compact set $\mathcal{A} \subseteq \mathbb{R}^d$ is the support of distribution $\mu$ and $\nu$, and the supremum is calculated across all functions $T$ for which both expectations are finite.
\end{lemma}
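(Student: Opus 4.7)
The plan is to establish both directions of the variational identity via a single tilted-measure construction combined with the non-negativity of KL divergence. First I would fix an arbitrary measurable $T : \mathcal{A} \to \mathbb{R}$ with $\mathbb{E}_\nu[e^T] < \infty$ and introduce the Gibbs tilted measure $\nu_T$ defined by the Radon--Nikodym derivative $d\nu_T/d\nu = e^T/Z_T$, where $Z_T := \mathbb{E}_\nu[e^T]$ is the normalizing partition function. By construction $\nu_T$ is a probability measure absolutely continuous with respect to $\nu$.

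Next I would apply the chain rule for Radon--Nikodym derivatives, working under the standard assumption $\mu \ll \nu$; if this fails then $\KL(\mu \parallel \nu) = +\infty$ and the inequality direction is trivial. This gives $\log(d\mu/d\nu_T) = \log(d\mu/d\nu) - T + \log Z_T$. Taking expectation under $\mu$ yields the key decomposition $\KL(\mu \parallel \nu_T) = \KL(\mu \parallel \nu) - \mathbb{E}_\mu[T] + \log \mathbb{E}_\nu[e^T]$. Since $\KL(\mu \parallel \nu_T) \geq 0$ by Gibbs' inequality, rearranging gives $\KL(\mu \parallel \nu) \geq \mathbb{E}_\mu[T] - \log \mathbb{E}_\nu[e^T]$. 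Taking supremum over admissible $T$ produces the "$\geq$" direction of the claimed identity.

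For the matching upper bound I would exhibit a near-optimal test function. The natural candidate is $T^\star := \log(d\mu/d\nu)$: substitution gives $\mathbb{E}_\mu[T^\star] - \log \mathbb{E}_\nu[e^{T^\star}] = \KL(\mu \parallel \nu) - \log \mathbb{E}_\nu[d\mu/d\nu] = \KL(\mu \parallel \nu) - \log 1$, which matches the left-hand side exactly. Therefore the supremum over $T$ attains (or is attained in the limit by) $\KL(\mu \parallel \nu)$.

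The main obstacle is measure-theoretic regularity: the candidate $T^\star$ need not be bounded, and depending on the function class over which the supremum is formally taken it may not lie inside it. I would resolve this by approximating $T^\star$ with the truncations $T^\star_M := \max(-M, \min(M, T^\star))$ for $M \to \infty$, and using monotone and dominated convergence to show $\mathbb{E}_\mu[T^\star_M] - \log \mathbb{E}_\nu[e^{T^\star_M}] \to \KL(\mu \parallel \nu)$. This recovers the supremum within any reasonable class of bounded measurable test functions and closes the equality.
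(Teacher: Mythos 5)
The paper never proves this lemma: it is imported by citation from Belghazi et al.\ (2018), and the surrounding arguments only ever invoke the easy consequence $\KL(\mu\parallel\nu)\geq \mathbb{E}_\mu[T]-\log(\mathbb{E}_\nu[e^T])$ for a fixed admissible $T$. So there is no in-paper proof to compare against; judged on its own, your proposal is the standard Gibbs-tilting argument and is essentially correct. The lower-bound direction via $d\nu_T/d\nu=e^T/\mathbb{E}_\nu[e^T]$ and nonnegativity of $\KL(\mu\parallel\nu_T)$ is exactly right, and it is the only part the paper actually uses downstream. For the attainment direction, two details are worth making explicit: (i) you dispose of the case $\mu\not\ll\nu$ only for the ``$\geq$'' direction; to get the stated equality there you also need the supremum to be $+\infty$, which follows by taking $T=c\,\mathbf{1}_A$ for a set $A$ with $\mu(A)>0=\nu(A)$ and letting $c\to\infty$; (ii) in the truncation step, the convergence $\mathbb{E}_\mu[T^\star_M]\to\KL(\mu\parallel\nu)$ rests on the fact that the negative part of $\log(d\mu/d\nu)$ is always $\mu$-integrable (since $x\log x\geq -e^{-1}$ gives a bound of $e^{-1}$), while $\log\mathbb{E}_\nu[e^{T^\star_M}]\to 0$ needs the explicit domination $e^{T^\star_M}\leq d\mu/d\nu+1$ before dominated convergence applies; both statements remain valid when $\KL(\mu\parallel\nu)=+\infty$. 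With those two sentences added, your argument is complete and coincides with the canonical proof of the Donsker--Varadhan identity, proving strictly more than the one-sided inequality the paper relies on.
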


Let $\mathcal{F}$ be any class of functions $T:\mathcal{A} \rightarrow \mathbb{R}$ satisfying the integrability constraints of the lemma. Then for any defined function $T$, it's straightforward to get the lower-bound of the KL divergence between $\mu$ and $\nu$
$$
\KL(\mu\parallel \nu) \geq \mathbb{E}_\mu\big[T\big]-\log(\mathbb{E}_\nu[e^T]),
$$
which would be used in the proof of Theorem \ref{app:pre-gen}.

\begin{definition}[Total Variation Distance in \cite{levin2017markov}]\label{def:TV-distance} The total variation (TV) distance between two probability distributions $\mu$ and $\nu$ on events set $\mathcal{B}$ is defined by
	$$\TV(\mu,\nu)=\max_{B\in\mathcal{B}}\left|\mu(B)-\nu(B)\right|.$$
\end{definition}
This definition is explicitly probabilistic: It quantifies the divergence between $\mu$ and $\nu$ as the maximum disparity in the probabilities assigned to a single event $B$ by the two distributions.

\begin{proposition}[Total Variation Distance in \cite{levin2017markov}]\label{prop:TV-distance}
	Let $\mu$ and $\nu$ be two probability distributions on $\mathcal{A}$. Then
	$$
	\TV(\mu, \nu)=\frac{1}{2}\sum_{a\in\mathcal{A}}\left|\mu(a)-\nu(a)\right|.
	$$
\end{proposition}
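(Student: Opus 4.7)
The plan is to show that the maximum in Definition \ref{def:TV-distance} is attained on a specific canonical event and equals half the $L^1$ distance between $\mu$ and $\nu$. First I would identify the natural candidate for the extremal event: let $B^\star = \{a \in \mathcal{A} : \mu(a) \geq \nu(a)\}$, so that on $B^\star$ the difference $\mu(a) - \nu(a)$ is nonnegative and on its complement it is nonpositive. This splits the pointwise differences into their positive and negative parts.

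The key algebraic observation I would exploit is that $\mu$ and $\nu$ are both probability measures, hence $\sum_{a \in \mathcal{A}}(\mu(a) - \nu(a)) = 0$. Splitting this sum over $B^\star$ and its complement yields
$$\sum_{a \in B^\star}(\mu(a) - \nu(a)) \;=\; \sum_{a \notin B^\star}(\nu(a) - \mu(a)),$$
so the positive and negative parts of $\mu - \nu$ carry the same total mass. Since $\sum_{a} |\mu(a) - \nu(a)|$ is exactly the sum of these two equal quantities, I would immediately conclude that
$$\mu(B^\star) - \nu(B^\star) \;=\; \tfrac{1}{2}\sum_{a \in \mathcal{A}}|\mu(a) - \nu(a)|.$$

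Next I would verify that $B^\star$ is truly the extremal event. For an arbitrary event $B$, decompose $\mu(B) - \nu(B) = \sum_{a \in B \cap B^\star}(\mu(a) - \nu(a)) + \sum_{a \in B \setminus B^\star}(\mu(a) - \nu(a))$; the first sum is at most $\mu(B^\star) - \nu(B^\star)$ (adding more terms from $B^\star$ only increases a sum of nonnegatives) and the second sum is nonpositive. Hence $\mu(B) - \nu(B) \leq \mu(B^\star) - \nu(B^\star)$. A symmetric argument applied to $B^c$, or equivalently comparing against $(B^\star)^c$, yields $\nu(B) - \mu(B) \leq \mu(B^\star) - \nu(B^\star)$, so $|\mu(B) - \nu(B)|$ is uniformly controlled by this quantity with equality at $B = B^\star$. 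Combined with the displayed identity above, this gives the claim.

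There is no substantive obstacle since the result is essentially a bookkeeping argument around the Hahn-style decomposition $\mathcal{A} = B^\star \sqcup (B^\star)^c$; the only minor care point is the boundary convention for ties $\mu(a) = \nu(a)$, but such elements contribute $0$ to both sides so either convention works. (If one wanted to extend beyond the discrete case implicit in Proposition \ref{prop:TV-distance}, the only extra step would be checking measurability of $B^\star$, which follows from measurability of the density $d\mu/d(\mu+\nu) - d\nu/d(\mu+\nu)$.)
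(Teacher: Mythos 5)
Your proof is correct and follows essentially the same route as the paper: both pick the canonical event $B = \{a : \mu(a) \geq \nu(a)\}$, show it maximizes $|\mu(B)-\nu(B)|$ over all events, and use the fact that $\mu$ and $\nu$ have equal total mass to identify that maximum with half the $L^1$ distance. Your write-up is somewhat more explicit about why the positive and negative parts of $\mu-\nu$ balance, but the underlying argument is the same.
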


\begin{proof}
	Let $A$ be any event and event $B$ be $B=\{a:\mu(a)\geq\nu(a)\}$. Since $A=A\cap(B\cup B^c)=(A \cap B)\cup (A \cap B^c) $, then we have
	\begin{equation*}\label{mu-1}
		\mu(A)-\nu(A)\leq\mu(A\cap B)-\nu(A\cap B)
	\end{equation*}
	Since including more elements of $B$ cannot decrease the difference in probability, we have
	\begin{equation*}\label{mu-2}
		\mu(A\cap B)-\nu(A\cap B)\leq\mu(B)-\nu(B)
	\end{equation*}
	Combine the above two inequality, we have
	\begin{equation*}\label{mu-3}
		\mu(A)-\nu(A)\leq\mu(B)-\nu(B)
	\end{equation*}
	Similarly,
	$$
	\begin{aligned}\label{nu}
		\nu(A)-\mu(A)\leq\nu(B^{c})-\mu(B^{c}).
	\end{aligned}
	$$
	Thus
	$$
	\TV(\mu,\nu)=\frac{1}{2}\left[\mu(B)-\nu(B)+\nu(B^{c})-\mu(B^{c})\right]=\frac{1}{2}\sum_{a\in\mathcal{A}}\left|\mu(a)-\nu(a)\right|.
	$$
\end{proof}

\begin{lemma}[Lemma 22 in \cite{agarwal2020flambe}]\label{lemma:TV} For any two conditional probability distribution $\mu$ and $\nu$, we have
	$$
	\TV\left(\mu(\cdot | x), \nu(\cdot | x)\right)^2 \leq-2\log\mathbb{E}_{y\sim \mu(\cdot|x)}\left[\exp\left(-\frac{1}{2}\log \frac{\mu(y|x)}{\nu(y|x)}\right)\right].
	$$
\end{lemma}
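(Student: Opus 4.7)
The plan is to rewrite the right-hand side in a more recognizable form and then invoke two standard intermediate inequalities chained through the squared Hellinger distance. First I would simplify: since $\exp\bigl(-\tfrac{1}{2}\log(\mu(y|x)/\nu(y|x))\bigr) = \sqrt{\nu(y|x)/\mu(y|x)}$, the expectation under $\mu(\cdot|x)$ collapses to $\sum_y \sqrt{\mu(y|x)\,\nu(y|x)}$, the Bhattacharyya affinity $\mathrm{BC}(\mu(\cdot|x),\nu(\cdot|x))$. The target inequality then becomes the familiar statement
\begin{equation*}
\TV(\mu(\cdot|x),\nu(\cdot|x))^2 \;\leq\; -2\log \mathrm{BC}(\mu(\cdot|x),\nu(\cdot|x)).
\end{equation*}

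Next, writing $p = \mu(\cdot|x)$ and $q = \nu(\cdot|x)$, I would route through the squared Hellinger distance $H^2(p,q) := \sum_y (\sqrt{p(y)}-\sqrt{q(y)})^2 = 2 - 2\,\mathrm{BC}(p,q)$. The first link is a Cauchy--Schwarz estimate: factor $|p(y)-q(y)| = |\sqrt{p(y)}-\sqrt{q(y)}|\cdot(\sqrt{p(y)}+\sqrt{q(y)})$ and apply Cauchy--Schwarz together with $\sum_y (\sqrt{p(y)}+\sqrt{q(y)})^2 = 2 + 2\,\mathrm{BC}(p,q) \leq 4$ to obtain $(2\,\TV(p,q))^2 \leq 4\,H^2(p,q)$, i.e. $\TV(p,q)^2 \leq H^2(p,q)$. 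The second link is the elementary inequality $-\log(1-t) \geq t$ for $t \in [0,1)$ applied with $t = H^2(p,q)/2 \in [0,1]$, which yields $-2\log \mathrm{BC}(p,q) = -2\log(1 - H^2(p,q)/2) \geq H^2(p,q)$. Chaining the two gives $\TV(p,q)^2 \leq H^2(p,q) \leq -2\log \mathrm{BC}(p,q)$, which is exactly the claim.

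The arithmetic above is routine; the main obstacle I foresee is handling the supports of $\mu$ and $\nu$ carefully so that all the quantities are well-defined. When $\mu(y|x) = 0$ the integrand $\sqrt{\nu(y|x)/\mu(y|x)}$ is only a formal expression, and when $\nu(y|x) = 0$ while $\mu(y|x) > 0$ the log-ratio $\log(\mu/\nu)$ blows up even though $\sqrt{\mu(y|x)\,\nu(y|x)}$ stays zero. I would resolve this by restricting the expectation and all sums to the support of $\mu(\cdot|x)$: on that set $\mathrm{BC}(p,q)$ is unchanged, and both the Cauchy--Schwarz step and the log-inequality step apply verbatim. In the continuous case the identical argument works upon choosing a common dominating measure and replacing sums by integrals of the square roots of the corresponding densities.
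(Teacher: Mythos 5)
The paper does not prove this lemma itself; it imports it verbatim as Lemma 22 of Agarwal et al.\ (2020), so there is no in-paper argument to compare against. Your derivation is correct and matches the standard route used in that source: rewrite the right-hand side as $-2\log\mathrm{BC}$, pass through the squared Hellinger distance via $H^2 = 2 - 2\,\mathrm{BC}$, establish $\TV^2 \leq H^2$ by Cauchy--Schwarz, and close with $-\log(1-t) \geq t$.

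One caveat concerns the support discussion in your last paragraph. You propose to ``restrict the expectation and all sums'' to $\mathrm{supp}(\mu(\cdot|x))$ and then claim the Cauchy--Schwarz step and the log-inequality step ``apply verbatim.'' That is not quite right: the identity $H^2 = 2 - 2\,\mathrm{BC}$ requires the full sums. Restricted to $\mathrm{supp}(\mu)$, the squared Hellinger quantity becomes $1 + \nu(\mathrm{supp}(\mu)) - 2\,\mathrm{BC}$, which is strictly less than $2 - 2\,\mathrm{BC}$ whenever $\nu$ places mass off $\mathrm{supp}(\mu)$, so the log step no longer follows from your stated identity; and restricting the TV sum produces a smaller left-hand side than the one the lemma asks you to bound. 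The correct observation is simpler: the $\mu$-weighted expectation already annihilates any $y$ with $\mu(y|x)=0$ (and any $y$ with $\nu(y|x)=0$ but $\mu(y|x)>0$ contributes $\exp(-\infty)=0$), so $\mathbb{E}_{y\sim\mu(\cdot|x)}\bigl[\sqrt{\nu(y|x)/\mu(y|x)}\bigr] = \sum_y \sqrt{\mu(y|x)\nu(y|x)} = \mathrm{BC}$ with no restriction at all; then run Cauchy--Schwarz, the Hellinger identity, and the log inequality over the \emph{full} sums, where every step of your chain holds exactly as written.
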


This lemma provides an upper bound on the total variation distance, which is related to the expectation of the logarithmic ratio of two conditional probability distributions. It would be used in the proof of Theorem \ref{app:pre-gen}.

\begin{lemma}[Upper Bound of KL divergence]\label{lemma:KL-TV-bound} For any two conditional probability distribution $\mu$ and $\nu$, if $\frac{\mu(a)}{\nu(a)} \leq C$, we have
	$$
	\KL(\mu(a) \parallel \nu(a)) \leq \frac{2C\log C}{C-1}\TV(\mu(a),\nu(a)).
	$$
\end{lemma}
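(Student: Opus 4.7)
The plan is to rewrite the KL divergence as a nonlinear functional of the pointwise likelihood ratio $r(a) := \mu(a)/\nu(a)$ and then exploit the uniform bound $r(a) \leq C$ through a monotonicity argument for a cleverly chosen auxiliary function. Concretely, I would start from the identity
\[
\KL(\mu \parallel \nu) \;=\; \sum_a \mu(a) \log \frac{\mu(a)}{\nu(a)} \;=\; \sum_a \nu(a)\, \phi(r(a)), \qquad \phi(r) := r \log r,
\]
and factor out the zero of $\phi$ at $r = 1$ by writing $\phi(r) = (r-1)\, h(r)$, where $h(r) := \frac{r \log r}{r-1}$, extended by continuity to $h(1) = 1$. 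This separates a "linear deviation from $r=1$" factor, which will eventually produce the total variation, from a multiplicative factor that will be controlled by the boundedness assumption.

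The crux of the proof is showing that $h$ is nondecreasing on $(0, \infty)$. A direct differentiation gives
\[
h'(r) \;=\; \frac{r - \log r - 1}{(r-1)^2},
\]
and the numerator $g(r) := r - \log r - 1$ has $g'(r) = 1 - 1/r$, so $g$ attains its unique minimum at $r = 1$ with $g(1) = 0$; hence $g(r) \geq 0$ and $h'(r) \geq 0$ for all $r > 0$. With monotonicity in hand, the assumption $r(a) \leq C$ immediately gives $h(r(a)) \leq h(C) = \frac{C \log C}{C-1}$, so that
\[
|\phi(r(a))| \;=\; |r(a) - 1|\cdot h(r(a)) \;\leq\; \frac{C \log C}{C-1}\, |r(a) - 1|.
\]

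Combining the pointwise bound with the $L^1$ representation of total variation from Proposition \ref{prop:TV-distance} then closes the argument:
\[
\KL(\mu \parallel \nu) \;\leq\; \sum_a \nu(a)\, |\phi(r(a))| \;\leq\; \frac{C \log C}{C-1} \sum_a \nu(a)\, |r(a) - 1| \;=\; \frac{C \log C}{C-1} \sum_a |\mu(a) - \nu(a)| \;=\; \frac{2 C \log C}{C-1}\, \TV(\mu, \nu),
\]
where the last equality uses the paper's convention $\TV(\mu,\nu) = \tfrac{1}{2}\sum_a |\mu(a) - \nu(a)|$. The main obstacle is really just the monotonicity of $h$; once that is established, every other line is mechanical. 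One minor care point is the set where $\mu(a) = 0$ (so $r(a) = 0$ and $\phi(r(a)) = 0$ under the convention $0 \log 0 = 0$): these indices contribute nothing to either side and can be discarded from the sum at the outset.
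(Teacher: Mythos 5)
Your proof is correct and takes essentially the same route as the paper: both reduce the claim to a pointwise bound on the likelihood ratio $r=\mu(a)/\nu(a)$, and both hinge on the monotonicity of $t\log t/(t-1)$ on $(0,C]$, which you verify explicitly via $h'(r)=(r-\log r-1)/(r-1)^2\ge 0$ whereas the paper merely asserts it. The one thing worth adding is a sentence noting $h(r)\ge 0$ on $(0,\infty)$ (immediate since $r\log r$ and $r-1$ share a sign, or from $h$ nondecreasing together with $h(0^+)=0$), which is what licenses writing $|\phi(r)|=|r-1|\,h(r)$ and using $h(r)\le h(C)$ directly rather than $|h(r)|\le h(C)$.
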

This lemma provides the relationship between KL divergence and TV distance.

\begin{proof}
	Let $f(t)=\log t$, $g(t)=|\frac{1}{t}-1|$. According to the definition of KL divergence and TV distance (see in \ref{def:KL} and \ref{prop:TV-distance}), we have
	$$
	\begin{aligned}
		&\KL(\mu(a) \parallel \nu(a)) = \mathbb{E}_{a \sim \mu}\left[\log \frac{\mu(a)}{\nu(a)}\right] = \mathbb{E}_{a \sim \mu}\left[f\left(\frac{\mu(a)}{\nu(a)}\right)\right] \\
		&\TV(\mu(a),\nu(a)) =\frac{1}{2}\sum_a |\mu(a)-\nu(a)| =\frac{1}{2}\sum_a \mu(a)\left|1-\frac{\nu(a)}{\mu(a)}\right| = \frac{1}{2}\mathbb{E}_{a \sim \mu}\left[g\left(\frac{\mu(a)}{\nu(a)}\right)\right] \\
	\end{aligned}
	$$ 
	For $0 < t \leq C (t \neq 1)$ , we have
	$$
	\sup_{0 < t \leq C, t \neq 1}\frac{f(t)}{g(t)} = \sup_{0< t \leq C, t \neq 1}\frac{\log t}{|\frac{1}{t}-1|} = \sup_{1< t \leq C}\frac{t\log t}{t-1}
	$$
	Based on the derivative chain rule, we have that if $1< t \leq C$, $\frac{t\log t}{t-1} \leq \frac{C\log C}{C-1}$.
	Thus, we conclude that
	$$
	\KL(\mu(a) \parallel \nu(a)) \leq \frac{2C\log C}{C-1}\TV(\mu(a),\nu(a)).
	$$
\end{proof}
\begin{definition}[Fokker-Planck Equation in \cite{mou2018generalization}]\label{def:Fokker} Let $\pi_t$ be the probability density function of distribution $\mu_t$, then Fokker-Planck Equation describes the evolution of $\pi_t$:
	$$\frac{\partial \pi_t}{\partial t}=\frac1\beta\Delta \pi_t-{\nabla}\cdot(\pi_t\nabla L_{E}(\theta_{t-1}, \mathcal{W}_{\text{pre}}) )$$
	where $\nabla$ is gradient operator and $\Delta$ is Laplace operator.
\end{definition}

\begin{definition}[Gradient Langevin Dynamics and Continuous Langevin Dynamic \cite{li2019generalization}] \label{def:GLD-CLD}
	LLMs perform Stochastic Gradient Descent (SGD) as optimization algorithm to update parameters $\theta$ in order to get the minimum $\hat{\theta}$. SGD can be viewed as gradient descent addition with gradient noise between full batch gradient and single/mini-batch gradient \citep{wang2022two}. The full batch gradient with $\theta_{t-1}$ can be denoted as $\nabla L_{E}(\theta_{t-1}, \mathcal{W}_{\text{pre}})$, and assume that the gradient noise follows an isotropic Gaussian distribution $\mathcal{N}(0,\frac{I_d}{\beta})$, thus the training dynamic of LLMs can be defined as
	\begin{equation}\label{GLD}
		\theta_t \leftarrow {\theta}_{t-1} - \eta_t \nabla L_{E}(\theta_{t-1}, \mathcal{W}_{\text{pre}})+\sqrt{\frac{\eta_t}{\beta}}\mathcal{N}(0,I_d),
	\end{equation}
	which is called Gradient Langevin Dynamics (GLD). When the step size $\eta_t$ in GLD (see in equation \ref{GLD}) approaches zero, the Continuous Langevin Dynamics (CLD) is defined by the following Stochastic Differential Equation (SDE),
	\begin{equation}\label{CLD}
		\mathrm{d} \theta_t=-\nabla L_{E}(\theta_{t-1}, \mathcal{W}_{\text{pre}})\mathrm{d} t+\sqrt{\beta^{-1}} \mathrm{~d} B_t, \quad \theta_0 \sim \mu_0
	\end{equation}
	where $B_t$ is the standard brown emotion on $\mathbb{R}^d$. 
\end{definition}

\begin{lemma}[Log-Sobolev Inequality (LSI) for Continuous Langevin Dynamic (CLD) in \cite{li2019generalization} Lemma 16]\label{lemma:LSI}
	Under Equation \ref{CLD}, let $q_t$ be the probability density function of parameter $\theta_t$ in CLD and the initial state obeys $\theta_0 \sim \mathcal{N}(0,\frac{I_d}{\beta})$. Let $p$ be any probability density function which is absolutely continuous with respect to $q_t$. Assume that the optimization objective $L_E(\theta,\mathcal{W}_{\text{pre}})$ is $C$-bounded, then we have
	$$
	\KL\left(p \parallel q_t\right)\leq\frac{\exp(8\beta S)}{2\beta}\int_{\mathbb{R}^d}\left\|\nabla\log\frac{p(\theta)}{q_t(\theta)}\right\|^2_2 p(\theta)\mathrm{d}\theta.
	$$
\end{lemma}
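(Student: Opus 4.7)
The plan is to prove the LSI for $q_t$ by combining the classical Gaussian log-Sobolev inequality, which applies to the initial density $q_0$, with a Holley--Stroock-type perturbation bound that transfers LSI from $q_0$ to $q_t$ at the cost of a multiplicative factor depending on the $L^{\infty}$-oscillation of $\log(q_t/q_0)$. Since $q_0=\mathcal{N}(0,I_d/\beta)$, the Gaussian LSI gives the baseline $\KL(p\parallel q_0)\le \tfrac{1}{2\beta}\int\|\nabla\log(p/q_0)\|_2^{2}\,p\,d\theta$ for every $p\ll q_0$, so the remaining work is to quantify how much the CLD drift $-\nabla L_E$ distorts this constant.

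First I would express the ratio $q_t/q_0$ through a change of measure. Using Girsanov's theorem for Equation \ref{CLD} together with the It\^o identity $dL_E(\theta_s)=\nabla L_E\!\cdot d\theta_s+\tfrac{1}{2\beta}\Delta L_E\,ds$, the stochastic integral $\int_0^{t}\nabla L_E\cdot dB_s$ can be rewritten in terms of the boundary values $L_E(\theta_t)-L_E(\theta_0)$ plus deterministic integrals of $|\nabla L_E|^{2}$ and $\Delta L_E$. I would then symmetrise with the time-reversed CLD so that the deterministic integrals cancel, leaving only the boundary contributions. Because $|L_E|\le S$ by Assumption \ref{ass:B}, the resulting log-density $\log(q_t/q_0)$ has oscillation bounded by $8\beta S$ uniformly in $t$; the factor $8$ collects one factor of $2$ from the range of $L_E$, another factor of $2$ from combining forward and reverse contributions, and a further factor of $2$ from the $2\beta$ weight implicit in the invariant density $\propto\exp(-2\beta L_E)$ of CLD.

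Finally I would invoke the Holley--Stroock perturbation lemma: if $\mu$ satisfies LSI with constant $\rho$ and $d\tilde\mu/d\mu\propto e^{h}$ with $\mathrm{osc}(h)\le M$, then $\tilde\mu$ satisfies LSI with constant $\rho\,e^{-M}$. Applied with $\mu=q_0$, $\rho=\beta$, and $M=8\beta S$, this yields LSI for $q_t$ with constant $\beta\,e^{-8\beta S}$, which translates in KL form to exactly the claimed bound $\KL(p\parallel q_t)\le \tfrac{e^{8\beta S}}{2\beta}\int\|\nabla\log(p/q_t)\|_2^{2}\,p\,d\theta$.

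The main obstacle I anticipate is the oscillation estimate $\mathrm{osc}(\log(q_t/q_0))\le 8\beta S$. A na\"ive Girsanov expansion exposes $\int_0^{t}|\nabla L_E|^{2}\,ds$ and $\int_0^{t}\Delta L_E\,ds$, neither of which is controlled by the hypotheses, since Assumption \ref{ass:B} bounds $L_E$ itself but not its derivatives. The cancellation via the time-reversed representation is therefore essential, and the delicate part of the full proof is to verify that no residual $t$-dependent blow-up survives this cancellation, so that a bound uniform in $t$ -- and hence a $t$-independent LSI constant for $q_t$ -- genuinely emerges.
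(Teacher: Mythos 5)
You should first note that the paper does not actually prove this statement: it is imported verbatim as Lemma 16 of \cite{li2019generalization}, so the only "proof" in the paper is the citation, and your attempt has to be judged as a reconstruction of that external argument.

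As written, your reconstruction has a genuine gap at its central step, the claim that $\mathrm{osc}\bigl(\log (q_t/q_0)\bigr)\le 8\beta S$. That quantity is not a Girsanov density: Girsanov compares the path laws of two diffusions with different drifts over the \emph{same} time window, whereas you are comparing the time-$t$ marginal of the CLD to its time-$0$ marginal. In fact the claimed oscillation bound is false even in the trivial case $L_E\equiv 0$: there the CLD is pure Brownian motion started from $\mathcal{N}(0,I_d/\beta)$, so $q_t=\mathcal{N}\bigl(0,\tfrac{1+t}{\beta}I_d\bigr)$ and $\log\bigl(q_t(\theta)/q_0(\theta)\bigr)$ grows quadratically in $\|\theta\|$, giving infinite oscillation; no bookkeeping of factors of $2$ can repair this, so Holley--Stroock cannot be applied with $q_0$ as the reference measure. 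If you instead perturb off the correct drift-free comparison, namely the time-$t$ marginal $\mathcal{N}\bigl(0,\tfrac{1+t}{\beta}I_d\bigr)$, two further problems appear: (i) the Girsanov exponent contains $\int_0^t\|\nabla L_E\|^2\,\mathrm{d}s$ and $\int_0^t\Delta L_E\,\mathrm{d}s$, which Assumption \ref{ass:B} (boundedness of $L_E$ alone) does not control, and the time-reversal symmetrization you invoke does not cancel them against this reference, because the time reversal of the drift-free process started from a Gaussian is itself a process with a nontrivial drift; and (ii) even granting a two-sided bound $e^{\pm 4\beta S}$ on the density ratio, Holley--Stroock would only transfer the LSI constant of $\mathcal{N}\bigl(0,\tfrac{1+t}{\beta}I_d\bigr)$, yielding a constant of order $\tfrac{(1+t)}{2\beta}e^{8\beta S}$ that grows linearly in $t$, not the time-uniform constant $\tfrac{e^{8\beta S}}{2\beta}$ asserted in the lemma.

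So the missing idea is precisely the mechanism that produces a constant uniform in $t$; perturbing the initial Gaussian (or the heat-flow Gaussian) cannot deliver it, and the delicate cancellation you flag as the "main obstacle" is not merely delicate but unavailable along the route you describe. Any honest proof has to follow the structure of the cited Lemma 16 in \cite{li2019generalization}, which ties $q_t$ to a reference measure adapted to the dynamics itself (via the Fokker--Planck/semigroup structure and the boundedness of the potential) rather than to $q_0$, and in particular has to confront the fact that with a merely bounded potential the required control cannot come from boundary terms $L_E(\theta_t)-L_E(\theta_0)$ alone.
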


Many existing LSIs primarily focus on characterizing the stationary distribution of the Markov process. Contrastly, as shown in this lemma, we try to establish a LSI for  $\mu_t$, which denotes the parameter distribution at each time step $t>0$. It would be used in the proof of Theorem \ref{app:pre-gen-data-dependent} and \ref{app:ICL-gen-topic-dependent} which explores the upper bound of KL divergence carefully to get data-dependent and topic-dependent generalization bound. According to Fokker-Planck Equation in Definition \ref{def:Fokker}, the KL divergence between two probability density function can be built so that Lemma \ref{lemma:LSI} can be applied naturally.

\begin{lemma}[McDiarmid’s Inequality Variants for Markov Chains in \cite{paulin2015concentration} Theorem 2.1]\label{lemma:mc-markov}
	Consider a Markov chain $X=(X_1, \cdots, X_N)$, which is not necessarily time homogeneous, taking values in a Polish state space $\Lambda = \Lambda_1\times \cdots \times \Lambda_N$, with mixing time $\tau(\epsilon)$ (for $0 \leq \epsilon \leq 1$). Let $\tau_{\min}:=\inf_{0\leq\epsilon<1}\tau(\epsilon)\cdot\left(\frac{2-\epsilon}{1-\epsilon}\right)^2$, $c \in \mathbb{R}^N_{+}$. If $f:\Lambda \rightarrow \mathbb{R}$ satisfies $f(x)-f(y)\leq\sum_{i=1}^Nc_i \bm{1}[x_i\neq y_i]$, Then for any $\lambda \in \mathbb{R}$, we have
	$$
	\log\mathbb{E}_X\Big[\exp\big(\lambda(\mathbb{E}_X[f(X)]-f(X))\big)\Big]\leq\frac{\lambda^2\cdot\|c\|_2^2\cdot \tau_{\min}}8.
	$$
\end{lemma}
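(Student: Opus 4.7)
The standard Azuma/McDiarmid route writes $f(X)-\mathbb{E} f(X)$ as a Doob-martingale telescoping sum and applies Hoeffding's lemma to each increment. For independent coordinates the bounded-differences hypothesis gives clean increment bounds $|D_i|\le c_i$; the obstruction for a Markov chain is that revealing $X_i$ also shifts the conditional law of every future coordinate, so a single-coordinate perturbation at position $i$ propagates down the rest of the chain. The plan is to keep the martingale decomposition and absorb the propagated effect through a coupling whose disagreement probabilities decay at the rate specified by the mixing time $\tau(\epsilon)$, letting $\tau_{\min}$ emerge as the eventual price for non-independence.

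\textbf{Key steps.} First I would define $M_i=\mathbb{E}[f(X)\mid X_1,\ldots,X_i]$, set $D_i=M_i-M_{i-1}$, and rewrite $\mathbb{E} f(X)-f(X)=-\sum_{i=1}^N D_i$. Second, to control $|D_i|$ given $X_{1:i-1}$, I would introduce two continuations $X^{(i)},\tilde X^{(i)}$ of the chain that share $X_{1:i-1}$ and are coupled by a maximal (mixing) coupling from step $i$ onward. The bounded-difference hypothesis gives
$$
\bigl|f(X^{(i)})-f(\tilde X^{(i)})\bigr|\le c_i+\sum_{j>i} c_j\,\bm{1}[X^{(i)}_j\neq\tilde X^{(i)}_j],
$$
and, under the maximal coupling, the disagreement probability at step $j$ is bounded by the total-variation distance between the $j$-step transition kernels started at the two possible values of $X_i$, which by the definition of $\tau(\epsilon)$ decays geometrically in $(j-i)/\tau(\epsilon)$. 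Summing this geometric tail block-by-block and then optimizing over $\epsilon\in[0,1)$ produces the factor $((2-\epsilon)/(1-\epsilon))^2\,\tau(\epsilon)$, i.e.\ $\tau_{\min}$. Third, I would feed the resulting uniform bound $|D_i|\lesssim c_i\sqrt{\tau_{\min}}$ into the conditional Hoeffding lemma, obtaining $\log\mathbb{E}[\exp(-\lambda D_i)\mid X_{1:i-1}]\le \lambda^2 c_i^2 \tau_{\min}/8$, and iterate the tower property to multiply the per-coordinate MGF bounds, giving the stated $\lambda^2\|c\|_2^2\tau_{\min}/8$.

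\textbf{Main obstacle.} Step two is the delicate point. A naive coupling only yields an almost-sure bound on $|D_i|$ that depends on the prefix $X_{1:i-1}$, whereas Hoeffding's lemma needs a uniform deterministic bound; the uniformity has to be extracted from the worst-case-over-starting-states definition of $\tau(\epsilon)$, and the precise constant $((2-\epsilon)/(1-\epsilon))^2$ appears only after carefully summing the geometric mixing tail and optimizing the tradeoff between coupling error per block and block length. A cleaner alternative (as in Paulin's original argument) is to replace the explicit coupling by an exchangeable-pair/pseudo-spectral-gap identity, which packages the mixing information into a single spectral quantity and avoids the hand-coupling bookkeeping; either route ultimately reduces the proof to quantifying the contraction of the Markov semigroup, so a mixing-time factor must appear in the bound regardless.
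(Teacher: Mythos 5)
The paper does not prove this lemma; it is imported verbatim from \cite{paulin2015concentration} (Theorem 2.1) and used as a black box, so there is no in-paper argument to compare against. That said, your sketch follows the same broad strategy as Paulin's original proof (Doob martingale decomposition, a coupling controlled by the mixing time, conditional Hoeffding/Azuma), so the architecture is right.

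There is, however, a concrete gap in your step two. The coupling does not yield a per-coordinate bound of the form $|D_i|\lesssim c_i\sqrt{\tau_{\min}}$; it yields a convolution-type bound
$$
|D_i| \;\le\; \sum_{j\ge i} \Gamma_{ij}\, c_j ,
$$
where $\Gamma_{ij}$ is (roughly) the probability that the coupling launched at position $i$ still disagrees at position $j$, bounded by $\epsilon^{\lfloor (j-i)/\tau(\epsilon)\rfloor}$. The right-hand side depends on the whole tail $(c_j)_{j\ge i}$, not on $c_i$ alone — your claimed bound already fails in the simplest case $c_i=0$, $c_{i+1}$ large, since revealing $X_i$ still shifts the conditional law of $X_{i+1},\dots,X_N$ and makes $D_i$ nonzero. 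After the conditional Hoeffding step one is therefore left with $\tfrac{\lambda^2}{8}\sum_i\bigl(\sum_{j\ge i}\Gamma_{ij}c_j\bigr)^2=\tfrac{\lambda^2}{8}\|\Gamma c\|_2^2$, and the missing idea is the operator-norm inequality $\|\Gamma c\|_2^2\le\|\Gamma\|_{\mathrm{op}}^2\|c\|_2^2$ combined with the block-geometric estimate $\|\Gamma\|_{\mathrm{op}}^2\le\tau(\epsilon)\bigl((2-\epsilon)/(1-\epsilon)\bigr)^2$, optimized over $\epsilon$. That Cauchy--Schwarz-over-the-mixing-matrix step is what produces the $\ell^2$ norm of $c$ (rather than an $\ell^1$-type or mixed quantity) and is precisely where $\tau_{\min}$ enters. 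You gesture at this difficulty in your ``main obstacle'' paragraph, but the bound you actually feed into step three is the false per-coordinate one, so the derivation as written would not close.
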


\begin{proposition}[Refer to \cite{zhang2023and}]\label{prop0: high-prob}
	Define $f(X)=\frac{1}{N}\sum_{i=1}^N f(X_i)$ where $X=(X_1, \cdots, X_N)$ is a Markov chain. With the condition in Lemma \ref{lemma:mc-markov}, if $|f(X_i)| \leq C$ and $f \in \mathcal{F}$, given a prior distribution $\nu$ on $\mathcal{F}$, with probability at least $1-\delta$
	$$
	\mathbb{E}_\mu\left[\mathbb{E}_X[f(X)] -f(X)\right] \leq \sqrt{\frac{C^2\cdot \tau_{\min}}{2N \log 2}}\left[\KL(\mu\parallel \nu)+\log \frac{2}{\delta}\right]
	$$
\end{proposition}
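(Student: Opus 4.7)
The plan is to combine three ingredients already established in the excerpt: the Markov-chain bounded-differences MGF bound (Lemma \ref{lemma:mc-markov}), the Donsker--Varadhan change-of-measure (Lemma \ref{lemma:Donsker-ieq}), and a single Markov-inequality step on the (prior-averaged) exponential moment, then close the argument by choosing a particular nonrandom $\lambda$ that collapses the usual sub-Gaussian PAC-Bayes bound into the stated linear-in-$\KL$ form.

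First I would verify the bounded-differences hypothesis for Lemma \ref{lemma:mc-markov}: since $f(X)=\frac{1}{N}\sum_{i=1}^N f(X_i)$ with $|f(X_i)|\leq C$, changing a single coordinate $X_i$ alters $f(X)$ by at most $2C/N$, so I can take $c_i = 2C/N$ and $\|c\|_2^2 = 4C^2/N$. Applied to any fixed $f\in\mathcal{F}$, Lemma \ref{lemma:mc-markov} then yields, for every $\lambda\in\mathbb{R}$,
$$\log \mathbb{E}_X\!\left[\exp\!\bigl(\lambda(\mathbb{E}_X[f(X)] - f(X))\bigr)\right] \leq \frac{\lambda^2 C^2 \tau_{\min}}{2N}.$$
In words, the centered average $Z(f,X):=\mathbb{E}_X[f(X)]-f(X)$ is sub-Gaussian (under the randomness of the Markov chain) with variance proxy $\sigma^2 = C^2\tau_{\min}/N$, uniformly in $f$.

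Next I would apply the Donsker--Varadhan lower bound (Lemma \ref{lemma:Donsker-ieq}) with the test function $f \mapsto \lambda Z(f,X)$, which gives, for any posterior $\mu$ and prior $\nu$ on $\mathcal{F}$,
$$\lambda\, \mathbb{E}_\mu\bigl[Z(f,X)\bigr] \leq \KL(\mu\parallel\nu) + \log \mathbb{E}_\nu\!\left[\exp\!\bigl(\lambda Z(f,X)\bigr)\right].$$
The right-hand side depends on $X$ only through the $\nu$-averaged exponential moment, which is a nonnegative random variable in $X$. Applying Markov's inequality and then Fubini, with probability at least $1-\delta$ this quantity is bounded by $\delta^{-1}\mathbb{E}_\nu \mathbb{E}_X [\exp(\lambda Z)] \leq \delta^{-1}\exp(\lambda^2 C^2\tau_{\min}/(2N))$ by the previous display. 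Taking logs and dividing by $\lambda>0$ gives, with probability $\geq 1-\delta$,
$$\mathbb{E}_\mu\bigl[\mathbb{E}_X[f(X)]-f(X)\bigr] \leq \frac{1}{\lambda}\bigl[\KL(\mu\parallel\nu)+\log(1/\delta)\bigr] + \frac{\lambda C^2\tau_{\min}}{2N}.$$

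The final step is the tuning of $\lambda$. Choosing the deterministic value $\lambda = \sqrt{2N\log 2/(C^2\tau_{\min})}$ makes the leading prefactor equal to $\sqrt{C^2\tau_{\min}/(2N\log 2)}$ and forces the residual additive term $\lambda C^2\tau_{\min}/(2N)$ to equal exactly $\log 2$ times that same prefactor, which I can then absorb by rewriting $\log(1/\delta)+\log 2 = \log(2/\delta)$; this reproduces the stated inequality. The main obstacle I anticipate is purely bookkeeping: reproducing the precise constants (the $\log 2$ in the denominator and the upgrade from $\log(1/\delta)$ to $\log(2/\delta)$) rather than the more standard square-root PAC-Bayesian form $\sqrt{2\sigma^2(\KL+\log(1/\delta))}$ that arises if one instead optimizes $\lambda$ depending on $\KL$. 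A secondary subtlety is ensuring that $\lambda$ is chosen independently of $\mu$ (and hence of the data) so that Markov's inequality can be applied uniformly before the change of measure, which the above deterministic choice guarantees.
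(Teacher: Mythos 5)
Your proposal is correct and follows essentially the same route as the paper: bounded differences with $c_i = 2C/N$, the Markov-chain MGF bound, a Markov-inequality step on the $\nu$-averaged exponential moment via Fubini, Donsker--Varadhan, and the deterministic choice $\lambda=\sqrt{2N\log 2/(C^2\tau_{\min})}$. The only cosmetic difference is ordering: you apply Donsker--Varadhan before Markov and carry a general $\lambda$ through to a final tuning step, whereas the paper fixes $\lambda$ and $t=2/\delta$ at the Markov step so that $\exp(\lambda^2 C^2\tau_{\min}/(2N))=2$ cancels directly; the resulting bounds and constants coincide.
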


\begin{proof}
	With the assumption $|f(X_i)| \leq C$, we have $c_i=\frac{2C}{N}$ in $f(x)-f(y)\leq\sum_{i=1}^Nc_i \bm{1}[x_i\neq y_i]$. Then using Lemma \ref{lemma:mc-markov},
	\begin{align}
		\log\mathbb{E}_X\Big[\exp\big(\lambda(\mathbb{E}_X[f(X)]-f(X))\big)\Big]&\leq\frac{\lambda^2C^2\cdot \tau_{\min}}{2N} \nonumber\\
		\mathbb{E}_{\nu}\Big[\mathbb{E}_X\Big[\exp\big(\lambda(\mathbb{E}_X[f(X)]-f(X))\big)\Big]\Big]&\leq\exp\left(\frac{\lambda^2 C^2\cdot \tau_{\min}}{2N}\right) \nonumber\\
		\mathbb{E}_{X}\Big[\mathbb{E}_\nu\Big[\exp\big(\lambda(\mathbb{E}_X[f(X)]-f(X))\big)\Big]\Big]&\leq\exp\left(\frac{\lambda^2 C^2\cdot \tau_{\min}}{2N}\right) \label{0-mc-markov-eq-1}
	\end{align}
	According to Markov inequality $P(X \geq t)\leq \frac{E[X]}{t}$ for random variable $X$ and any $t>0$, we have
	\begin{equation}\label{0-mc-markov-eq-2}
		P{\left(\mathbb{E}_\nu\Big[\exp\big(\lambda(\mathbb{E}_X[f(X)]-f(X))\big)\Big] \geq t \right)}\leq \frac{\mathbb{E}_X\Big[\mathbb{E}_\nu\Big[\exp\big(\lambda(\mathbb{E}_X[f(X)]-f(X))\big)\Big]\Big]}{t} 
	\end{equation}
	then substitute inequality \ref{0-mc-markov-eq-1} into \ref{0-mc-markov-eq-2},
	\begin{equation}\label{0-mc-markov-eq-3}
		P{\left(\mathbb{E}_\nu\Big[\exp\big(\lambda(\mathbb{E}_X[f(X)]-f(X))\big)\Big] \geq t \right)}\leq \frac{\exp\left(\frac{\lambda^2 C^2\cdot \tau_{\min}}{2N}\right)}{t}
	\end{equation}
	Let $\lambda=\sqrt{\frac{2N\log 2}{C^2\cdot \tau_{\min}}}$ and $t=\frac{2}{\delta}$ for any $0 < \delta < 1$, inequality \ref{0-mc-markov-eq-3} can be transformed into
	$$
	P{\left(\mathbb{E}_\nu\Big[\exp\big(\lambda(\mathbb{E}_X[f(X)]-f(X))\big)\Big] \geq \frac{2}{\delta} \right)}\leq \delta
	$$
	According to Lemma \ref{lemma:Donsker-ieq}, 
	$$
	\KL(\mu\parallel \nu) \geq \mathbb{E}_\mu\big[T\big]-\log(\mathbb{E}_\nu[e^T])
	$$
	Let $T=\lambda(\mathbb{E}_X[f(X)]-f(X))$, then with probability at least $1-\delta$, we have
	$$
	\begin{aligned}
		\mathbb{E}_\mu\left[\lambda(\mathbb{E}_X[f(X)]-f(X))\right] &\leq \KL(\mu\parallel \nu)+\log\left(\mathbb{E}_\nu\Big[\exp\big(\lambda(\mathbb{E}_X[f(X)]-f(X))\big)\Big]\right) \\
		\mathbb{E}_\mu\left[\mathbb{E}_X[f(X)] -f(X)\right] &\leq \frac{1}{\lambda}\left[\KL(\mu\parallel \nu)+\log \frac{2}{\delta}\right] \\
		\mathbb{E}_\mu\left[\mathbb{E}_X[f(X)] -f(X)\right] &\leq \sqrt{\frac{C^2\cdot \tau_{\min}}{2N \log 2}}\left[\KL(\mu\parallel \nu)+\log \frac{2}{\delta}\right]
	\end{aligned}
	$$
\end{proof}

\begin{lemma}[McDiarmid’s Inequality Variants in \cite{luo2022generalization}]\label{lemma:mc-data-dependent}
	Consider a function $f:[N]^{N'}\rightarrow \mathbb{R}^{+}$ that is order-independent, where $|f(J) - f(J')| \leq c$ holds for any adjacent sets $J, J' \in [N]^{N'}$ such that there is only one different elements in the two sets. Let $J$ consist of $N^\prime$ indices sampled uniformly without replacement from $[N]$. Then, for any $t \geq 0$,
	$$
	P\left(\left|f(J)-\mathbb{E}_J[f(J)]\right|\geq t\right) \leq \exp\left(\frac{-2t^2}{N^{\prime}c^2}\right)
	$$
\end{lemma}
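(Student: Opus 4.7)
The plan is to prove this concentration inequality through a Doob martingale construction adapted to sampling without replacement, followed by the Azuma--Hoeffding inequality. Because the coordinates of $J$ are dependent (the indices are constrained to be distinct), standard McDiarmid does not directly apply; the order-independence of $f$ is the key structural property that lets the bounded-difference hypothesis survive under coupling.

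First I would write $J = (j_1, \ldots, j_{N'})$ for the ordered sample drawn uniformly without replacement, take the filtration $\mathcal{F}_i = \sigma(j_1, \ldots, j_i)$, and define the Doob martingale $M_i = \mathbb{E}[f(J) \mid \mathcal{F}_i]$, so that $M_0 = \mathbb{E}_J[f(J)]$ and $M_{N'} = f(J)$. The main step is to establish $|M_i - M_{i-1}| \leq c$. Fixing the history $j_1, \ldots, j_{i-1}$ and two candidate values $a, b$ for $j_i$, I would couple the conditional continuations by drawing the remaining indices to agree wherever possible, with a single forced ``swap'' of $a$ and $b$ between the two coupled tails to keep both realizations valid (no repeats). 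Because $f$ is order-independent, the resulting unordered samples under the two couplings differ in at most one element, so the bounded-difference hypothesis yields a pointwise gap of at most $c$, and taking conditional expectation preserves the bound.

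Applying Azuma--Hoeffding to the resulting martingale with increments in $[-c,c]$ over $N'$ steps yields the stated tail bound $\exp(-2t^2 / (N' c^2))$; the two-sided probability either absorbs a harmless factor of $2$ into the conventions of the cited statement, or is sharpened via Hoeffding's classical observation that sampling without replacement is dominated in the convex order by sampling with replacement, which tightens the constant to exactly the form stated.

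The main obstacle I anticipate is the coupling step: ensuring that a single swap at coordinate $i$ yields two sequences whose unordered supports differ in at most one element, despite the cascading effect that changing $j_i$ has on the available pool for $j_{i+1}, \ldots, j_{N'}$. Order-independence of $f$ is essential here---it lets us rearrange the coupled tails freely so that the bounded-difference hypothesis applies to the underlying unordered sets rather than to ordered tuples, which is precisely why the naive position-by-position McDiarmid argument would otherwise break down.
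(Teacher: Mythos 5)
The paper does not actually prove this lemma: it is imported verbatim from the cited reference and used as a black box in the proofs of Theorems \ref{app:pre-gen-data-dependent} and \ref{app:ICL-gen-topic-dependent}, so there is no in-paper argument to compare against. Your route --- Doob martingale $M_i=\mathbb{E}[f(J)\mid j_1,\dots,j_i]$, a swap coupling of the conditional tails to handle the without-replacement dependence, order-independence to reduce the bounded-difference hypothesis to unordered sets, then a Hoeffding/Azuma-type tail bound --- is the standard and correct strategy for this statement, and your coupling step is sound: changing $j_i$ from $a$ to $b$ and swapping $b$ for $a$ wherever it appears in the coupled tail produces two valid samples whose unordered supports differ in exactly one element, so $\bigl|\mathbb{E}[f\mid \mathcal{F}_{i-1},j_i=a]-\mathbb{E}[f\mid \mathcal{F}_{i-1},j_i=b]\bigr|\le c$.

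The one genuine slip is in the last step. Plain Azuma--Hoeffding with increments bounded in $[-c,c]$ gives only $\exp\bigl(-t^2/(2N'c^2)\bigr)$, a constant four times worse in the exponent than the stated $\exp\bigl(-2t^2/(N'c^2)\bigr)$. To recover the stated constant you must use what your coupling actually proves: conditionally on $\mathcal{F}_{i-1}$, the value of $M_i$ ranges over an interval of length at most $c$ (not $2c$), so the conditional Hoeffding lemma gives a per-step moment bound $\exp(\lambda^2c^2/8)$ and hence the exponent $-2t^2/(N'c^2)$ --- this is McDiarmid's original argument, not the generic Azuma bound. Two further small points: the two-sided statement as quoted should carry a factor $2$ in front (your ``absorb the factor'' reading is the right way to treat it, and indeed the downstream use in the paper only needs the one-sided tail), but your alternative fix via Hoeffding's with/without-replacement convex-order reduction does not apply here, since that reduction concerns sums (or convex functions of sums), not general bounded-difference functions of the sample.
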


\begin{proposition}\label{prop1: chernoff}
	Define $f(X)=\frac{1}{N}\sum_{i=1}^N f(X_i)$ where $X=(X_1, \cdots, X_N)$ is a Markov chain. With the condition in Lemma \ref{lemma:mc-markov} and if $|f(X_i)| \leq C$, then with probability at least $1-\delta$
	$$
	\mathbb{E}_X[f(X)] -f(X) \leq \sqrt{\frac{2C^2 \cdot \tau_{\min} \log \frac{1}{\delta}}{N}}
	$$
\end{proposition}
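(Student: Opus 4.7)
The plan is to apply Lemma \ref{lemma:mc-markov} (the McDiarmid-type sub-Gaussian MGF bound for Markov chains) and then invoke the classical Chernoff method. The proof is a high-probability version of the in-expectation argument already used in Proposition \ref{prop0: high-prob}, but without the PAC-Bayesian change-of-measure step (no KL term) since we want a simple tail bound rather than a bound involving a prior distribution.

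First, I would verify that the bounded-differences hypothesis of Lemma \ref{lemma:mc-markov} applies with constants $c_i = 2C/N$. Indeed, $f(X) = \tfrac{1}{N}\sum_{i=1}^N f(X_i)$ with $|f(X_i)|\leq C$, so modifying one coordinate $X_i$ to $X_i'$ changes $f(X)$ by at most $(|f(X_i)|+|f(X_i')|)/N \leq 2C/N$. Thus $\|c\|_2^2 = N\cdot (2C/N)^2 = 4C^2/N$, and Lemma \ref{lemma:mc-markov} yields the sub-Gaussian moment generating function bound
\begin{equation*}
\mathbb{E}_X\!\left[\exp\!\big(\lambda(\mathbb{E}_X[f(X)]-f(X))\big)\right] \leq \exp\!\left(\frac{\lambda^2 C^2 \tau_{\min}}{2N}\right),
\end{equation*}
valid for every $\lambda \in \mathbb{R}$.

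Next, I would apply Markov's inequality (Chernoff's trick) to the random variable $\exp(\lambda(\mathbb{E}_X[f(X)]-f(X)))$ for any $\lambda > 0$ and any $t > 0$, obtaining
\begin{equation*}
P\!\left(\mathbb{E}_X[f(X)]-f(X) \geq t\right) \leq \exp\!\left(-\lambda t + \frac{\lambda^2 C^2 \tau_{\min}}{2N}\right).
\end{equation*}
Optimizing over $\lambda$ by setting $\lambda = tN/(C^2 \tau_{\min})$ gives the Gaussian-tail bound $\exp(-t^2 N / (2C^2 \tau_{\min}))$. Finally, setting this equal to $\delta$ and solving for $t$ yields $t = \sqrt{2 C^2 \tau_{\min} \log(1/\delta)/N}$, which is exactly the claimed one-sided bound that holds with probability at least $1-\delta$.

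There is no significant obstacle here; the result is essentially a textbook Chernoff bound once the MGF estimate of Lemma \ref{lemma:mc-markov} is in hand. The only mild subtlety to flag is the one-sided nature of the conclusion: the proposition states the bound for $\mathbb{E}_X[f(X)] - f(X)$ only (not its absolute value), which is why a single application of Markov with $\lambda > 0$ suffices and no union bound (which would otherwise replace $\log(1/\delta)$ by $\log(2/\delta)$) is required. If a two-sided version were needed, one would symmetrize by also applying the bound with $-\lambda$ and union-bounding.
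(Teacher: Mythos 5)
Your proof is correct and follows essentially the same route as the paper's: apply Lemma~\ref{lemma:mc-markov} with $c_i = 2C/N$ to get the sub-Gaussian MGF bound, then run the standard Chernoff/Markov argument and pick $\lambda = tN/(C^2\tau_{\min})$, and finally solve for $t$ in terms of $\delta$. The one-sided remark is a nice clarification but does not change the argument.
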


\begin{proof}
	With the assumption $|f(X_i)| \leq C$, we have $c_i=\frac{2C}{N}$ in $f(x)-f(y)\leq\sum_{i=1}^Nc_i \bm{1}[x_i\neq y_i]$. Then using Lemma \ref{lemma:mc-markov},
	\begin{align}
		\log\mathbb{E}_X\Big[\exp\big(\lambda(\mathbb{E}_X[f(X)]-f(X))\big)\Big]&\leq\frac{\lambda^2C^2\cdot \tau_{\min}}{2N} \nonumber\\
		\mathbb{E}_X\Big[\exp\big(\lambda(\mathbb{E}_X[f(X)]-f(X))\big)\Big]&\leq\exp\left(\frac{\lambda^2 C^2\cdot \tau_{\min}}{2N}\right) \label{1-mc-markov-eq-1}
	\end{align}
	According to Chernoff bound $P(X \geq t)\leq \frac{E[e^{\lambda X}]}{e^{\lambda t}}$ for random variable $X$ and any $\lambda>0$, we have
	\begin{equation}\label{1-mc-markov-eq-2}
		P{\left(\mathbb{E}_X[f(X)]-f(X)\geq t \right)}\leq \frac{\mathbb{E}_X\Big[\exp\big(\lambda(\mathbb{E}_X[f(X)]-f(X))\big)\Big]}{\exp\left({\lambda t}\right)} 
	\end{equation}
	then substitute inequality \ref{1-mc-markov-eq-1} into \ref{1-mc-markov-eq-2},
	\begin{equation}\label{1-mc-markov-eq-3}
		P{\left(\mathbb{E}_X[f(X)]-f(X) \geq t \right)}\leq \frac{\exp\left(\frac{\lambda^2 C^2\cdot \tau_{\min}}{2N}\right)}{\exp\left({\lambda t}\right)}=\exp\left(\frac{\lambda^2 C^2\cdot \tau_{\min}}{2N}-\lambda t\right)
	\end{equation}
	Let $\lambda=\frac{Nt}{C^2 \cdot \tau_{\min}}$, inequality \ref{1-mc-markov-eq-3} can be transformed into
	$$
	P{\left(\mathbb{E}_X[f(X)]-f(X) \geq t \right)}\leq \exp\left(\frac{-N t^2}{2C^2\cdot \tau_{\min}}\right)
	$$
	Let $t=\sqrt{\frac{2C^2 \cdot \tau_{\min} \log \frac{1}{\delta}}{N}}$ for any $0 < \delta < 1$, 
	$$
	P{\left(\mathbb{E}_X[f(X)]- f(X) \geq \sqrt{\frac{2C^2 \cdot \tau_{\min} \log \frac{1}{\delta}}{N}} \right)}\leq \delta
	$$
	Finally, with probability at least $1-\delta$,
	$$
	\mathbb{E}_X[f(X)] -f(X) \leq \sqrt{\frac{2C^2 \cdot \tau_{\min} \log \frac{1}{\delta}}{N}}
	$$
\end{proof}

\subsection{Generalization of Sequences: The First-Level Expectation}
\subsubsection{Proof of Theorem \ref{app:pre-gen}}\label{appendix-the-1}
\begin{theorem*}[Generalization Bound of the First-Level Expected Loss] Let the auto-regressive LLM $\mathbb{P}_\theta$ be the empirical solution of Equation $\ref{eq-L-E}$, and $\mathbb{P}(\cdot\mid w)$ denotes the true data distribution under topic $w$. Under Assumptions \ref{ass:B}, for any $0<\delta < 1$, with probability at least $1-\delta$, the first-level expected loss with $K$ topics and infinite sequences per topic, denoted by $L(\theta, \mathcal{W}_{\text{pre}})$ (see in Equation \ref{eq-L-Wpre-two-part-final-main} or Equation \ref{eq-L-W}), satisfies,
    \begin{align*}
        \mathbb{E}_{\mu}\left[L(\theta, \mathcal{W}_{\text{pre}})\right]
        =\mathcal{O}\left\{\sqrt{\frac{\log 1/\delta}{KNT}}+\sqrt{\frac{1}{KNT}\left(\KL(\mu\parallel \nu)+\operatorname{log}\frac{1}{\delta}\right)-\epsilon_{\text{opt}}}\right\},
    \end{align*}

	where $\epsilon_{\text{opt}}$ is the optimization error (see in Equation \ref{opt}). $\mu$ and $\nu$ are the posterior and prior distribution of model parameters $\theta$, respectively. $K$, $N$ and $T$ denote the number of topics, the number of sequences per topic and the sequence length utilized in the optimization process of Equation $\ref{eq-L-E}$.
\end{theorem*}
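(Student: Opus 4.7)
The plan is to bound $\mathbb{E}_\mu[L(\theta,\mathcal{W}_{\text{pre}})]$ via the telescoping decomposition from Appendix~\ref{sec:overview-of-two-level},
\begin{equation*}
L(\theta,\mathcal{W}_{\text{pre}}) = \underbrace{\bigl[L(\theta,\mathcal{W}_{\text{pre}})-L^\prime(\theta,\mathcal{W}_{\text{pre}})\bigr]}_{\text{gen}_{\text{seq-2}}} + \underbrace{\bigl[L^\prime(\theta,\mathcal{W}_{\text{pre}})-L_E(\theta,\mathcal{W}_{\text{pre}})\bigr]}_{\text{gen}_{\text{seq-1}}} + L_E(\theta,\mathcal{W}_{\text{pre}}),
\end{equation*}
and to control the three summands separately. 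The empirical piece is immediately $L_E(\theta,\mathcal{W}_{\text{pre}}) = L_E(\hat\theta,\mathcal{W}_{\text{pre}}) + \epsilon_{\text{opt}}$ by the definition of $\epsilon_{\text{opt}}$ in Equation~\ref{opt}, and the near-zero piece $L_E(\hat\theta,\mathcal{W}_{\text{pre}})$ will be bundled with the concentration contribution, which is the algebraic mechanism that produces the $-\epsilon_{\text{opt}}$ appearing under the square root in the stated bound.

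The heart of the argument is $\text{gen}_{\text{seq-1}}$, where one must pass from the prefix-wise expected KL to the empirical log-likelihood ratio at the actually observed next token. This is exactly where the AR-NTP prompt-token-dependency bites: given prefix $E^{k,n}_t$, the observed $x^{k,n}_{t+1}$ is not a free draw but part of a longer correlated sequence. My plan is to introduce ghost sequences: for every prefix $E^{k,n}_t$ in the training data, condition on $E^{k,n}_t$ and replace $x^{k,n}_{t+1}$ by a fresh draw $\tilde x^{k,n}_{t+1}\sim \mathbb{P}(\cdot\mid E^{k,n}_t, w_k)$ independent of everything else; the two conditional distributions coincide, so the replacement is free in expectation. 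I would then chain Lemma~\ref{lemma:KL-TV-bound} and Lemma~\ref{lemma:TV} to upper bound each $\KL(\mathbb{P}(\cdot\mid E^{k,n}_t, w_k)\parallel \mathbb{P}_\theta(\cdot\mid E^{k,n}_t, w_k))$ by a Bhattacharyya-type log-moment of the form $\sqrt{-2\log \mathbb{E}_{\tilde x}[\exp(-\tfrac12\log\mathbb{P}/\mathbb{P}_\theta)]}$, which is precisely the kind of object the Donsker--Varadhan PAC-Bayes identity (Lemma~\ref{lemma:Donsker-ieq}) with posterior $\mu$ and prior $\nu$ converts into a KL-penalised empirical quantity; Assumption~\ref{ass:B} supplies the boundedness needed for the cumulant to be finite and to control the resulting constants.

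The outer gap $\text{gen}_{\text{seq-2}}$ is then more standard: the sequences $\{E^{k,n}\}$ are i.i.d.\ in $n$ and the $T$ prefixes inside a sequence enjoy the Markov structure inherited from AR-NTP, so I would treat this as a sample-average versus expectation over $KN$ independent sequences with an inner $T$-average over prefixes and invoke the Markov-chain McDiarmid concentration of Lemma~\ref{lemma:mc-markov}, wrapped in PAC-Bayes in the style of Propositions~\ref{prop0: high-prob} and~\ref{prop1: chernoff}. Summing the three contributions then yields one Hoeffding-type term of order $\sqrt{\log(1/\delta)/(KNT)}$, coming from the bare concentration of $L_E(\hat\theta,\mathcal{W}_{\text{pre}})$ around its expectation, together with one PAC-Bayes term of order $\sqrt{(\KL(\mu\parallel\nu)+\log(1/\delta))/(KNT)-\epsilon_{\text{opt}}}$ from the ghost-sequence step combined with the outer expectation over $\mu$.

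The main obstacle will clearly be the ghost-sequence step: the KL--TV--Bhattacharyya chain must be carried out uniformly in $\theta$ so that the Donsker--Varadhan PAC-Bayes inequality can absorb it against a data-independent prior $\nu$, and the inner averaging over $t\in[T]$ must contribute genuine variance reduction despite the token dependence, so that the final rate comes out as $1/\sqrt{KNT}$ rather than the weaker $1/\sqrt{KN}$ one would obtain by treating each sequence as a single indivisible sample. The remaining bookkeeping---balancing the PAC-Bayes parameter $\lambda$, merging the bounded-differences constant from Lemma~\ref{lemma:mc-markov} with the $S$ from Assumption~\ref{ass:B}, and absorbing $L_E(\hat\theta,\mathcal{W}_{\text{pre}})$ into the square root---is routine.
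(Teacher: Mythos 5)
Your overall route is the paper's own: ghost (tangent) sequences to decouple the AR-NTP token dependency, the Donsker--Varadhan inequality applied to $g(\theta,w_k)=\tfrac12\sum_{n,t}\log\bigl(\mathbb{P}(x^{k,n}_{t+1}\mid E^{k,n}_t,w_k)/\mathbb{P}_\theta(x^{k,n}_{t+1}\mid E^{k,n}_t,w_k)\bigr)$ minus its conditional log-moment (whose expectation against the prior is exactly $1$ by the conditional independence of the ghost tokens), Lemma~\ref{lemma:TV} to turn the coupling term into a sum of squared TV distances, Jensen to pass from $\mathrm{TV}^2$ to $\mathrm{TV}$, Proposition~\ref{prop1: chernoff} (Markov-chain McDiarmid) for the gap between the empirical prefix average and the expectation over prefixes, and Lemma~\ref{lemma:KL-TV-bound} to come back from TV to KL. So in substance you and the paper agree.

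There is, however, one concrete gap: your mechanism for producing the $-\epsilon_{\text{opt}}$ inside the square root does not work as described. Writing $L_E(\theta,\mathcal{W}_{\text{pre}})=L_E(\hat\theta,\mathcal{W}_{\text{pre}})+\epsilon_{\text{opt}}$ and adding it to the two generalization gaps puts $\epsilon_{\text{opt}}$ into the bound \emph{with a plus sign, outside} the square root, and no amount of ``bundling $L_E(\hat\theta,\mathcal{W}_{\text{pre}})$ with the concentration contribution'' can flip that sign --- $L_E(\hat\theta,\mathcal{W}_{\text{pre}})$ and $\epsilon_{\text{opt}}$ are different quantities, and your telescoping controls them additively. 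In the paper the minus sign is produced \emph{inside} the Donsker--Varadhan step: the empirical part $\mathbb{E}_\mu[g(\theta,w_k)]$ is split through $\hat\theta$ into $\tfrac12\sum\log\bigl(\mathbb{P}/\mathbb{P}_{\hat\theta}\bigr)$ (discarded as nonnegative) plus $\tfrac12\sum\mathbb{E}_\mu[\log(\mathbb{P}_{\hat\theta}/\mathbb{P}_\theta)]$, so that the optimization-error sum sits on the \emph{same side} of the DV inequality as the $\mathrm{TV}^2$ sum; isolating the $\mathrm{TV}^2$ average then subtracts $\epsilon_{\text{opt}}$ before Jensen's inequality takes the square root, which is exactly the form claimed in the theorem. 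To repair your plan you should not bound $\mathrm{gen}_{\text{seq-1}}$ and $L_E(\theta,\mathcal{W}_{\text{pre}})$ separately, but run the DV argument on the combined quantity so that the empirical log-ratio and the TV (eventually KL) term are traded off within a single inequality; as written, your version would at best prove a weaker bound of the form $\mathcal{O}\{\sqrt{\log(1/\delta)/(KNT)}+\sqrt{(\KL(\mu\parallel\nu)+\log(1/\delta))/(KNT)}\}+\epsilon_{\text{opt}}+L_E(\hat\theta,\mathcal{W}_{\text{pre}})$, not the stated one.
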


\paragraph{Proof sketch.} Before the formal proof, we introduce the processing route to obtain the generalization error bound with handing the prompt token-dependency issue. First, we elaborate on the construction of ghost sequences $\tilde{E}_k$, which are constructed auto-regressively depending on the original sequence ${E}_k$ thus tokens in ghost sequences are independent. Additionally, we define the function $T=g(\theta,w_k)-\log \mathbb{E}_{\tilde{E}^k}\left[\exp(g(\theta,w_k))\mid E^k\right]$, where
$
g(\theta,w_k)=\frac{1}{2}\sum_{n=1}^{N} \sum_{t=1}^T \log \frac{\mathbb{P}(x^{k,n}_{t+1}|E^{k,n}_t, w_k)}{\mathbb{P}_\theta(x^{k,n}_{t+1}|E^{k,n}_t, w_k)}.
$
It can be observed that this function links the original sequence $E_k$ (with dependent tokens), with the ghost sequences $\tilde{E}_k$ (with independent tokens). Substituting them into the Donsker-Varadhan Inequality facilitates establishing a connection between `data’ and the KL distance between `model prior’ and `model posterior based on training data’. Furthermore, regarding the coupling term $\log \mathbb{E}_{\tilde{E}^k}\left[\exp(g(\theta,w_k))\mid E^k\right]$ in the function $T$, we handle this part using the lemma provided in \cite{agarwal2020flambe}, where this part is further transformed into a distribution measure of Total Variance distance (TV distance). As we mentioned in Section \ref{sec:gen}, the primary optimization objective `negative logarithm likelihood’ naturally leads to `KL divergence’, thereby formalizing the expression of population loss. Therefore, it's necessary to introduce a relationship between TV distance and KL divergence (See in Lemma \ref{lemma:KL-TV-bound}), for obtaining our generalization bound. \textbf{Overall, the processing route can be summarized as:}
`original sequences $E_k$ $\rightarrow$ ghost sequences $\tilde{E}_k$ $\rightarrow$ Donsker-Varadhan Inequality $\rightarrow$ TV distance $\rightarrow$ KL divergence $\rightarrow$ the upper bound of population loss based on KL divergence’.

\begin{proof}
	As we introduced before, all the pre-training sequences set is $E = \{E^{k,n}\}_{k,n=1}^{K,N}$, each sequence is denoted as $E^{k,n}=\{(E^{k,n}_t, x^{k,n}_{t+1})\}_{t=1}^{T_{k,n}-1}$ where $x^{k,n}_{t+1}\sim \mathbb{P}(\cdot \mid E^{k,n}_{t},w_k)$. To decouple the dependence between tokens, we construct tangent/ghost sequences $\tilde{E} = \{\tilde{E}^{k,n}\}_{k,n=1}^{K,N}$ and each sequence is $\tilde{E}^{k,n}=\{(\tilde{E}^{k,n}_t, \tilde{x}^{k,n}_{t+1})\}_{t=1}^{T_{k,n}-1}$ where $\tilde{x}^{k,n}_{t+1}$ is generated depending on the partial original sequences $E^{k,n}_t$. The construction process of tangent/ghost sequences can be understood simply as duplicate sequences generated based on the original sequences. This proprietary term has been previously utilized in \cite{agarwal2020flambe, de1999general, kwapien1991semimartingale}. Therefore, by introducing the ghost sequences into our analysis, this will help decouple the token-dependency in auto-regressive sampling of sequences.
	
	Notice that the following proof is first established under a fixed topic $w_k$.
	
	According to Donsker-Varadhan Inequality (Lemma \ref{lemma:Donsker-ieq}), let $\mathcal{F}$ be any class of functions $T:\Omega \rightarrow \mathbb{R}$ satisfying the integrability constraints of the lemma. Then for any defined function $T$, it's straightforward to get the lower-bound of the KL divergence between $\mu$ and $\nu$
	$$
	\KL(\mu\parallel \nu) \geq \mathbb{E}_\mu\big[T\big]-\log(\mathbb{E}_\nu[e^T]),
	$$
	Under fixed topic $w_k$, the posterior distribution of model parameter $\theta$ is depending on $E^k$ (see in Section \ref{sec: ICL}) denoted by $\mu$ and the prior distribution of $\theta$ is denoted by $\nu$. Then, a simple deformation of Lemma \ref{lemma:Donsker-ieq} leads to
	$$
	\begin{aligned}
		\mathbb{E}_{\mu}[T]-\KL(\mu\parallel \nu) &\leq \operatorname{log}\mathbb{E}_{\nu}[\operatorname{exp}(T)] \\
		\operatorname{exp}{\left(\mathbb{E}_{\mu}[T]-\KL(\mu\parallel \nu)\right)} &\leq \mathbb{E}_{\nu}[\operatorname{exp}(T)]
	\end{aligned}
	$$
	Taking expectation over data distribution $E^{k} \sim \mathbb{P}(\cdot\mid w_k)$, we have
	\begin{equation}
		\label{Donsker-ieq-E}
		\mathbb{E}_{E^{k}}\big[\operatorname{exp}{\{\mathbb{E}_{\mu}[T]-\KL(\mu\parallel \nu)\}}\big] \leq \mathbb{E}_{E^{k}}\mathbb{E}_{\nu}[\operatorname{exp}(T)]
	\end{equation}
	Let $T=g(\theta,w_k)-\log \mathbb{E}_{\tilde{E}^k}\left[\exp(g(\theta,w_k))\mid E^k\right]$ where 
	$$
	g(\theta,w_k)=\frac{1}{2}\sum_{n=1}^{N} \sum_{t=1}^T \log \frac{\mathbb{P}(x^{k,n}_{t+1}|E^{k,n}_t, w_k)}{\mathbb{P}_\theta(x^{k,n}_{t+1}|E^{k,n}_t, w_k)} 
	$$
	Then for the right hand of inequality \ref{Donsker-ieq-E}, we have
	$$
	\begin{aligned}
		\mathbb{E}_{E^{k}}\mathbb{E}_{\nu}[\operatorname{exp}(T)]
		=&\mathbb{E}_{E^k}\mathbb{E}_{\nu}\left[\exp\left(g(\theta,w_k)-\log \mathbb{E}_{\tilde{E}^k}\left[\exp(g(\theta,w_k))\mid E^k\right]\right)\right]\\
		=&\mathbb{E}_{\nu}\mathbb{E}_{E^k}\left[\exp\left(g(\theta,w_k)-\log \mathbb{E}_{\tilde{E}^k}\left[\exp(g(\theta,w_k))\mid E^k\right]\right)\right]\\
		=&\mathbb{E}_{\nu}\mathbb{E}_{E^k}\left[\frac{\operatorname{exp}\left(g(\theta,w_k)\right)}{\mathbb{E}_{\tilde{E}^k}\left[\exp(g(\theta,w_k))\mid E^k\right]}\right]=1
	\end{aligned}
	$$
	Similarly to \cite{agarwal2020flambe}, the last equality follows that the token in tangent sequence $\tilde{E}^k$ is independent conditional on ${E}^k$, so we have $\mathbb{E}_{\tilde{E}^k}\left[\exp(g(\theta,w_k))\mid E^k\right]=\mathbb{E}_{\tilde{x}^{k,n}_{t+1} \sim \mathbb{P}(\cdot\mid E^{k,n}_t,w_k)}\left[\prod_{n=1}^N \prod_{t=1}^T \exp{\left(\frac{1}{2}\log \frac{\mathbb{P}(\tilde{x}^{k,n}_{t+1}|E^{k,n}_t, w_k)}{\mathbb{P}_\theta(\tilde{x}^{k,n}_{t+1}|E^{k,n}_t, w_k)}\right)} \right]$. Thus, inequality \ref{Donsker-ieq-E} can be transformed to
	\begin{equation}
		\label{Donsker-ieq-E-2}
		\mathbb{E}_{E^{k}}\big[\operatorname{exp}{\{\mathbb{E}_{\mu}[T]-\KL(\mu\parallel \nu)\}}\big] \leq 1
	\end{equation}
	
	With Markov Inequality $\mathbb{P}[X \geq a]\leq \frac{\mathbb{E}[X]}{a}$, we get the following high probability representation with probability at least $1-\delta$,
	\begin{align}
		&\text{let}\ X=\mathbb{E}_{\mu}[T]-\KL(\mu\parallel \nu) \Rightarrow \mathbb{P}[e^X \geq e^a]\leq \frac{\mathbb{E}[e^X]}{e^a} \leq \frac{1}{e^a} \Rightarrow \mathbb{P}(X \leq \operatorname{log}\frac{1}{\delta}) \geq 1-\delta\nonumber\\
		&\mathbb{E}_{\mu}\left[g(\theta,w_k)\right]-\mathbb{E}_{\mu}\left[\log \mathbb{E}_{\tilde{E}^k}\left[\exp(g(\theta,w_k))\mid E^k\right]\right] \leq \KL(\mu\parallel \nu) +\operatorname{log}\frac{1}{\delta} \label{Donsker-ieq-E-3} 
	\end{align}
	For inequality \ref{Donsker-ieq-E-3}, we mainly deal with the left hand in this Theorem and make more detailed analysis of KL divergence in Theorem \ref{app:pre-gen-data-dependent} to get data-dependent and optimization algorithm-dependent PAC-Bayesian generalization bound.
    \begin{align}
		&\mathbb{E}_{\mu}\left[g(\theta,w_k)\right]-\mathbb{E}_{\mu}\left[\log \mathbb{E}_{\tilde{E}^k}\left[\exp(g(\theta,w_k))\mid E^k\right]\right] \nonumber
		\\
        \geq& \mathbb{E}_{\mu}\left[\frac{1}{2}\sum_{n=1}^{N}\sum_{t=1}^T\operatorname{log}\frac{\mathbb{P}(x^{k,n}_{t+1}|E^{k,n}_t,w_k)}{\mathbb{P}_{\theta}(x^{k,n}_{t+1}|S^{k,n}_t,w_k)}\right]- \nonumber \\
        &\mathbb{E}_{\mu}\left[\sum_{n=1}^{N}\sum_{t=1}^T\operatorname{log}\mathbb{E}_{\tilde{E}^k}\left[\operatorname{exp}\left(-\frac{1}{2}\operatorname{log}\frac{\mathbb{P}_{\theta}(\tilde{x}^{k,n}_{t+1}|E^{k,n}_t,w_k)}{\mathbb{P}(\tilde{x}^{k,n}_{t+1}|E^{k,n}_t,w_k)}\right)\mid E^k\right]\right]
	\end{align}

	Using Lemma \ref{lemma:TV}, the second term in the right hand can be transformed to the total variation distance (TV distance) of distribution $\mathbb{P}_\theta$ and $\mathbb{P}$.
	\begin{align}
		&\mathbb{E}_{\mu}\left[g(\theta,w_k)\right]-\mathbb{E}_{\mu}\left[\log \mathbb{E}_{\tilde{E}^k}\left[\exp(g(\theta,w_k)))\mid E^k\right]\right] \nonumber\\
		\geq&\mathbb{E}_{\mu}\left[\frac{1}{2}\sum_{n=1}^{N}\sum_{t=1}^T\operatorname{log}\frac{\mathbb{P}(x^{k,n}_{t+1}|E^{k,n}_t,w_k)}{\mathbb{P}_{\hat{\theta}}(x^{k,n}_{t+1}|E^{k,n}_t,w_k)}\frac{\mathbb{P}_{\hat{\theta}}(x^{k,n}_{t+1}|E^{k,n}_t,w_k)}{\mathbb{P}_\theta(x^{k,n}_{t+1}|E^{k,n}_t,w_k)}\right]\\
		&+\mathbb{E}_{\mu}\left[\frac{1}{2}\sum_{n=1}^{N}\sum_{t=1}^T\TV\big(\mathbb{P}_{\theta}(\cdot\mid E^{k,n}_t,w_k),\mathbb{P}(\cdot\mid E^{k,n}_t,w_k)\big)^2\right] \nonumber\\
		=&\frac{1}{2}\sum_{n=1}^{N}\sum_{t=1}^T\operatorname{log}\frac{\mathbb{P}(x^{k,n}_{t+1}|E^{k,n}_t,w_k)}{\mathbb{P}_{\hat{\theta}}(x^{k,n}_{t+1}|E^{k,n}_t,w_k)}+\frac{1}{2}\sum_{n=1}^{N}\sum_{t=1}^T\mathbb{E}_{\mu}\left[\operatorname{log}\frac{\mathbb{P}_{\hat{\theta}}(x^{k,n}_{t+1}|E^{k,n}_t,w_k)}{\mathbb{P}_\theta(x^{k,n}_{t+1}|E^{k,n}_t,w_k)}\right]\nonumber\\
		&+\mathbb{E}_{\mu}\left[\frac{1}{2}\sum_{n=1}^{N}\sum_{t=1}^T\TV\big(\mathbb{P}_{\theta}(\cdot\mid E^{k,n}_t,w_k),\mathbb{P}(\cdot\mid E^{k,n}_t,w_k)\big)^2\right] \nonumber\\
		\geq& \frac{1}{2}\sum_{n=1}^{N}\sum_{t=1}^T\mathbb{E}_{\mu}\left[\operatorname{log}\frac{\mathbb{P}_{\hat{\theta}}(x^{k,n}_{t+1}|E^{k,n}_t,w_k)}{\mathbb{P}_{{\theta}}(x^{k,n}_{t+1}|E^{k,n}_t,w_k)}\right]\nonumber\\
		&+\mathbb{E}_{\mu}\left[\frac{1}{2}\sum_{n=1}^{N}\sum_{t=1}^T\TV\big(\mathbb{P}_{\theta}(\cdot\mid E^{k,n}_t,w_k),\mathbb{P}(\cdot\mid E^{k,n}_t,w_k)\big)^2\right] \label{huajian}
	\end{align}
	where $\hat{\theta}$ is the minimum of empirical loss \ref{eq-L-E}. 
	Thus, substitute inequality \ref{huajian} into \ref{Donsker-ieq-E-3},  
	\begin{multline}
		\label{emperical-res-wk}
		\mathbb{E}_{\mu}\left[\frac{1}{2}\sum_{n=1}^{N}\sum_{t=1}^T\TV\big(\mathbb{P}_{\theta}(\cdot\mid E^{k,n}_t,w_k),\mathbb{P}(\cdot\mid E^{k,n}_t,w_k)\big)^2\right]
		\\+\frac{1}{2}\sum_{n=1}^{N}\sum_{t=1}^T\mathbb{E}_{\mu}\left[\operatorname{log}\frac{\mathbb{P}_{\hat{\theta}}(x^{k,n}_{t+1}|E^{k,n}_t,w_k)}{\mathbb{P}_{{\theta}}(x^{k,n}_{t+1}|E^{k,n}_t,w_k)}\right]
		\leq \KL(\mu\parallel \nu) +\operatorname{log}\frac{1}{\delta}
	\end{multline}
	The result of inequality \ref{emperical-res-wk} is analysised under a fixed topic $w_k$, then combining all $w_k \in \mathcal{W}_{\text{pre}}$ and taking average 
	\begin{multline}
		\mathbb{E}_{\mu}\left[\frac{1}{KNT}\sum_{k=1}^K\sum_{n=1}^{N}\sum_{t=1}^T\TV\big(\mathbb{P}_{\theta}(\cdot\mid E^{k,n}_t,w_k),\mathbb{P}(\cdot\mid E^{k,n}_t,w_k)\big)^2\right] \\
		\leq \frac{2}{KNT}\left(\KL(\mu\parallel \nu)+\log\frac{1}{\delta}\right)-\underbrace{\frac{1}{KNT}\sum_{k=1}^K\sum_{n=1}^{N}\sum_{t=1}^T \mathbb{E}_{\mu}\left[\operatorname{log}\frac{\mathbb{P}_{\hat{\theta}}(x^{k,n}_{t+1}|E^{k,n}_t,w_k)}{\mathbb{P}_{\theta}(x^{k,n}_{t+1}|E^{k,n}_t,w_k)}\right]}_{\epsilon_{\text{opt}}}
	\end{multline}
	where the second term in the right hand is denoted as $\epsilon_{\text{opt}}$ measuring the logarithmic distribution distance between the ideal minimum $\hat{\theta}$ and the trained model $\theta$ with empirical loss. Specially, we defer the analysis of optimization error to future work. Here, we assume that the results of the actual models obtained closely approximates the ideal minimum for empirical loss, implying that \(\epsilon_{\text{opt}}\) is a very small value so that this item will be kept in the upper bounds of the first-level expected loss and two-level expected loss. Thus,
	\begin{align}
		&\mathbb{E}_{\mu}\left[\frac{1}{KNT}\sum_{k=1}^K\sum_{n=1}^{N}\sum_{t=1}^T\TV\big(\mathbb{P}_{\theta}(\cdot\mid E^{k,n}_t,w_k),\mathbb{P}(\cdot\mid E^{k,n}_t,w_k)\big)\right]\nonumber\\
		\leq&\sqrt{ \mathbb{E}_{\mu}\left[\frac{1}{KNT}\sum_{k=1}^K\sum_{n=1}^{N}\sum_{t=1}^T\TV\big(\mathbb{P}_{\theta}(\cdot\mid E^{k,n}_t,w_k),\mathbb{P}(\cdot\mid E^{k,n}_t,w_k)\big)^2\right]} \nonumber\\
		\leq& \sqrt{\frac{2\left(\KL(\mu\parallel \nu)+\operatorname{log}\frac{1}{\delta}\right)}{KNT}-\epsilon_{\text{opt}}}  \label{emperical-res-wpre}
	\end{align}
	
	Using Assumption \ref{ass:B}, assume $\log\frac{\mathbb{P}(\cdot\mid E^{k,n}_t,w_k)}{\mathbb{P}_{\theta}(\cdot\mid E^{k,n}_t,w_k)}$ is upper bounded by $C$. Thus using Proposition \ref{prop1: chernoff}, with probability at least $1-\delta$,
	\begin{multline}\label{first-exp-minus}
		\mathbb{E}_{\mu}\biggl[\frac{1}{KNT}\sum_{k=1}^K\sum_{n=1}^N\sum_{t=1}^T\mathbb{E}_{E^{k,n}_t}\left[\TV\big(\mathbb{P}_{\theta}(\cdot\mid E^{k,n}_t,w_k),\mathbb{P}(\cdot\mid E^{k,n}_t,w_k)\big)\right] \\
		-\frac{1}{KNT}\sum_{k=1}^K\sum_{n=1}^{N}\sum_{t=1}^T\TV\big(\mathbb{P}_{\theta}(\cdot\mid E^{k,n}_t,w_k),\mathbb{P}(\cdot\mid E^{k,n}_t,w_k)\big)\biggr]
		\leq \sqrt{\frac{2C^2\cdot \tau_{\min}\log \frac{1}{\delta}}{KNT}}
	\end{multline}
	
	Finally, according to Equation \ref{eq:gen-pre}, \ref{eq-L-W} and \ref{eq-L-E}, the generalization error bound of the first-level expected loss is $
	\text{gen}_\text{seq}=L(\theta,\mathcal{W}_{\text{pre}})-L_E(\theta,\mathcal{W}_{\text{pre}})$. Combining inequality \ref{emperical-res-wpre}, \ref{first-exp-minus} and Lemma \ref{lemma:KL-TV-bound}, $L(\theta,\mathcal{W}_{\text{pre}})$ can be bounded by
	\begin{align}
		&\mathbb{E}_{\mu}\left[\frac{1}{KNT}\sum_{k=1}^K\sum_{n=1}^N\sum_{t=1}^T\mathbb{E}_{E^{k,n}_t}\left[\KL\big(\mathbb{P}(\cdot\mid E^{k,n}_t,w_k) \parallel \mathbb{P}_{\theta}(\cdot\mid E^{k,n}_t,w_k)\big)\right]\right]\nonumber\\
		\leq & \frac{2C\log C}{C-1}\cdot \mathbb{E}_{\mu}\left[\frac{1}{KNT}\sum_{k=1}^K\sum_{n=1}^N\sum_{t=1}^T\mathbb{E}_{E^{k,n}_t}\left[\TV\big(\mathbb{P}_{\theta}(\cdot\mid E^{k,n}_t,w_k),\mathbb{P}(\cdot\mid E^{k,n}_t,w_k)\big)\right]\right]\nonumber\\
		\leq& \frac{2C\log C}{C-1}\left(\sqrt{\frac{2C^2\cdot \tau_{\min}\log \frac{1}{\delta}}{KNT}}+\sqrt{\frac{2\left(\KL(\mu\parallel \nu)+\operatorname{log}\frac{1}{\delta}\right)}{KNT}-\epsilon_{\text{opt}}}\right) \nonumber\\	
		=&\mathcal{O}\left\{\sqrt{\frac{\log \frac{1}{\delta}}{KNT}}+\sqrt{\frac{\KL(\mu\parallel \nu)+\operatorname{log}\frac{1}{\delta}}{KNT}-\epsilon_{\text{opt}}}\right\}
	\end{align}
    Naturally, to simplify, for given any prefix sequence $P$, we have
    \begin{align}
		&\mathbb{E}_{\mu}\left[\frac{1}{K}\sum_{k=1}^K\mathbb{E}_{P}\left[\KL\big(\mathbb{P}(\cdot\mid P,w_k) \parallel \mathbb{P}_{\theta}(\cdot\mid P,w_k)\big)\right]\right] \nonumber \\
		=&\mathcal{O}\left\{\sqrt{\frac{\log \frac{1}{\delta}}{KNT}}+\sqrt{\frac{1}{KNT}\left(\KL(\mu\parallel \nu)+\operatorname{log}\frac{1}{\delta}\right)-\epsilon_{\text{opt}}}\right\}
	\end{align}
\end{proof}

\subsubsection{Proof of Theorem \ref{app:pre-gen-data-dependent}}\label{appendix-the-2}
\begin{theorem*}[Data-Dependent and Optimization-Dependent Generalization Bound of the First-Level Expected Loss] Under the conditions maintained in Theorem \ref{app:pre-gen} and Assumption \ref{ass: lipschitz}, when considering data-dependent prior $\mu_J$, for any $0<\delta < 1$, with probability at least $1-\delta$, the first-level expected loss with $K$ topics and infinite sequences per topic, denoted by $L(\theta, \mathcal{W}_{\text{pre}})$ (see in Equation \ref{eq-L-Wpre-two-part-final-main} or Equation \ref{eq-L-W}), satisfies,
	\begin{align*}
			\mathbb{E}_{\mu}\left[L(\theta, \mathcal{W}_{\text{pre}})\right]
			=\mathcal{O}\left\{\sqrt{\frac{ \log 1/\delta}{K(N-N^\prime)T}} + \sqrt{\frac{1}{K(N-N^\prime)T}\left(\KL(\mu\parallel\nu_J)+\log \frac{1}{\delta}\right)-\epsilon_{\text{opt}}}\right\},
	\end{align*}
	then detailing the term $\KL(\mu \parallel \nu_J)$, $L(\theta, \mathcal{W}_{\text{pre}})$ further satisfies,
	\begin{align}\label{app-the3-right}
			\mathcal{O}\left\{\sqrt{\frac{\log 1/\delta}{K(N-N^\prime)T}}+\sqrt{\frac{1}{K(N-N^\prime)T}\left[\frac{L^2C(\frac{1}{N_{\text{param}}},T^\prime)}{N^\prime}+\log \frac{1}{\delta}\right]-\epsilon_{\text{opt}}}\right\},
	\end{align}
    where $C(\frac{1}{N_{\text{param}}},T^\prime)=\frac{\beta}{2}e^{8\beta S}\left(1-e^{-\frac{{T^\prime}}{\exp(8\beta S)}}\right)$. $\epsilon_{\text{opt}}$ is the optimization error (see in Equation \ref{opt}). $K$, $N (N^\prime)$ and $T$ denote the number of topics, the number of sequences per topic and the sequence length utilized in the optimization process of Equation $\ref{eq-L-E}$. $T^\prime$ denotes the total training iterations. $N_{\text{param}}$ denotes the number of model parameters.
\end{theorem*}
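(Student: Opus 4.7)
The plan is to obtain the theorem in two stages. First, I would essentially reuse the proof of Theorem~\ref{app:pre-gen} verbatim but restrict the training data to $E_I$ (the $N-N'$ sequences per topic that do not contribute to the data-dependent prior $\nu_J$). Because $E_J$ and $E_I$ are disjoint, $\nu_J$ is independent of the randomness in $E_I$, so the Donsker--Varadhan step applied inside the proof of Theorem~\ref{app:pre-gen} goes through with $\nu$ replaced by $\nu_J$ and the empirical averages running only over $E_I$. This immediately yields the first displayed bound, with $KNT$ replaced by $K(N-N')T$ and $\KL(\mu\|\nu)$ replaced by $\KL(\mu\|\nu_J)$. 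No new analytic ingredient is needed here beyond bookkeeping about which sequences are being averaged.

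The second, substantive stage is to upper bound $\KL(\mu\|\nu_J)$ via the optimization trajectory. I would model both the posterior training (producing $\mu$) and the parallel prior training (producing $\nu_J$) as Continuous Langevin Dynamics in the sense of Definition~\ref{def:GLD-CLD}, driven respectively by $\nabla L_{E_I}(\theta,\mathcal{W}_{\text{pre}})$ and $\nabla L_{E_J}(\theta,\mathcal{W}_{\text{pre}})$, starting from the same initialization. Let $p_t$ and $q_t$ denote the associated parameter densities, each obeying a Fokker--Planck equation (Definition~\ref{def:Fokker}). Differentiating $\KL(p_t\|q_t)$ in $t$ and using the two Fokker--Planck equations produces two contributions: an entropy-dissipation term that is non-positive, and a cross term controlled by $\langle \nabla L_{E_I}-\nabla L_{E_J},\ \nabla\log(p_t/q_t)\rangle$. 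After an application of Young's inequality, the cross term splits into (i) an integrated squared gradient mismatch $\|\nabla L_{E_I}-\nabla L_{E_J}\|_2^{2}$ and (ii) a Fisher-information term that can be absorbed by the Log-Sobolev inequality of Lemma~\ref{lemma:LSI} (whose constant is $\exp(8\beta S)/(2\beta)$ under the boundedness Assumption~\ref{ass:B}).

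This yields a linear differential inequality of the form $\frac{d}{dt}\KL(p_t\|q_t)\le -\alpha\,\KL(p_t\|q_t)+\gamma_t$ with $\alpha=\exp(-8\beta S)$ and $\gamma_t$ proportional to $\|\nabla L_{E_I}(\theta_t)-\nabla L_{E_J}(\theta_t)\|_2^{2}$. To control $\gamma_t$ in expectation I would use the decomposition of $L_{E_I}-L_{E_J}$ into a difference of averages of $L_{E^{k,n}}$ over disjoint index sets and Assumption~\ref{ass: lipschitz}, which gives a uniform gradient bound of order $L^2/N'$ (the $1/N'$ factor comes from averaging $N'$ bounded gradients, exactly as in Lemma~G.5 of \citet{li2019generalization}). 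Integrating the Grönwall-type inequality from $t=0$ to $T'$ produces $\KL(\mu\|\nu_J)=\mathcal{O}\bigl(L^2 C(\tfrac{1}{N_{\text{param}}},T')/N'\bigr)$ with $C(\tfrac{1}{N_{\text{param}}},T')=\tfrac{\beta}{2}e^{8\beta S}(1-e^{-T'/\exp(8\beta S)})$. Plugging this into the first-stage bound gives \eqref{app-the3-right}.

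The main obstacle I anticipate is justifying the coupling of the two Langevin dynamics cleanly enough that the LSI constant of Lemma~\ref{lemma:LSI} is the right one to close the Grönwall argument; in particular the LSI is stated for the reference process at time $t$, so one must be careful to apply it to $p_t$ (or $q_t$) with respect to the appropriate drift, rather than to an arbitrary density. A secondary but delicate point is handling the dependence of $S$ on model size through the scaling-law reading $S\asymp (N_c/N_{\text{param}})^{\alpha_N}$ used after Assumption~\ref{ass:B}; this is what lets $C$ be written as $C(\tfrac{1}{N_{\text{param}}},T')$ rather than $C(\beta,T')$.
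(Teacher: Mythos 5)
Your proposal follows the paper's proof essentially step by step: Stage~1 is the restriction of Theorem~\ref{app:pre-gen}'s Donsker--Varadhan argument to the subset $E_I$ with the independent data-dependent prior $\nu_J$, and Stage~2 is the Fokker--Planck/Log-Sobolev/Gr\"{o}nwall control of $\KL(\mu\parallel\nu_J)$, exactly as in Appendix~\ref{appendix-the-2}. The only step you compress is that the $L^2/N'$ control of the gradient-mismatch term must hold with probability $1-\delta$ over the random split $J$ (the paper establishes this by applying the McDiarmid-type Lemma~\ref{lemma:mc-data-dependent} to the functional $G(J)$, together with the variance bound $\mathbb{E}_J[\|\nabla L_{E_I}-\nabla L_{E_J}\|_2^2]\le 4L^2/N'$), not merely in expectation---but your pointer to Lemma~G.5 of \citet{li2019generalization} indicates you have the intended concentration argument in mind.
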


\paragraph{Proof sketch of the use of continous mathematical analysis techniques.} We analyse the training dynamic of transformer via Continuous Langevin Dynamics (CLD) which is the continous form of Gradient Langevin Dynamics (GLD). To bound the KL divergence of two distribution, we transform the problem into measuring the KL divergence of pdfs. We first derive the derivative of KL divergence w.r.t. time $t$. This derivative can be decomposed into two parts, corresponding to the time derivatives of the two pdfs, which can be described by the Fokker-Planck Equation. Next, using Log-Sobolev Inequality, we bound the logorithm distance of two pdfs. By solvin the SDE, we obtain an upper bound for the KL divergence. Finally, referring to the proof of Lemma G.5 in \cite{li2019generalization}, we demonstrate that the integral of the gradient difference of $\big\|\nabla L_{E_I}(\theta_t, \mathcal{W}_{\text{pre}})-\nabla L_{E_J}(\theta_t, \mathcal{W}_{\text{pre}})\big\|_2^2$. Consequently, we get data-dependent and optimization-dependent generalization bound.

\begin{proof}
	In this Theorem, we analysis KL divergence to get data-dependent and optimization algorithm-dependent generalization bound. First, we analyse the training dynamic of transformer via Continuous Langevin Dynamics (CLD),
	$$
	\mathrm{d} \theta_t=-\nabla L_{E_I}(\theta_{t-1}, \mathcal{W}_{\text{pre}})\mathrm{d} t+\sqrt{\beta^{-1}} \mathrm{~d} B_t, \quad \theta_0 \sim \mu_0
	$$
	where $L_{E_I}(\theta, \mathcal{W}_{\text{pre}})=\frac{1}{K(N-N^{\prime})T}\sum_{k=1}^K\sum_{n=1}^{N-N^{\prime}}\sum_{t=1}^T \log \frac{\mathbb{P}(x^{k,n}_{t+1}|E^{k,n}_t, w_k)}{\mathbb{P}_\theta(x^{k,n}_{t+1}|E^{k,n}_t, w_k)}$, and $B_t$ is the standard brown emotion.
	
	Split pre-training sequences under fixed topic $w_k$ into two parts $E^k_I$ and $E^k_J$ (where $J$ is a random sequence including $N^{\prime}$ indexes uniformly sampled from $[N]$ without replacement and $I$ is $[N]\setminus J$). Under pre-training topics, we have $E_I=\{E^k_I\}_{k=1}^K$ and $E_J=\{E^k_J\}_{k=1}^K$. Assume that the prior distribution of model parameters $\theta$ is depending on the subset $E^k_J$, which is denoted by $\nu_J$ and the posterior distribution of $\theta$ is depending on $E^k_I$ denoted by $\mu$.
	
	Let $\Theta=(\theta_t)_{t\geq 0}$ and $\Theta^\prime=(\theta_t^\prime)_{t\geq 0}$ be the trajectory trained on sequences ${E}^k_I$ and ${E}^k_J$ for fixed topic $w_k$, which are the parallel training process based on the same model architecture. Let $\mu$ and $\nu_J$ be the distribution of $\Theta$ and $\Theta^\prime$ respectively, $p_t$ and $q_t$ be the pdf of $\Theta$ and $\Theta^\prime$ and the total steps of iteration is $T^\prime$. $\KL(\mu \parallel \nu_J)$ is equal to $\KL(p_{T^\prime}\parallel q_{T^\prime})$. To bound $\KL(p_{T^\prime}\parallel q_{T^\prime})$, we first apply Leibniz’s rule and the chain rule on it:
	$$
	\begin{aligned}
		\frac{\mathrm{d}}{\mathrm{d}t}D_{KL}(p_t||q_t)
		=&\frac{\mathrm{d}}{\mathrm{d}t}\int_{\mathbb{R}^d}p_t\operatorname{log}\frac{p_t}{q_t}\mathrm{d}\theta \\
		=& \int_{\mathbb{R}^d} (\frac{\mathrm{d}p_t}{\mathrm{d}t}\operatorname{log}\frac{p_t}{q_t}+p_t\cdot \frac{q_t}{p_t}\cdot \frac{\frac{\mathrm{d}p_t}{\mathrm{d}t}q_t-p_t\frac{\mathrm{d}q_t}{\mathrm{d}t}}{q_t^2})\mathrm{d}\theta \\
		=& \underbrace{\int_{\mathbb{R}^d} \frac{\mathrm{d}p_t}{\mathrm{d}t}\operatorname{log}\frac{p_t}{q_t}\mathrm{d}\theta}_{\text{(A)}} - \underbrace{\int_{\mathbb{R}^d} \frac{p_t}{q_t}\frac{\mathrm{d}q_t}{p_t}\mathrm{d}\theta}_{\text{(B)}},
	\end{aligned}
	$$
	where the last equality follows from that $\int \frac{\mathrm{d}p_t}{\mathrm{d}t}\mathrm{d}\theta = \frac{\mathrm{d}}{\mathrm{d}t} \int p_t\mathrm{d}\theta = 0,$ since $p_t$ is a probability measure.
	By Fokker-Planck Equation for $p_t$, $\frac{\partial p_t}{\partial t} = \frac1\beta\Delta p_t+\nabla \cdot(p_t \nabla L_{E_I}(\theta, \mathcal{W}_{\text{pre}}))$.
	
	Then we bound term $A$,
    \small{
	$$
	\begin{aligned}
		\text{A} :=&\int_{\mathbb{R}^d}\left(\frac{\mathrm{d}p_t}{\mathrm{d}t}\log\frac{p_t}{q_t}\right)\mathrm{d}\theta  \\
		=&\int_{\mathbb{R}^d}\left(\frac1\beta\Delta p_t+\nabla \cdot(p_t L_{E_I}(\theta, \mathcal{W}_{\text{pre}}))\right)\log\frac{p_t}{q_t}\mathrm{d}\theta \\
		=&\frac1\beta\left[\int_{\mathbb{R}^d}\Delta p_t\log\frac{p_t}{q_t}\mathrm{d}\theta\right]+\int_{\mathbb{R}^d}\nabla\cdot(p_tL_{E_I}(\theta, \mathcal{W}_{\text{pre}}))\log\frac{p_t}{q_t}\mathrm{d}\theta \\
		=&\frac1\beta\left[\nabla p_t\operatorname{log}\frac{p_t}{q_t}-\int_{\mathbb{R}^d}\langle\nabla\log\frac{p_t}{q_t},\nabla p_t\rangle\mathrm{d}\theta\right]\\
        &+\left[p_t\nabla L_{E_I}(\theta, \mathcal{W}_{\text{pre}})\log\frac{p_t}{q_t}-\int_{\mathbb{R}^d}\langle\nabla\log\frac{p_t}{q_t},p_tL_{E_I}(\theta, \mathcal{W}_{\text{pre}})\rangle\mathrm{d}\theta\right]\\
		=&\frac{-1}\beta\int_{\mathbb{R}^d}\langle\nabla\log\frac{p_t}{q_t},\nabla p_t\rangle\mathrm{d}\theta-\int_{\mathbb{R}^d}\langle\nabla\log\frac{p_t}{q_t},p_tL_{E_I}(\theta, \mathcal{W}_{\text{pre}})\rangle\mathrm{d}\theta
	\end{aligned}
	$$
    }
	Bound term $B$, 
	$$
	\begin{aligned}
		\text{B}& :=\int_{\mathbb{R}^d}\left(\frac{p_t}{q_t}\frac{\mathrm{d}q_t}{\mathrm{d}t}\right)\mathrm{d}w  \\
		&=\int_{\mathbb{R}^d}\frac{p_t}{q_t}\left(\frac1\beta\Delta q_t+\nabla\cdot(q_t\nabla L_{E_J}(\theta, \mathcal{W}_{\text{pre}}))\right)\mathrm{d}w \\
		&=\frac1\beta\left[\int_{\mathbb{R}^d}\frac{p_t}{q_t}\Delta q_t\mathrm{d}w\right]+\int_{\mathbb{R}^d}\frac{p_t}{q_t}\nabla\cdot(q_t\nabla L_{E_J}(\theta, \mathcal{W}_{\text{pre}}))\mathrm{d}w \\ 
		&=\frac1\beta\left[\frac{p_t}{q_t}\nabla q_t-\int_{\mathbb{R}^d}\langle\nabla\frac{p_t}{q_t},\nabla q_t\rangle\mathrm{d}w\right]+\left[\frac{p_t}{q_t}q_t\nabla L_{E_J}(\theta, \mathcal{W}_{\text{pre}})-\int_{\mathbb{R}^d}\langle\nabla\frac{p_t}{q_t},q_t\nabla L_{E_J}(\theta, \mathcal{W}_{\text{pre}})\rangle\mathrm{d}w\right]\\
		&=\frac{-1}\beta\int_{\mathbb{R}^d}\langle\nabla\frac{p_t}{q_t},\nabla q_t\rangle\mathrm{d}w-\int_{\mathbb{R}^d}\langle\nabla\frac{p_t}{q_t},q_t\nabla L_{E_J}(\theta, \mathcal{W}_{\text{pre}})\rangle\mathrm{d}w
	\end{aligned}
	$$
	In summary, the deviation of $D_{KL}(p_t||q_t)$ can be bounded,
	$$
	\begin{aligned}
		&\frac{\mathrm{d}}{\mathrm{d}t}D_{KL}(p_t||q_t) \\
		=& \frac{-1}\beta\int_{\mathbb{R}^d}\langle\nabla\log\frac{p_t}{q_t},\nabla p_t\rangle\mathrm{d}w-\int_{\mathbb{R}^d}\langle\nabla\log\frac{p_t}{q_t},p_t\nabla L_{E_I}(\theta, \mathcal{W}_{\text{pre}})\rangle\mathrm{d}w \\
		&+ \frac{1}\beta\int_{\mathbb{R}^d}\langle\nabla\frac{p_t}{q_t},\nabla q_t\rangle\mathrm{d}w + \int_{\mathbb{R}^d}\langle\nabla\frac{p_t}{q_t},q_t\nabla L_{E_J}(\theta, \mathcal{W}_{\text{pre}})\rangle\mathrm{d}w \\
		=& \frac{-1}{\beta}\int_{\mathbb{R}^d}\left(\langle\nabla\log\frac{p_t}{q_t},\nabla p_t\rangle-\langle\nabla\frac{p_t}{q_t},\nabla q_t\rangle\right)\mathrm{d}w \\
		&- \int_{\mathbb{R}^d}\left(\langle\nabla\log\frac{p_t}{q_t},p_t\nabla L_{E_I}(\theta, \mathcal{W}_{\text{pre}})\rangle-\langle\nabla\frac{p_t}{q_t},q_t\nabla L_{E_J}(\theta, \mathcal{W}_{\text{pre}})\rangle\right)\mathrm{d}w \\
		=& \frac{-1}{\beta}\int_{\mathbb{R}^d}\left(\langle \frac{\nabla p_t}{p_t}-\frac{\nabla q_t}{q_t},\nabla p_t\rangle-\langle\frac{\nabla p_t}{q_t}-\frac{p_t\nabla q_t}{q_t^2},\nabla q_t\rangle\right)\mathrm{d}w \\ & - \int_{\mathbb{R}^d}\left(\langle\nabla\log\frac{p_t}{q_t},p_t\nabla L_{E_I}(\theta, \mathcal{W}_{\text{pre}})\rangle-\frac{p_t}{q_t}\langle\nabla\log\frac{p_t}{q_t},q_t\nabla L_{E_J}(\theta, \mathcal{W}_{\text{pre}})\rangle\right)\mathrm{d}w \\
		=& \frac{-1}{\beta}\int_{\mathbb{R}^d}p_t\big\|\nabla \log \frac{p_t}{q_t}\big\|^2_2 \mathrm{d}w + \int_{\mathbb{R}^d}p_t\langle\nabla \log \frac{p_t}{q_t}, \nabla L_{E_I}(\theta, \mathcal{W}_{\text{pre}})-\nabla L_{E_J}(\theta, \mathcal{W}_{\text{pre}})\rangle\mathrm{d}w
	\end{aligned}
	$$
	Since for any constant $c \neq 0$, vector $\alpha$ and $\beta$, we have the inequality $\langle\frac{\alpha}{\sqrt{c}},\beta\sqrt{c}\rangle \leq \frac{\|\alpha\|^2}{2c}+\frac{c\|\beta\|^2}{2}$, then we can transform the last equality into
	$$
	\frac{\mathrm{d}}{\mathrm{d}t}D_{KL}(p_t||q_t) \leq \frac{-1}{2\beta}\int_{\mathbb{R}^d}p_t\big\|\nabla \log \frac{p_t}{q_t}\big\|^2_2 \mathrm{d}w + \frac{\beta}{2}\int_{\mathbb{R}^d} p_t \big\|\nabla L_{E_I}(\theta_t, \mathcal{W}_{\text{pre}})-\nabla L_{E_J}(\theta_t, \mathcal{W}_{\text{pre}})\big\|^2_2\mathrm{d}w
	$$
	According to Lemma \ref{lemma:LSI} (Log-Sobolev Inequality for CLD) and Assumption \ref{ass:B}, then $\left|L_E(\theta,\mathcal{W}_{\text{pre}})\right|\leq S$ and $\theta_0 \sim \mathcal{N}(0,\frac{1}{\beta}I_d)$, we have $\int_{\mathbb{R}^d}p_t\left\|\nabla\log\frac{p_t}{q_t}\right\|_2^2\mathrm{d}\theta \geq \frac{2\beta}{\exp(8\beta S)}\KL\left(p_t||q_t\right)$. Transform the first term in the right hand with the LSI inequality,
	$$
	\frac{\mathrm{d}}{\mathrm{d}t}D_{KL}(p_t||q_t) \leq -\frac{1}{\exp(8\beta S)}\KL\left(p_t||q_t\right)+\frac{\beta}{2}\mathbb{E}_{\theta_t}\left[\nabla L_{E_I}(\theta_t, \mathcal{W}_{\text{pre}})-\nabla L_{E_J}(\theta_t, \mathcal{W}_{\text{pre}})\big\|^2_2\right]
	$$
	Let $\phi(t)=D_{KL}(p_t||q_t)$, $\delta(t)=\frac{\beta}{2}\mathbb{E}_{\theta_t}\left[\nabla L_{E_I}(\theta_t, \mathcal{W}_{\text{pre}})-\nabla L_{E_J}(\theta_t, \mathcal{W}_{\text{pre}})\big\|^2_2\right],$ $\alpha=\frac{1}{\exp(8\beta S)}$, then we get the following difference equation:
	$$
	\phi^{\prime}(t) = -\alpha \phi(t) + \delta(t),\ \phi(0)=0
	$$
	Solve the equation:
	$$
	D_{KL}\left(p_{T^\prime}\parallel q_{T^\prime}\right)\leq\frac\beta2\int_0^{T^\prime}e^\alpha(t-{T^\prime})\mathbb{E}_{\theta_t}\left[\left\|\nabla L_{E_I}(\theta_t, \mathcal{W}_{\text{pre}})-\nabla L_{E_J}(\theta_t, \mathcal{W}_{\text{pre}})\right\|_2^2\right]\mathrm{d}t,\ \alpha=\frac{1}{\exp(8\beta S)}.
	$$
	Furthermore, in order to get the upper bound of integral in the right hand, we first define that
	$$
	G(J) = \sqrt{\mathbb{E}_{\theta_t}\left[\int_0^{T^\prime} e^{\alpha(t-{T^\prime})}\left[\big\|\nabla L_{E_I}(\theta_t, \mathcal{W}_{\text{pre}})-\nabla L_{E_J}(\theta_t, \mathcal{W}_{\text{pre}})\big\|_2^2\right]\mathrm{d}t\right]}
	$$
	
	Let $J$ and $J^{\prime}$ be two neighboring collections, we first prove that $G(J)-G(J^{\prime})$ is small. Let $X_t=\nabla L_{E_I}(\theta_t, \mathcal{W}_{\text{pre}})-\nabla L_{E_J}(\theta_t, \mathcal{W}_{\text{pre}})$, $Y_t=\nabla L_{E_J}(\theta_t, \mathcal{W}_{\text{pre}})-\nabla L_{E_{J^\prime}}(\theta_t, \mathcal{W}_{\text{pre}})$. Then,
	$$
	\begin{aligned}
		G(J^{\prime})^{2} =&\mathbb{E}_{\theta_t}\left[\int_0^{T^\prime}e^{\alpha(t-{T^\prime})}\left\|X_t+Y_t\right\|_2^2\mathrm{~d}t\right]  \\
		=&\mathbb{E}_{\theta_t}\left[\int_0^{T^\prime}e^{\alpha(t-{T^\prime})}\left(X_t^\top X_t+Y_t^\top Y_t\right)\mathrm{d}t\right]+2\mathbb{E}_{\theta_t}\left[\int_0^{T^\prime}e^{\alpha(t-{T^\prime})}X_t^\top Y_t\mathrm{d}t\right] \\
		\leq&\mathbb{E}_{\theta_t}\left[\int_0^{T^\prime}e^{\alpha(t-{T^\prime})}\left(\|X_t\|_2^2+\|Y_t\|_2^2\right)\mathrm{~d}t\right]\\
		&+2\sqrt{\mathbb{E}_{\theta_t}\left[\int_{0}^{{T^\prime}}e^{\alpha(t-{T^\prime})}\left\|X_{t}\right\|_{2}^{2}\mathrm{d}t\right]}\sqrt{\mathbb{E}_{\theta_t}\left[\int_{0}^{{T^\prime}}e^{\alpha(t-{T^\prime})}\left\|Y_{t}\right\|_{2}^{2}\mathrm{d}t\right]} \\
		=&\left(\sqrt{\mathbb{E}_{\theta_t}\left[\int_0^{T^\prime}e^{\alpha(t-{T^\prime})}\left\|X_t\right\|_2^2\mathrm{d}t\right]}+\sqrt{\mathbb{E}_{\theta_t}\left[\int_0^{T^\prime}e^{\alpha(t-{T^\prime})}\left\|Y_t\right\|_2^2\mathrm{d}t\right]}\right)^2 \\
		=&\left(G(J)+\sqrt{\mathbb{E}_{\theta_t}\left[\int_0^{T^\prime}e^{\alpha(t-{T^\prime})}\left\|Y_t\right\|_2^2\mathrm{d}t\right]}\right)^2
	\end{aligned}
	$$
	For any fixed $J$ and $\theta_t$, under Assumption \ref{ass: lipschitz} that $\left\|\nabla L_{E^{k,n}_t}(\theta_t, \mathcal{W}_{\text{pre}})\right\| \leq L,$ then
	$$
	\begin{aligned}
		\int_0^{T^\prime}e^{\alpha(t-{T^\prime})}\left\|Y_t\right\|_2^2\mathrm{d}t& \leq\int_0^{T^\prime}e^{\alpha(t-{T^\prime})}\frac{4L^2}{{(KN^\prime)}^2}\mathrm{d}t=\frac{4L^2(1-e^{-\alpha {T^\prime}})}{{(KN^\prime)}^2\alpha}
	\end{aligned}
	$$
	Then,
	$$
	\left|G(J)-G(J^{\prime})\right| \leq  \frac{2L}{KN^\prime} \sqrt{\frac{1-e^{-\alpha T}}{\alpha}}
	$$
	Applying Lemma \ref{lemma:mc-data-dependent} of concentration inequality and there are $N^\prime$ indexes in $J$ or $J^\prime$,
	$$
	\begin{aligned}
		P_J\left[G(J)-\mathbb{E}_J[G(J)]\geq \epsilon\right]\leq \exp\left(\frac{-2\epsilon^2}{N^\prime \frac{4L^2(1-e^{-\alpha {T^\prime}})}{{(KN^\prime)}^2\alpha}}\right)=\exp\left(\frac{-K^2N^\prime\alpha\epsilon^2}{2L^2(1-e^{-\alpha {T^\prime}})}\right) \\
	\end{aligned}
	$$
	We also have,
	$$
	\begin{aligned}
		P_J\left[G(J)^2 \geq (\mathbb{E}_J[G(J)]+\epsilon)^2\right]\leq \exp\left(\frac{-K^2N^\prime\alpha\epsilon^2}{2L^2(1-e^{-\alpha {T^\prime}})}\right)
	\end{aligned}
	$$
	Then referring to \cite{li2019generalization}, we can  easily get the upper bound of variance of $\nabla L_{E_J}(\theta_t, \mathcal{W}_{\text{pre}})$ which is $\mathbb{E}_J\left[\left\|\nabla L_{E_I}(\theta_t, \mathcal{W}_{\text{pre}})-\nabla L_{E_J}(\theta_t, \mathcal{W}_{\text{pre}})\right\|_2^2\right] \leq \frac{4L^2}{N^\prime}$, thus
	$$
	\begin{aligned}
		\mathbb{E}_{J}[G(J)]& =\mathbb{E}_J\sqrt{\mathbb{E}_{\theta_t}\left[\int_0^{T^\prime} e^{\alpha(t-{T^\prime})}\left[\left\|\nabla L_{E_I}(\theta_t, \mathcal{W}_{\text{pre}})-\nabla L_{E_J}(\theta_t, \mathcal{W}_{\text{pre}})\right\|_2^2\right]\mathrm{d}t\right]}  \\
		&\leq\sqrt{\mathbb{E}_{\theta_t}\left[\int_0^{T^\prime} e^{\alpha(t-{T^\prime})}\mathbb{E}_J\left[\left\|\nabla L_{E_I}(\theta_t, \mathcal{W}_{\text{pre}})-\nabla L_{E_J}(\theta_t, \mathcal{W}_{\text{pre}})\right\|_2^2\right]\mathrm{d}t\right]} \\
		&\leq\sqrt{\int_0^{T^\prime}e^{\alpha(t-{T^\prime})}\frac{4L^2}{N^\prime}} \\
		&=\frac{2L}{\sqrt{N^\prime}}\sqrt{\frac{1-e^{-\alpha {T^\prime}}}{\alpha}}
	\end{aligned}
	$$
	Let $\exp\left(\frac{-K^2N^\prime\alpha\epsilon^2}{2L^2(1-e^{-\alpha {T^\prime}})}\right)={\delta}$, then $\epsilon =\sqrt{\frac{ 2L^2(1-e^{-\alpha {T^\prime}}) \log \frac{1}{\delta}}{K^2N^\prime\alpha}}$. It follows that with probability at least $1-\delta$
	$$
	G(J)^2 \leq \left(\frac{2L}{\sqrt{N^\prime}}\sqrt{\frac{1-e^{-\alpha {T^\prime}}}{\alpha}}+\sqrt{\frac{ 2L^2(1-e^{-\alpha {T^\prime}}) \log \frac{1}{\delta}}{K^2N^\prime\alpha}}\right)^2
	$$
	Then,
	$$
	\begin{aligned}
		&\mathbb{E}_{\theta_t}\left[\int_0^{T^\prime} e^\alpha(t-{T^\prime})\left[\left\|\nabla L_{E_I}(\theta_t, \mathcal{W}_{\text{pre}})-\nabla L_{E_J}(\theta_t, \mathcal{W}_{\text{pre}})\right\|_2^2\right]\mathrm{d}t\right] \\
		\leq& \left(\frac{2L}{\sqrt{N^\prime}}\sqrt{\frac{1-e^{-\alpha {T^\prime}}}{\alpha}}+\sqrt{\frac{ 2L^2(1-e^{-\alpha {T^\prime}}) \log \frac{1}{\delta}}{K^2N^\prime\alpha}}\right)^2 \\
		=& \frac{4L^2}{N^\prime}\left(1+\sqrt{\frac{\log \frac{1}{\delta}}{2K^2}}\right)^2 \frac{(1-e^{-\alpha {T^\prime}})}{\alpha} \\
		=& \frac{4L^2}{N^\prime}\left(1+\sqrt{\frac{\log \frac{1}{\delta}}{2K^2}}\right)^2 e^{8\beta S}\left(1-\exp\left(-\frac{ {T^\prime}}{e^{8\beta S}}\right)\right)
	\end{aligned}
	$$
	We bound the KL-divergence.
	\begin{align}
		\label{data-dependent-KL}
		D_{\operatorname{KL}}\left(p_{T^\prime}\mid\mid q_{T^\prime}\right)
		&\leq\left(1+\sqrt{\frac{\log \frac{1}{\delta}}{2K^2}}\right)^2 \frac{2L^2\beta e^{8\beta S}(1-\exp(-\frac{ {T^\prime}}{e^{8\beta S}}))}{N^\prime}\\
		&=\left(1+\sqrt{\frac{\log \frac{1}{\delta}}{2K^2}}\right)^2 \frac{4L^2C(\frac{1}{N_{\text{param}}},{T^\prime})}{N^\prime}
	\end{align}
	where $C(\frac{1}{N_{\text{param}}},{T^\prime})=\frac{\beta}{2}e^{8\beta S}\left(1-e^{-\frac{ {T^\prime}}{\exp(8\beta S)}}\right)$.
	
	As introduced before, the prior distribution of model parameters $\theta$ is depending on the subset $E^k_J$, which is denoted by $\nu_J$ and the posterior distribution of $\theta$ is depending on $E^k_I$ denoted by $\mu$. Then Theorem \ref{app:pre-gen} can be transformed to (modify $N$ to $N-N^\prime$)
	\begin{multline}\label{data-depenent-1}
		\mathbb{E}_{\mu}\left[\frac{1}{K}\sum_{k=1}^K\mathbb{E}_{P}\left[\KL\big(\mathbb{P}(\cdot\mid P,w_k)\parallel \mathbb{P}_{\theta}(\cdot\mid P,w_k)\big)\right]\right] \\
		=\mathcal{O}\left\{\sqrt{\frac{\log \frac{1}{\delta}}{K(N-N^\prime)T}}+\sqrt{\frac{\KL(\mu\parallel \nu_J)+\operatorname{log}\frac{1}{\delta}}{K(N-N^\prime)T}-\epsilon_{\text{opt}}}\right\}
	\end{multline}
	
	Finally, with inequality \ref{data-depenent-1} and  \ref{data-dependent-KL}, we get data-dependent and optimization algorithm-dependent PAC-Bayesian generalization error bound of the first-level expected loss.
	{\small \begin{align}
		&\mathbb{E}_{\mu}\left[\frac{1}{K}\sum_{k=1}^K\mathbb{E}_{P}\left[\KL\big(\mathbb{P}(\cdot\mid P,w_k)\parallel \mathbb{P}_{\theta}(\cdot\mid P,w_k)\big)\right]\right]\nonumber \\
		=&\mathcal{O}\left\{\sqrt{\frac{\log 1/\delta}{K(N-N^\prime)T}}+\sqrt{\frac{1}{K(N-N^\prime)T}\left[\left(1+\sqrt{\frac{\log 1/\delta}{K^2}}\right)^2 \frac{4L^2C(\frac{1}{N_{\text{param}}},{T^\prime})}{N^\prime}+\log \frac{1}{\delta}\right]-\epsilon_{\text{opt}}}\right\} \nonumber\\
        =&\mathcal{O}\left\{\sqrt{\frac{\log 1/\delta}{K(N-N^\prime)T}}+\sqrt{\frac{1}{K(N-N^\prime)T}\left(\frac{L^2C(\frac{1}{N_{\text{param}}},{T^\prime})}{N^\prime}+\log \frac{1}{\delta}\right)-\epsilon_{\text{opt}}}\right\}
	\end{align}}

	where $C(\frac{1}{N_{\text{param}}},{T^\prime})=\frac{\beta}{2}e^{8\beta S}\left(1-e^{-\frac{{T^\prime}}{\exp(8\beta S)}}\right)$.
\end{proof}

\subsection{Generalization of Sequences and Topics: Two-Level Expectation}
\subsubsection{Proof of Theorem \ref{app:ICL-gen}}\label{appendix-the-3}

\begin{theorem*}[Data-Dependent and Optimization-Dependent Generalization Bound of the Two-Level Expected Loss] Let the auto-regressive LLM $\mathbb{P}_\theta$ be the empirical solution of Equation $\ref{eq-L-E}$, and $\mathbb{P}(\cdot\mid w)$ is the true data distribution under topic $w$. Under Assumptions \ref{ass:B}, \ref{ass: lipschitz} and \ref{ass: lipschitz-2}, for any $0<\delta < 1$, with probability at least $1-\delta$, the two-level expected loss (population loss) with infinite topics and infinite sequences per topic, denoted by $L(\theta)$ (see in Equation \ref{eq-L-ICL-final}), satisfies,
		\begin{align*}
			\mathbb{E}_{\mu}[L(\theta)]
			=\mathcal{O}\biggl\{\sqrt{\frac{1}{KT_p}}\left(\KL(\mu\parallel \nu)+\log \frac{1}{\delta}\right)+U(\mathcal{W}_{\text{pre}},K,N,N^\prime,T)\biggr\},
		\end{align*}
	where $U(\mathcal{W}_{\text{pre}},K,N,N^\prime,T)$ denotes the right hand of equality \ref{the3-right} or equality \ref{app-the3-right}. $\mu$ and $\nu$ are the posterior and prior distribution of model parameters $\theta$, respectively. $K$, $N (N^\prime)$ and $T$ denote the number of topics, the number of sequences per topic and the sequence length utilized in the optimization process of Equation $\ref{eq-L-E}$.
\end{theorem*}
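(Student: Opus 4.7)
The plan is to follow the same decomposition implicitly suggested by Equation \ref{app-eq-L-decompose}, writing
\begin{equation*}
L(\theta) \;=\; \underbrace{\bigl[L(\theta)-L(\theta,\mathcal{W}_{\text{pre}})\bigr]}_{\text{topic-generalization gap}} \;+\; L(\theta,\mathcal{W}_{\text{pre}}).
\end{equation*}
The second summand is already controlled by Theorem \ref{app:pre-gen-data-dependent}, which contributes exactly the $U(\mathcal{W}_{\text{pre}},K,N,N^\prime,T)$ term appearing in the statement. Consequently the entire remaining work is to bound the topic-generalization gap by the expression $\sqrt{1/(KT_p)}\bigl(\KL(\mu\parallel\nu)+\log(1/\delta)\bigr)$.

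For the gap, I would replay the PAC-Bayesian argument from the proof of Theorem \ref{app:pre-gen}, but with the i.i.d.\ randomness now carried by the $K$ pre-training topics $w_1,\dots,w_K\stackrel{\text{i.i.d.}}{\sim}\mathbb{P}_{\mathcal{W}}$ (outer level) and by the $T_p$ tokens of the ICL prompt generated auto-regressively from $\mathbb{P}(\cdot\mid\text{prompt}_t,w)$ (inner level). Concretely, I would first invoke Lemma \ref{lemma:Donsker-ieq} between posterior $\mu$ and prior $\nu$ with the test function
\begin{equation*}
T(\theta) \;=\; g(\theta) - \log\mathbb{E}_{\widetilde{\text{prompt}}}\!\bigl[\exp g(\theta)\,\big|\,\text{prompt}\bigr],\qquad g(\theta)=\tfrac12\sum_{k=1}^{K}\sum_{t=1}^{T_p}\log\frac{\mathbb{P}(x_{t+1}^{k}\mid\text{prompt}_t,w_k)}{\mathbb{P}_\theta(x_{t+1}^{k}\mid\text{prompt}_t,w_k)},
\end{equation*}
where $\widetilde{\text{prompt}}$ is a ghost ICL prompt coupled to $\text{prompt}$ exactly as in Appendix \ref{appendix-the-1}, so that the denominator factorizes as a product of per-token conditional expectations and $\mathbb{E}[\exp T]=1$. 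Taking expectation over the pre-training topics and applying Markov's inequality then yields a high-probability bound with $\KL(\mu\parallel\nu)+\log(1/\delta)$ on the right-hand side.

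The next steps mirror Appendix \ref{appendix-the-1}: apply Lemma \ref{lemma:TV} to turn the logarithmic coupling term into squared total-variation distances between $\mathbb{P}(\cdot\mid\text{prompt}_t,w)$ and $\mathbb{P}_\theta(\cdot\mid\text{prompt}_t,w)$, average over the $K$ topics and $T_p$ positions (producing the $1/(KT_p)$ normalization that is the source of the announced rate), use Cauchy--Schwarz to pass from squared TV to TV, and finally invoke Lemma \ref{lemma:KL-TV-bound} together with Assumption \ref{ass:B} to return to a KL quantity of the form appearing in $L(\theta)$. Proposition \ref{prop1: chernoff} then replaces the empirical TV-average by its expectation over topics with a $\sqrt{\log(1/\delta)/(KT_p)}$ deviation penalty, which is absorbed into the $\mathcal{O}(\cdot)$ on the right-hand side of the theorem.

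The main obstacle is the coupling step. Unlike the first-level analysis, here the expectations must be decoupled simultaneously in two senses: across topics (easy, since topics are i.i.d.\ under $\mathbb{P}_{\mathcal{W}}$) and across the sequentially dependent prompt tokens within each topic (hard, requiring the ghost-prompt construction to be applied per topic and carefully aligned with the AR-NTP generative process for $T_p$-length prompts rather than $T$-length training sequences). Once the ghost coupling is verified to yield $\mathbb{E}[\exp T]=1$ in this two-index setting, Assumptions \ref{ass: lipschitz} and \ref{ass: lipschitz-2} are not invoked in the topic-gap part itself; they enter only through the $U$ term inherited from Theorem \ref{app:pre-gen-data-dependent}, so no additional technical machinery beyond what is developed earlier in the appendix is needed.
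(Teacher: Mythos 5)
Your decomposition and the treatment of the second summand via Theorem \ref{app:pre-gen-data-dependent} are exactly as in the paper, but the route you propose for the topic-generalization gap is not the one the paper uses and would not reproduce the stated bound. The paper does \emph{not} replay the ghost-sequence/TV machinery at this level: it applies Proposition \ref{prop0: high-prob} directly — a packaged McDiarmid-for-Markov-chains bound (Lemma \ref{lemma:mc-markov}) combined with Donsker--Varadhan at a \emph{fixed} choice of $\lambda$ — to the per-$(k,t)$ quantities $\mathbb{E}_{\text{prompt}_t}[\KL(\cdot\Vert\cdot)]$. That choice of $\lambda$ is what produces the linear-in-KL form $\sqrt{1/(KT_p)}\,\bigl(\KL(\mu\Vert\nu)+\log(1/\delta)\bigr)$. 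Your plan instead runs Donsker--Varadhan, Lemma \ref{lemma:TV}, Cauchy--Schwarz, and Lemma \ref{lemma:KL-TV-bound}, which necessarily puts the KL term \emph{under} a square root, yielding a bound of the form $\sqrt{\bigl(\KL(\mu\Vert\nu)+\log(1/\delta)\bigr)/(KT_p)}$ plus a $\sqrt{\log(1/\delta)/(KT_p)}$ term from Proposition \ref{prop1: chernoff}. These two functional forms are not interchangeable and $\mathcal{O}(\cdot)$ does not reconcile them; a proof built this way would establish a genuinely different theorem statement.

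There is also a conceptual mismatch in the coupling step you flag as "the main obstacle." The gap $L(\theta)-L(\theta,\mathcal{W}_{\text{pre}})$ is a difference of two \emph{population} quantities: both sides already carry $\mathbb{E}_P$ (equivalently $\mathbb{E}_{\text{prompt}_t}$) inside, so the per-topic summand $h(w_k)=\frac{1}{T_p}\sum_t\mathbb{E}_{\text{prompt}_t}[\KL(\cdot\Vert\cdot)]$ is a deterministic function of $w_k$ given $\theta$. There are no observed prompt tokens $x_{t+1}^{k}$ anywhere in this gap for a ghost prompt $\widetilde{\text{prompt}}$ to be coupled to, so the test function $T(\theta)=g(\theta)-\log\mathbb{E}_{\widetilde{\text{prompt}}}[\exp g(\theta)\mid\text{prompt}]$ you write is not well-posed for this quantity. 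The ghost construction was built for the first-level step, where $L_E$ genuinely contains the raw dependent tokens; that situation does not recur at the topic level (and the topics themselves are i.i.d.\ under $\mathbb{P}_{\mathcal{W}}$, so decoupling them requires no coupling device at all). Your observation that Assumptions \ref{ass: lipschitz} and \ref{ass: lipschitz-2} enter only through $U$ is correct, but the core mechanism for the first term needs to be Proposition \ref{prop0: high-prob}, not a second pass of the Appendix \ref{appendix-the-1} argument.
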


\begin{proof}
	In this part, since we have gotten the generalization error bound when considering infinite sequences in Theorem \ref{app:pre-gen} and Theorem \ref{app:pre-gen-data-dependent}. Our analysis is based on that there will be a sufficient number of sequences for each topic to enable thorough learning so that in the ideal case, the well-pretrained model can perform excellently on the seen topics. We try to get the upper bound of the two-level expected loss (population loss) so that the pre-trained model can also perform well on the unseen topics under the assumption of topic distribution. 

    For an ICL prompt $\text{prompt}$, we also establish auto-regressive loss based on the prefix sequence $\text{prompt}_{t}$. Then according to Theorem \ref{ICL-gen-topic-dependent}, for topic $w$, we first have
    {\small \begin{align}
		&\mathbb{E}_{\mu}\left[\frac{1}{KT_p}\sum_{k=1}^K\sum_{t=1}^{T_p}\mathbb{E}_{\text{prompt}_{t}}\left[\KL\big(\mathbb{P}(\cdot\mid \text{prompt}_{t},w_k)\parallel \mathbb{P}_{\theta}(\cdot\mid \text{prompt}_{t},w_k)\big)\right]\right] \nonumber\\
		&=\mathcal{O}\left\{\sqrt{\frac{\log 1/\delta}{K(N-N^\prime)T}}+\sqrt{\frac{1}{K(N-N^\prime)T}\left(\frac{L^2C(\frac{1}{N_{\text{param}}},{T^\prime})}{N^\prime}+\log \frac{1}{\delta}\right)-\epsilon_{\text{opt}}}\right\}\nonumber \\
        &=\mathcal{O}\left\{U(\mathcal{W}_{\text{pre}},K,N,N^\prime,T)\right\}
	\end{align}}
    
	Using Proposition \ref{prop0: high-prob} and Assumption \ref{ass:B} of $\log \frac{\mathbb{P}(\cdot\mid E^{k,n}_t,w_k)}{\mathbb{P}_{\theta}(\cdot\mid E^{k,n}_t,w_k)}$ is upper bounded by $C$, thus with probability at least $1-\delta$, we consider the generalization of topic so that ICL emerges,
	{\small 
		\begin{multline}
			\mathbb{E}_{\mu}\biggl[\frac{1}{T_p}\sum_{t=1}^{T_p}\mathbb{E}_{w}\mathbb{E}_{\text{prompt}_{t}}\left[\KL\big(\mathbb{P}(\cdot\mid \text{prompt}_{t},w)\parallel \mathbb{P}_{\theta}(\cdot\mid \text{prompt}_{t},w)\big)\right] \\-\frac{1}{KT_p}\sum_{k=1}^K\sum_{t=1}^{T_p}\mathbb{E}_{\text{prompt}_{t}}\left(\KL\big(\mathbb{P}(\cdot\mid \text{prompt}_{t},w_k)\parallel \mathbb{P}_{\theta}(\cdot\mid \text{prompt}_{t},w_k)\big)\right)\biggr] \\
			\leq \sqrt{\frac{C^2\cdot \tau_{\min}}{2KT_p \log 2}}\left(\KL(\mu\parallel \nu)+\log \frac{2}{\delta}\right)=\mathcal{O}\left\{\sqrt{\frac{1}{KT_p}}\left[\KL(\mu\parallel \nu)+\log \frac{1}{\delta}\right]\right\}
		\end{multline}
	}
	
	Finally, we measure the generalization error of an auto-regressive pre-trained LLM, after which the ability of ICL will emerge with good generalization. It can be denoted as $\text{gen}_\text{topic}=L(\theta)-L(\theta,\mathcal{W}_{\text{pre}})$, then the two-level expected loss (population loss) $L(\theta)$ can be bounded by
	\begin{multline}
		\mathbb{E}_{\mu}\left[\frac{1}{T_p}\sum_{t=1}^{T_p}\mathbb{E}_{w}\mathbb{E}_{\text{prompt}_{t}}\left[\KL\big(\mathbb{P}(\cdot\mid \text{prompt}_{t},w)\parallel \mathbb{P}_{\theta}(\cdot\mid \text{prompt}_{t},w)\big)\right] \right]\\
		=\mathcal{O}\left\{\sqrt{\frac{1}{KT_p}}\left[\KL(\mu\parallel \nu)+\log \frac{1}{\delta}\right]+U(\mathcal{W}_{\text{pre}},K,N,N^\prime,T)\right\}
	\end{multline}
	where $U(\mathcal{W}_{\text{pre}},K,N,N^\prime,T)$ is the right hand of inequality \ref{the3-right}.
\end{proof}

\subsubsection{Proof of Theorem \ref{app:ICL-gen-topic-dependent}}\label{appendix-the-4}

\begin{theorem*}[Data-Dependent, Topic-Dependent and Optimization-Dependent Generalization Error Bound of the Two-Level Expected Loss.] Under the conditions maintained in Theorem \ref{app:ICL-gen} and Assumption \ref{ass: lipschitz-2}, when further considering topic-dependent prior, for any $0<\delta < 1$, with probability at least $1-\delta$, the two-level expected loss (population loss) with infinite topics and infinite sequences per topic, denoted by $L(\theta)$ (see in Equation \ref{eq-L-ICL-final}), satisfies,
    \begin{align*}
        \mathbb{E}_{\mu}\left[L(\theta) \right]
        =\mathcal{O}\biggl\{\sqrt{\frac{1}{(K-K^\prime)T_p}}\left(\KL(\mu\parallel \nu_J)+\log \frac{1}{\delta}\right)+ U(\mathcal{W}_{\text{pre}},K,N,N^\prime,T)\biggr\},
    \end{align*}
	then detailing the term $\KL(\mu \parallel \nu_J)$, $L(\theta)$ further satisfies,
	\begin{align}
			\mathcal{O}\biggl\{\sqrt{\frac{1}{(K-K^\prime)T}}\left(\frac{\sigma^2C(\frac{1}{N_{\text{param}}},{T^\prime})}{K^\prime}+\log \frac{1}{\delta}\right)
			+ U(\mathcal{W}_{\text{pre}},K,N,N^\prime,T)\biggr\}, \nonumber
	\end{align}
	where $C(\frac{1}{N_{\text{param}}},T^\prime)=\frac{\beta}{2}e^{8\beta S}\left(1-e^{-\frac{ T^\prime}{\exp(8\beta S)}}\right)$, $U(\mathcal{W}_{\text{pre}},K,N,N^\prime,T)$ denotes the right hand of equality \ref{the3-right} or equality \ref{app-theF3}. $\mu$ and $\nu_J$ are the posterior and topic-dependent prior distribution of model parameters $\theta$, respectively. $K (K^\prime)$, $N (N^\prime)$ and $T$ denote the number of topics, the number of sequences per topic and the sequence length utilized in the optimization process of Equation $\ref{eq-L-E}$. $T^\prime$ denotes the total training iterations. $N_{\text{param}}$ denotes the number of model parameters.
\end{theorem*}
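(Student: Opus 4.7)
The plan is to obtain the stated bound by composing two previous ingredients and then replacing the remaining KL term by a data-flow argument that mirrors Theorem \ref{app:pre-gen-data-dependent}, but now taken over the \emph{topic} index rather than the sequence index. First I would invoke Theorem \ref{app:ICL-gen} with the topic-dependent prior $\nu_J$ substituted for the generic $\nu$. Because $\nu_J$ is learned from the $K'$ topics indexed by $J$, only the $K-K'$ topics in $I$ carry effective statistical information for the sequence-level generalization term, so the factor $1/\sqrt{K T_p}$ in the first term of Theorem \ref{app:ICL-gen} is replaced by $1/\sqrt{(K-K')T_p}$; the second term is kept as the placeholder $U(\mathcal{W}_{\text{pre}},K,N,N',T)$ coming directly from Theorem \ref{app:pre-gen-data-dependent}.

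Next I would detail $\KL(\mu\parallel\nu_J)$ by running two coupled Continuous Langevin Dynamics (Definition \ref{def:GLD-CLD}), one driven by the drift $\nabla L_{E^I}(\theta,\mathcal{W}_{\text{pre},I})$ and one by $\nabla L_{E^J}(\theta,\mathcal{W}_{\text{pre},J})$, exactly as in the proof of Theorem \ref{app:pre-gen-data-dependent}. Writing $p_t$ and $q_t$ for the two densities, I would differentiate $\KL(p_t\parallel q_t)$ in $t$, apply the Fokker--Planck equation (Definition \ref{def:Fokker}) to both densities, use Young's inequality $\langle a/\sqrt{c},b\sqrt{c}\rangle \le \|a\|^2/(2c)+c\|b\|^2/2$ to split the cross term, and use Lemma \ref{lemma:LSI} (LSI for CLD under Assumption \ref{ass:B}) to absorb the Fisher-information-type term. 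The resulting linear ODE in $\KL(p_t\parallel q_t)$ solves to
\begin{equation*}
\KL(p_{T'}\parallel q_{T'}) \le \tfrac{\beta}{2}\int_0^{T'} e^{\alpha(t-T')} \,\mathbb{E}_{\theta_t}\!\left[\bigl\|\nabla L_{E^I}(\theta_t,\mathcal{W}_{\text{pre},I}) - \nabla L_{E^J}(\theta_t,\mathcal{W}_{\text{pre},J})\bigr\|_2^2\right]\mathrm{d}t,
\end{equation*}
with $\alpha = 1/\exp(8\beta S)$.

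The step that differs substantively from Theorem \ref{app:pre-gen-data-dependent}, and which I expect to be the main obstacle, is controlling the inner gradient-difference expectation by a \emph{topic}-level rather than a sequence-level Lipschitz constant. Each per-topic gradient $\nabla L_{E^k}(\theta_t,w_k) = \tfrac{1}{N}\sum_n \nabla L_{E^{k,n}}(\theta_t,w_k)$ is an average over $N$ \emph{i.i.d.}\ sequences, and the analogue of the $\tfrac{4L^2}{N'}$-variance bound used in Theorem \ref{app:pre-gen-data-dependent} now needs the expected-gradient bound $\|\mathbb{E}_{E^{k,n}}[\nabla L_{E^{k,n}}(\theta_t,w_k)]\|\le\sigma$ from Assumption \ref{ass: lipschitz-2} (rather than the pointwise constant $L$), applied to the $K'$-out-of-$K$ topic subsample. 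This yields $\mathbb{E}_J\|\nabla L_{E^I}(\theta_t,\cdot)-\nabla L_{E^J}(\theta_t,\cdot)\|_2^2 \le 4\sigma^2/K'$, and a McDiarmid-style step via Lemma \ref{lemma:mc-data-dependent}, identical in form to the one concluding the proof of Theorem \ref{app:pre-gen-data-dependent}, promotes the expectation to a high-probability bound of order $\sigma^2\,C(1/N_{\text{param}},T')/K'$.

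Finally I would plug this KL bound back into the bound from the first step, combining the topic-level term $\sqrt{\tfrac{1}{(K-K')T_p}}\bigl(\tfrac{\sigma^2 C(1/N_{\text{param}},T')}{K'}+\log\tfrac{1}{\delta}\bigr)$ with the sequence-level term $U(\mathcal{W}_{\text{pre}},K,N,N',T)$ coming from Theorem \ref{app:pre-gen-data-dependent}, and then a union bound over the two high-probability events (each at level $\delta/2$, absorbed into constants) gives the claimed result. The two conceptual novelties relative to Theorem \ref{app:pre-gen-data-dependent} are the use of Assumption \ref{ass: lipschitz-2} instead of Assumption \ref{ass: lipschitz} to get $\sigma$ in place of $L$, and the replacement of the sequence split $I,J\subset[N]$ by a topic split $I,J\subset[K]$; the SDE/LSI machinery and the concentration step transfer verbatim.
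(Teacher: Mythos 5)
Your proposal matches the paper's proof essentially step for step: invoke Theorem \ref{app:ICL-gen} with $K$ replaced by $K-K'$ under the topic split, then replicate the Fokker--Planck/Young's-inequality/LSI derivation from Theorem \ref{app:pre-gen-data-dependent} on the topic-level CLD pair, bound the drift difference by $4\sigma^2/K'$ via Assumption \ref{ass: lipschitz-2} in place of Assumption \ref{ass: lipschitz}, apply the McDiarmid-type concentration of Lemma \ref{lemma:mc-data-dependent}, and recombine. You also correctly flag the two genuine changes (topic split vs.\ sequence split; $\sigma$ vs.\ $L$), so this is a faithful reconstruction of the paper's argument.
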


\begin{proof}
	In this Theorem, we try to give a detail analysis of $\KL(\mu \parallel \nu)$ to get data-dependent, topic-dependent and optimization-dependent generalization bound. Similarly, we analyze the training dynamic of transformer via Gradient Langevin Dynamics (GLD)
	$$
	\theta_t \leftarrow {\theta}_{t-1} - \eta_t \nabla L(\theta_{t-1}, \mathcal{W}_{\text{pre},I})+\sigma_t\mathcal{N}(0,I_d).
	$$
	when the step size approaches zero,
	$$
	\mathrm{d} \theta_t=- \nabla L(\theta_{t-1}, \mathcal{W}_{\text{pre},I})\mathrm{d} t+\sqrt{ \beta^{-1}} \mathrm{~d} B_t, \quad \theta_0 \sim \mu_0
	$$
	where $\nabla L(\theta, \mathcal{W}_{\text{pre},I})=\frac{1}{(K-K^{\prime})T}\sum_{k=1}^{K-K^{\prime}}\sum_{t=1}^T\mathbb{E}_{E^{k,n}_t} [\log \frac{\mathbb{P}(x^{k,n}_{t+1}|E^{k,n}_t, w_k)}{\mathbb{P}_\theta(x^{k,n}_{t+1}|E^{k,n}_t, w_k)}]$, and $B_t$ is the standard brown emotion.
	
	Split pre-training topics into two parts $\mathcal{W}_{\text{pre},I}$ and $\mathcal{W}_{\text{pre},J}$ (where $J$ is a random sequence including $K^{\prime}$ indexes uniformly sampled from $[K]$ without replacement and $I$ is $[K]\setminus J$). Then the total sequences are divided into $E^I=\{E^k\}_{k \in \mathcal{W}_{\text{pre},I}}$ and $E^J=\{E^k\}_{k \in \mathcal{W}_{\text{pre},J}}$. Assume that the prior distribution of model parameters $\theta$ is depending on the topic subset $E^J$, which is denoted by $\nu_J$ and the posterior distribution of $\theta$ is depending on $E^I$ denoted by $\mu$.
	
	Let $\widetilde{\Theta}=(\theta_t)_{t\geq 0}$ and $\widetilde{\Theta}^\prime=(\theta_t^\prime)_{t\geq 0}$ be the trajectory trained on $\mathcal{W}_{\text{pre},I}$ and $\mathcal{W}_{\text{pre},J}$ (the total sequences are divided into $E^I=\{E^k\}_{k \in \mathcal{W}_{\text{pre},I}}$ and $E^J=\{E^k\}_{k \in \mathcal{W}_{\text{pre},J}}$). Let $\mu$ and $\nu_J$ be the distribution of $\widetilde{\Theta}$ and $\widetilde{\Theta}^\prime$ respectively, $p_t$ and $q_t$ be the pdf of $\widetilde{\Theta}$ and $\widetilde{\Theta}^\prime$ and the total steps of iteration is $T^\prime$. $\KL(\rho \parallel \pi_J)$ is equal to $\KL(p_{T^\prime}\parallel q_{T^\prime})$. To bound $\KL(p_{T^\prime}||q_{T^\prime})$, we first apply Leibniz’s rule and the chain rule on it:
	$$
	\begin{aligned}
		\frac{\mathrm{d}}{\mathrm{d}t}D_{KL}(p_t||q_t)
		=&\frac{\mathrm{d}}{\mathrm{d}t}\int_{\mathbb{R}^d}p_t\operatorname{log}\frac{p_t}{q_t}\mathrm{d}\theta \\
		=& \int_{\mathbb{R}^d} (\frac{\mathrm{d}p_t}{\mathrm{d}t}\operatorname{log}\frac{p_t}{q_t}+p_t\cdot \frac{q_t}{p_t}\cdot \frac{\frac{\mathrm{d}p_t}{\mathrm{d}t}q_t-p_t\frac{\mathrm{d}q_t}{\mathrm{d}t}}{q_t^2})\mathrm{d}\theta \\
		=& \int_{\mathbb{R}^d} \frac{\mathrm{d}p_t}{\mathrm{d}t}\operatorname{log}\frac{p_t}{q_t}\mathrm{d}\theta - \int_{\mathbb{R}^d} \frac{p_t}{q_t}\frac{\mathrm{d}q_t}{p_t}\mathrm{d}\theta + \int_{\mathbb{R}^d} \frac{\mathrm{d}p_t}{\mathrm{d}t}\mathrm{d}\theta \\
		=& \underbrace{\int_{\mathbb{R}^d} \frac{\mathrm{d}p_t}{\mathrm{d}t}\operatorname{log}\frac{p_t}{q_t}\mathrm{d}\theta}_{\text{(A)}} - \underbrace{\int_{\mathbb{R}^d} \frac{p_t}{q_t}\frac{\mathrm{d}q_t}{p_t}\mathrm{d}\theta}_{\text{(B)}},
	\end{aligned}
	$$
	where the last equality follows from that $\int \frac{\mathrm{d}p_t}{\mathrm{d}t}\mathrm{d}\theta = \frac{\mathrm{d}}{\mathrm{d}t} \int p_t\mathrm{d}\theta = 0,$ since $p_t$ is a probability measure.
	By Fokker-Planck Equation for $p_t$, $\frac{\partial p_t}{\partial t} = \frac1\beta\Delta p_t+\nabla \cdot(p_t \nabla L(\theta, \mathcal{W}_{\text{pre},I}))$.
	
	Then we bound term $A$,
	$$
	\begin{aligned}
		\text{A} :=&\int_{\mathbb{R}^d}\left(\frac{\mathrm{d}p_t}{\mathrm{d}t}\log\frac{p_t}{q_t}\right)\mathrm{d}\theta  \\
		=&\int_{\mathbb{R}^d}\left(\frac1\beta\Delta p_t+\nabla \cdot(p_t \nabla L(\theta, \mathcal{W}_{\text{pre},I}))\right)\log\frac{p_t}{q_t}\mathrm{d}\theta \\
		=&\frac1\beta\left[\int_{\mathbb{R}^d}\Delta p_t\log\frac{p_t}{q_t}\mathrm{d}\theta\right]+\int_{\mathbb{R}^d}\nabla\cdot(p_t\nabla L(\theta, \mathcal{W}_{\text{pre},I}))\log\frac{p_t}{q_t}\mathrm{d}\theta \\
		=&\frac1\beta\left[\nabla p_t\operatorname{log}\frac{p_t}{q_t}-\int_{\mathbb{R}^d}\langle\nabla\log\frac{p_t}{q_t},\nabla p_t\rangle\mathrm{d}\theta\right]\\
		&+\left[p_t \nabla L(\theta, \mathcal{W}_{\text{pre},I})\log\frac{p_t}{q_t}-\int_{\mathbb{R}^d}\langle\nabla\log\frac{p_t}{q_t},p_t\nabla L(\theta, \mathcal{W}_{\text{pre},I})\rangle\mathrm{d}\theta\right]\\
		=&\frac{-1}\beta\int_{\mathbb{R}^d}\langle\nabla\log\frac{p_t}{q_t},\nabla p_t\rangle\mathrm{d}\theta-\int_{\mathbb{R}^d}\langle\nabla\log\frac{p_t}{q_t},p_t\nabla L(\theta, \mathcal{W}_{\text{pre},I})\rangle\mathrm{d}\theta
	\end{aligned}
	$$
	Bound term $B$, 
	$$
	\begin{aligned}
		\text{B}& :=\int_{\mathbb{R}^d}\left(\frac{p_t}{q_t}\frac{\mathrm{d}q_t}{\mathrm{d}t}\right)\mathrm{d}w  \\
		&=\int_{\mathbb{R}^d}\frac{p_t}{q_t}\left(\frac1\beta\Delta q_t+\nabla\cdot(q_t\nabla L(\theta, \mathcal{W}_{\text{pre},J}))\right)\mathrm{d}w \\
		&=\frac1\beta\left[\int_{\mathbb{R}^d}\frac{p_t}{q_t}\Delta q_t\mathrm{d}w\right]+\int_{\mathbb{R}^d}\frac{p_t}{q_t}\nabla\cdot(q_t\nabla L(\theta, \mathcal{W}_{\text{pre},J}))\mathrm{d}w \\ 
		&=\frac1\beta\left[\frac{p_t}{q_t}\nabla q_t-\int_{\mathbb{R}^d}\langle\nabla\frac{p_t}{q_t},\nabla q_t\rangle\mathrm{d}w\right]+\left[\frac{p_t}{q_t}q_t\nabla L(\theta, \mathcal{W}_{\text{pre},J})-\int_{\mathbb{R}^d}\langle\nabla\frac{p_t}{q_t},q_t\nabla L(\theta, \mathcal{W}_{\text{pre},J})\rangle\mathrm{d}w\right]\\
		&=\frac{-1}\beta\int_{\mathbb{R}^d}\langle\nabla\frac{p_t}{q_t},\nabla q_t\rangle\mathrm{d}w-\int_{\mathbb{R}^d}\langle\nabla\frac{p_t}{q_t},q_t\nabla L(\theta, \mathcal{W}_{\text{pre},J})\rangle\mathrm{d}w
	\end{aligned}
	$$
	In summary, the deviation of $D_{KL}(p_t||q_t)$ can be bounded,
	$$
	\begin{aligned}
		&\frac{\mathrm{d}}{\mathrm{d}t}D_{KL}(p_t \parallel q_t) \\
		=& \frac{-1}\beta\int_{\mathbb{R}^d}\langle\nabla\log\frac{p_t}{q_t},\nabla p_t\rangle\mathrm{d}w-\int_{\mathbb{R}^d}\langle\nabla\log\frac{p_t}{q_t},p_t\nabla L(\theta, \mathcal{W}_{\text{pre},I})\rangle\mathrm{d}w\\
		&+ \frac{1}\beta\int_{\mathbb{R}^d}\langle\nabla\frac{p_t}{q_t},\nabla q_t\rangle\mathrm{d}w + \int_{\mathbb{R}^d}\langle\nabla\frac{p_t}{q_t},q_t\nabla L(\theta, \mathcal{W}_{\text{pre},J})\rangle\mathrm{d}w \\
		=& \frac{-1}{\beta}\int_{\mathbb{R}^d}\left(\langle\nabla\log\frac{p_t}{q_t},\nabla p_t\rangle-\langle\nabla\frac{p_t}{q_t},\nabla q_t\rangle\right)\mathrm{d}w\\
		&- \int_{\mathbb{R}^d}\left(\langle\nabla\log\frac{p_t}{q_t},p_t\nabla L_{E_I}(\theta, \mathcal{W}_{\text{pre}})\rangle-\langle\nabla\frac{p_t}{q_t},q_t\nabla L(\theta, \mathcal{W}_{\text{pre},J})\rangle\right)\mathrm{d}w \\
		=& \frac{-1}{\beta}\int_{\mathbb{R}^d}\left(\langle \frac{\nabla p_t}{p_t}-\frac{\nabla q_t}{q_t},\nabla p_t\rangle-\langle\frac{\nabla p_t}{q_t}-\frac{p_t\nabla q_t}{q_t^2},\nabla q_t\rangle\right)\mathrm{d}w \\ & - \int_{\mathbb{R}^d}\left(\langle\nabla\log\frac{p_t}{q_t},p_t\nabla L(\theta, \mathcal{W}_{\text{pre},I})\rangle-\frac{p_t}{q_t}\langle\nabla\log\frac{p_t}{q_t},q_t\nabla L(\theta, \mathcal{W}_{\text{pre},J})\rangle\right)\mathrm{d}w \\
		=& \frac{-1}{\beta}\int_{\mathbb{R}^d}p_t\big\|\nabla \log \frac{p_t}{q_t}\big\|^2_2 \mathrm{d}w + \int_{\mathbb{R}^d}p_t\langle\nabla \log \frac{p_t}{q_t}, \nabla L(\theta, \mathcal{W}_{\text{pre},I})-\nabla L(\theta, \mathcal{W}_{\text{pre},J})\rangle\mathrm{d}w
	\end{aligned}
	$$
	Since for any constant $c \neq 0$, vector $\alpha$ and $\beta$, we have the inequality $\langle\frac{\alpha}{\sqrt{c}},\frac{\beta}{\sqrt{c}}\rangle \leq \frac{\|\alpha\|^2}{2c}+\frac{c\|\beta\|^2}{2}$, then we can transform the last equality into
	$$
	\frac{\mathrm{d}}{\mathrm{d}t}D_{KL}(p_t \parallel q_t) \leq \frac{-1}{2\beta}\int_{\mathbb{R}^d}p_t\big\|\nabla \log \frac{p_t}{q_t}\big\|^2_2 \mathrm{d}w + \frac{\beta}{2}\int_{\mathbb{R}^d} p_t \big\|\nabla L(\theta, \mathcal{W}_{\text{pre},I})-\nabla L(\theta, \mathcal{W}_{\text{pre},J})\big\|^2_2\mathrm{d}w
	$$
	According to Lemma \ref{lemma:LSI}, we have $\int_{\mathbb{R}^d}p_t\left\|\nabla\log\frac{p_t}{q_t}\right\|_2^2\mathrm{d}w \geq \frac{2\beta}{\exp(8\beta S)}\KL\left(p_t||q_t\right)$, then transform the first term in the right hand of the above inequality,
	$$
	\frac{\mathrm{d}}{\mathrm{d}t}D_{KL}(p_t||q_t) \leq -\frac{1}{\exp(8\beta S)}\KL\left(p_t||q_t\right)+\frac{\beta}{2}\mathbb{E}_{\theta_t}\left[\big\|\nabla L(\theta, \mathcal{W}_{\text{pre},I})-\nabla L(\theta, \mathcal{W}_{\text{pre},J})\big\|^2_2\right]
	$$
	Let $\phi(t)=D_{KL}(p_t||q_t)$, $\delta(t)=\frac{\beta}{2}\mathbb{E}_{\theta_t}\left[\big\|\nabla L(\theta, \mathcal{W}_{\text{pre},I})-\nabla L(\theta, \mathcal{W}_{\text{pre},J})\big\|^2_2\right],$ $\alpha=\frac{1}{\exp(8\beta S)}$, then we get the following difference equation:
	$$
	\phi^{\prime}(t) = -\alpha \phi(t) + \delta(t),\ \phi(0)=0
	$$
	Solve the equation:
	\small{
    $$
	D_{KL}\left(p_{T^\prime}\mid\mid q_{T^\prime}\right)\leq\frac\beta2\int_0^{T^\prime}e^\alpha(t-{T^\prime})\mathbb{E}_{\theta_t}\left[\big\|\nabla L(\theta, \mathcal{W}_{\text{pre},I})-\nabla L(\theta, \mathcal{W}_{\text{pre},J})\big\|_2^2\right]\mathrm{d}t,\ \alpha=\frac{1}{\exp(8\beta S)}.
	$$
    }
	We first define that
	$$
	G(J) = \sqrt{\mathbb{E}_{\theta_t}\left[\int_0^{T^\prime} e^\alpha(t-{T^\prime})\left[\big\|\nabla L(\theta, \mathcal{W}_{\text{pre},I})-\nabla L(\theta, \mathcal{W}_{\text{pre},J})\big\|_2^2\right]\mathrm{d}t\right]}
	$$
	Let $J$ and $J^{\prime}$ be two neighboring collections, we first prove that $G(J)-G(J^{\prime})$ is small. Let $X_t=\nabla L(\theta, \mathcal{W}_{\text{pre},I})-\nabla L(\theta, \mathcal{W}_{\text{pre},J})$, $Y_t=\nabla L(\theta, \mathcal{W}_{\text{pre},J})-\nabla L(\theta, \mathcal{W}_{\text{pre},J^{\prime}})$. Then,
	$$
	\begin{aligned}
		G(J^{\prime})^{2} =&\mathbb{E}_{\theta_t}\left[\int_0^{T^\prime}e^{\alpha(t-{T^\prime})}\left\|X_t+Y_t\right\|_2^2\mathrm{~d}t\right]  \\
		=&\mathbb{E}_{\theta_t}\left[\int_0^{T^\prime}e^{\alpha(t-{T^\prime})}\left(X_t^\top X_t+Y_t^\top Y_t\right)\mathrm{d}t\right]+2\mathbb{E}_{\theta_t}\left[\int_0^{T^\prime}e^{\alpha(t-{T^\prime})}X_t^\top Y_t\mathrm{d}t\right] \\
		\leq&\mathbb{E}_{\theta_t}\left[\int_0^{T^\prime}e^{\alpha(t-{T^\prime})}\left(\|X_t\|_2^2+\|Y_t\|_2^2\right)\mathrm{~d}t\right]\\
		&+2\sqrt{\mathbb{E}_{\theta_t}\left[\int_{0}^{{T^\prime}}e^{\alpha(t-{T^\prime})}\left\|X_{t}\right\|_{2}^{2}\mathrm{d}t\right]}\sqrt{\mathbb{E}_{\theta_t}\left[\int_{0}^{{T^\prime}}e^{\alpha(t-{T^\prime})}\left\|Y_{t}\right\|_{2}^{2}\mathrm{d}t\right]} \\
		=&\left(\sqrt{\mathbb{E}_{\theta_t}\left[\int_0^{T^\prime}e^{\alpha(t-{T^\prime})}\left\|X_t\right\|_2^2\mathrm{d}t\right]}+\sqrt{\mathbb{E}_{\theta_t}\left[\int_0^{T^\prime}e^{\alpha(t-{T^\prime})}\left\|Y_t\right\|_2^2\mathrm{d}t\right]}\right)^2 \\
		=&\left(G(J)+\sqrt{\mathbb{E}_{\theta_t}\left[\int_0^{T^\prime}e^{\alpha(t-{T^\prime})}\left\|Y_t\right\|_2^2\mathrm{d}t\right]}\right)^2
	\end{aligned}
	$$
	For any fixed $J$ and $\theta_t$, using the Assumption \ref{ass: lipschitz-2} that $\left\|\nabla L(\theta_t, w_k)\right\| \leq \sigma,$ then
	$$
	\begin{aligned}
		\int_0^{T^\prime}e^{\alpha(t-{T^\prime})}\left\|Y_t\right\|_2^2\mathrm{d}t& \leq\int_0^{T^\prime}e^{\alpha(t-{T^\prime})}\frac{4\sigma^2}{{K^\prime}^2}\mathrm{d}t=\frac{4\sigma^2(1-e^{-\alpha {T^\prime}})}{{K^\prime}^2\alpha}
	\end{aligned}
	$$
	Then,
	$$
	\left|G(J)-G(J^{\prime})\right| \leq  \frac{2\sigma}{K^\prime} \sqrt{\frac{1-e^{-\alpha {T^\prime}}}{\alpha}}
	$$
	Applying lemma of concentration inequality and there are $K^\prime$ indexes in $J$ or $J^\prime$,
	$$
	\begin{aligned}
		P_J\left[G(J)-\mathbb{E}_J[G(J)]\geq \epsilon\right]\leq \exp\left(\frac{-2\epsilon^2}{K^\prime \frac{4\sigma^2(1-e^{-\alpha {T^\prime}})}{{K^\prime}^2\alpha}}\right)=\exp\left(\frac{-K^\prime\alpha\epsilon^2}{2\sigma^2(1-e^{-\alpha {T^\prime}})}\right) \\
	\end{aligned}
	$$
	We also have,
	$$
	\begin{aligned}
		P_J\left[G(J)^2 \geq (\mathbb{E}_J[G(J)]+\epsilon)^2\right]\leq \exp\left(\frac{-K^\prime\alpha\epsilon^2}{2\sigma^2(1-e^{-\alpha {T^\prime}})}\right)
	\end{aligned}
	$$
	then
	$$
	\begin{aligned}
		\mathbb{E}_{J}[G(J)]& =\mathbb{E}_J\sqrt{\mathbb{E}_{\theta_t}\left[\int_0^{T^\prime} e^\alpha(t-{T^\prime})\left[\left\|\nabla L(\theta_t, \mathcal{W}_{\text{pre},I})-\nabla L(\theta_t, \mathcal{W}_{\text{pre},J})\right\|_2^2\right]\mathrm{d}t\right]}  \\
		&\leq\sqrt{\mathbb{E}_{\theta_t}\left[\int_0^{T^\prime} e^\alpha(t-{T^\prime})\mathbb{E}_J\left[\left\|\nabla L(\theta_t, \mathcal{W}_{\text{pre},I})-\nabla L(\theta_t, \mathcal{W}_{\text{pre},J})\right\|_2^2\right]\mathrm{d}t\right]} \\
		&\leq\sqrt{\int_0^{T^\prime}e^{\alpha(t-{T^\prime})}\frac{4\sigma^2}{K^\prime}} \\
		&=\frac{2\sigma}{\sqrt{K^\prime}}\sqrt{\frac{1-e^{-\alpha {T^\prime}}}{\alpha}}
	\end{aligned}
	$$
	Let $\exp\left(\frac{-K^\prime\alpha\epsilon^2}{2\sigma^2(1-e^{-\alpha {T^\prime}})}\right)={\delta}$, then $\epsilon =\sqrt{\frac{ 4\sigma^2(1-e^{-\alpha {T^\prime}}) \log \frac{1}{\delta}}{2K^\prime\alpha}}$. It follows that with probability at least $1-\delta$
	$$
	G(J)^2 \leq \left(\frac{2\sigma}{\sqrt{K^\prime}}\sqrt{\frac{1-e^{-\alpha {T^\prime}}}{\alpha}}+\sqrt{\frac{ 4\sigma^2(1-e^{-\alpha {T^\prime}}) \log \frac{1}{\delta}}{2K^\prime\alpha}}\right)^2
	$$
	Then,
	$$
	\begin{aligned}
		&\mathbb{E}_{\theta_t}\left[\int_0^{T^\prime} e^\alpha(t-{T^\prime})\left[\left\|\nabla L(\theta_t, \mathcal{W}_{\text{pre},I})-\nabla L(\theta_t, \mathcal{W}_{\text{pre},J})\right\|_2^2\right]\mathrm{d}t\right] \\
		\leq& \left(\frac{2\sigma}{\sqrt{K^\prime}}\sqrt{\frac{1-e^{-\alpha {T^\prime}}}{\alpha}}+\sqrt{\frac{ 4\sigma^2(1-e^{-\alpha {T^\prime}}) \log \frac{1}{\delta}}{2K^\prime\alpha}}\right)^2 \\
		=& \frac{4\sigma^2}{K^\prime}\left(1+\sqrt{\log \frac{1}{\delta}}\right)^2 \frac{(1-e^{-\alpha {T^\prime}})}{\alpha} \\
		=& \frac{4\sigma^2}{K^\prime}\left(1+\sqrt{\log \frac{1}{\delta}}\right)^2 e^{8\beta S}\left(1-\exp\left(-\frac{ {T^\prime}}{e^{8\beta S}}\right)\right)
	\end{aligned}
	$$
	We bound the KL-divergence.
	{\small
		\begin{equation}
			\label{topic-dependent-KL}
			D_{\operatorname{KL}}\left(p_{T^\prime}\mid\mid q_{T^\prime}\right)\leq\left(1+\sqrt{\log \frac{1}{\delta}}\right)^2 \frac{2\sigma^2\beta e^{8\beta S}(1-\exp(-\frac{ {T^\prime}}{e^{8\beta S}}))}{K^\prime}=\left(1+\sqrt{\log \frac{1}{\delta}}\right)^2 \frac{4\sigma^2C(\frac{1}{N_{\text{param}}},{T^\prime})}{K^\prime}
		\end{equation}
	}
	where $C(\frac{1}{N_{\text{param}}},{T^\prime})=\frac{\beta}{2}e^{8\beta S}\left(1-e^{-\frac{ {T^\prime}}{\exp(8\beta S)}}\right)$.
	
	As introduced before, the prior distribution of model parameters $\theta$ is depending on the subset $E^J$, which is denoted by $\nu_J$ and the posterior distribution of $\theta$ is depending on $E^I$ denoted by $\mu$. Then Theorem \ref{app:ICL-gen} can be slightly changed.
	{\small 
		\begin{align}
			&\mathbb{E}_{\mu}\left[\frac{1}{T_p}\sum_{t=1}^{T_p}\mathbb{E}_{w}\mathbb{E}_{\text{prompt}_{t}}\left[\KL\big(\mathbb{P}(\cdot\mid \text{prompt}_{t},w)\parallel \mathbb{P}_{\theta}(\cdot\mid \text{prompt}_{t},w)\big)\right]\right]\nonumber\\
			\leq& \sqrt{\frac{C^2\cdot \tau_{\min}}{2(K-K^\prime)T_p \log 2}}\left(\KL(\mu\parallel \nu)+\log \frac{2}{\delta}\right)\\
			&+\mathbb{E}_\mu\left[\frac{1}{(K-K^\prime)T_p}\sum_{k=1}^{K-K^\prime}\sum_{t=1}^{T_p}\mathbb{E}_{\text{prompt}_{t}}\left[\KL\big(\mathbb{P}(\cdot\mid \text{prompt}_{t},w_k)\parallel \mathbb{P}_{\theta}(\cdot\mid \text{prompt}_{t},w_k)\big)\right]\right]\nonumber\\
			=&\mathcal{O}\left\{\sqrt{\frac{1}{(K-K^\prime)T_p}}\left(\KL(\mu\parallel \nu)+\log \frac{1}{\delta}\right)+ U(\mathcal{W}_{\text{pre}},K,N,N^\prime,T)\right\} \label{topic-depenent-1}
		\end{align}
	}
	
	Finally, with inequality \ref{topic-dependent-KL} and \ref{topic-depenent-1}, we get data-dependent, topic-dependent and optimization-dependent PAC-Bayesian generalization bound of the two-level expected loss, \emph{i.e.}, $L(\theta)$ is bounded by
	\begin{align}
		&\mathbb{E}_{\mu}\left[\frac{1}{T_p}\sum_{t=1}^{T_p}\mathbb{E}_{w}\mathbb{E}_{\text{prompt}_t}\left[\KL\big(\mathbb{P}(\cdot\mid \text{prompt}_t,w_k)\parallel \mathbb{P}_{\theta}(\cdot\mid \text{prompt}_t,w_k)\big)\right]\right]\nonumber\\
		&=\mathcal{O}\left\{\sqrt{\frac{1}{(K-K^\prime)T_p}}\left[\left(1+\sqrt{\log \frac{1}{\delta}}\right)^2 \frac{4\sigma^2C(\frac{1}{N_{\text{param}}},{T^\prime})}{K^\prime}+\log \frac{1}{\delta}\right]+ U(\mathcal{W}_{\text{pre}},K,N,N^\prime,T)\right\}\nonumber\\
        &=\mathcal{O}\left\{\sqrt{\frac{1}{(K-K^\prime)T_p}}\left(\frac{\sigma^2C(\frac{1}{N_{\text{param}}},{T^\prime})}{K^\prime}+\log \frac{1}{\delta}\right)+ U(\mathcal{W}_{\text{pre}},K,N,N^\prime,T)\right\}
	\end{align}
	where $C(\frac{1}{N_{\text{param}}},T^\prime)=\frac{\beta}{2}e^{8\beta S}\left(1-e^{-\frac{ T^\prime}{\exp(8\beta S)}}\right)$.
\end{proof}
\end{document}